\documentclass{article}
\usepackage{iclr2019,times}
\iclrfinalcopy 

\usepackage{graphicx}
\usepackage{algorithm, algorithmic}
\usepackage{amsmath, amsthm, amssymb}
\usepackage{relsize}
\usepackage{color}
\usepackage{subcaption}
\usepackage{url}
\usepackage{enumitem}
\usepackage[colorlinks=true,linkcolor=blue,urlcolor=blue,citecolor=blue]{hyperref} 
\usepackage{multirow}
\usepackage{colortbl}
\definecolor{LightCyan}{rgb}{0.88,1,1}

\newcommand{\ff}{f^*\!\!~}
\newcommand{\tsim}{\!\sim\!}

\newcommand{\dloss}{J_{\mathsmaller{D}}}
\newcommand{\gloss}{J_{\mathsmaller{G}}}
\newcommand\Tstrut{\rule{0pt}{2.0ex}}         % = `top' strut
   % = `bottom' strut
\newtheoremstyle{examplestyle}
  {1.1\topsep} % Space above
  {1.1\topsep} % Space below
  {\itshape} % Body font
  {} % Indent amount
  {\bfseries} % Theorem head font
  {.} % Punctuation after theorem head
  {0.5em} % Space after theorem head
  {} % Theorem head spec (can be left empty, meaning `normal')
\theoremstyle{examplestyle}

\newtheorem{theorem}{Theorem}%[subsection]
%[subsection]
\newtheorem{lemma}{Lemma}%[subsection]
\newtheorem*{remark}{Remark}

\newcommand{\rnum}[1]{\expandafter{\romannumeral #1\relax}}
\newcommand{\RNum}[1]{\uppercase\expandafter{\romannumeral #1\relax}}

\newcommand{\tpdv}[2]{\frac{\partial{#1}}{\partial{#2}}}
\newcommand{\pdv}[2]{\frac{\partial{#1}}{\partial{#2}}}

\newcommand{\pdata}{P_r}
\newcommand{\pgen}{P_g}

\newcommand{\sdata}{\bar{P_r}}
\newcommand{\sgen}{\bar{P_g}}

\newcommand{\E}{\mathbb E}
\newcommand{\mE}{\mathbb E}

\DeclareMathOperator*{\argmin}{arg\,min}

\newcommand{\jxloss}{[{\partial_{x}}{\dloss}]}

\title{Understanding the Effectiveness of Lipschitz-Continuity in Generative Adversarial Nets}

% \vspace{-100pt}
\author{
    \vspace{1.5pt}
	\hspace{-2.0cm}Zhiming Zhou\footnotemark[2]$\,\,^1\!\!$, Yuxuan Song\footnotemark[3]$\,\,^1\!\!$, Lantao Yu$^2\!\!$, Hongwei Wang$^1\!\!$, Jiadong Liang$^3\!\!$, Weinan Zhang$^1\!\!$, Zhihua Zhang\footnotemark[4]$\,\,^3\!\!$, Yong Yu\footnotemark[6]$\,\,^1$\\
	\hspace{1.8cm}$^1\!\!$~Shanghai Jiao Tong University, ~$^2\!\!$~Stanford University, ~$^3\!\!$~Peking University \\
	\hspace{-1.8cm}\texttt{\footnotemark[2]$\,\,$heyohai,\footnotemark[3]$\,\,$yuxuansong@apex.sjtu.edu.cn;\footnotemark[4]$\,\,$zhzhang@math.pku.edu.cn;\footnotemark[6]$\,\,$yyu@apex.sjtu.edu.cn}%\\
	\vspace{-12pt}
}

\begin{document}

\maketitle

\begin{abstract}
In this paper, we investigate the underlying factor that leads to failure and success in the training of GANs. We study the property of the optimal discriminative function and show that in many GANs, the gradient from the optimal discriminative function is not reliable, which turns out to be the fundamental cause of failure in the training of GANs. We further demonstrate that a well-defined distance metric does not necessarily guarantee the convergence of GANs. Finally, we prove in this paper that Lipschitz-continuity condition is a general solution to make the gradient of the optimal discriminative function reliable, and characterized the necessary condition where Lipschitz-continuity ensures the convergence, which leads to a broad family of valid GAN objectives under Lipschitz-continuity condition, where Wasserstein distance is one special case. We experiment with several new objectives, which are sound according to our theorems, and we found that, compared with Wasserstein distance, the outputs of the discriminator with new objectives are more stable and the final qualities of generated samples are also consistently higher than those produced by Wasserstein distance. 

\end{abstract} 

\section{Introduction}
Generative Adversarial Networks (GANs) \citep{gan}, as a new way of learning generative
models, have recently shown promising results in various challenging tasks. %, consisting of a generator network and a discriminator network, is a promising framework of generative model, where the generator learns to generate sample with the input from a given distribution (e.g.~Gaussian distribution) to a target distribution and the discriminator learns to distinguish generated samples from the real ones. %Such an adversarially trained discriminator provides guidance to the generator on the updating of fake samples, and 
%With properly defined objectives, the distribution of generated samples will converge to the target distribution with theoretical guarantees.
%
Although GANs are popular and widely-used \citep{image2image,photo_editing_with_gan,ppgn,cycle_gan,progressive_growing_gan}, they are notoriously hard to train \citep{gan_tutorial}. The underlying obstacles, though have been widely studied \citep{principled_methods,lucic2017gans,heusel2017gans,mescheder2017numerics,mescheder2018training,yadav2017stabilizing}, are still not fully understood. 

In this paper, we study the convergence of GANs from the perspective of the optimal discriminative function $\ff(x)$. %, which leads to a comprehensive understanding of GANs' training problems. 
%
%To be more specific, 
We show that in original GAN and its most variants, $\ff(x)$ is a function of densities at the current point $x$ but does not reflect any information about the densities/locations of other points in the real and fake distributions. Moreover, \cite{principled_methods} state that the supports of real and fake distributions are usually disjoint. In this paper, we argue that the fundamental cause of failure in training of GANs (Section~\ref{sec_tells_nothing}) stems from the combination of the above two facts. The generator uses $\nabla_{\!x} \ff(x)$ as the guidance for updating the generated samples, but $\nabla_{\!x} \ff(x)$ actually tells nothing about where $P_r$ is. As the result, the generator is not guaranteed to converge to $P_r$. %, i.e., $P_r(x)$ and $P_g(x)$ with $P_g$ and $P_r$ represent the real and fake distribution respectively,

Accordingly, \cite{wgan} proposed Wasserstein distance (in its dual form) as an alternative objective, which can properly measure the distance between two distributions no matter whether their supports are disjoint. However, as shown in Section~\ref{sec_w_also_suffer}, when the supports of the $P_g$ and $P_r$ are disjoint, the gradient of $\ff(x)$ from the dual form of Wasserstein distance given a compacted dual constraint also does not reflect any useful information about other points in $P_r$. %That is Wasserstein distance might also unable to guarantee the convergence. 
Based on this observation, we provide further investigation in Section~\ref{sec_not_enough} and argue that measuring the distance properly does not necessarily imply that the gradient is well-defined in terms of $\nabla_{\!x} \ff(x)$. 

In Section~\ref{sec_lip}, we show that incorporating Lipschitz-continuity condition in the objectives of GANs is a general solution to the above mentioned problem, and prove that for a \textbf{broad} family of discriminator objectives, Lipschitz-continuity condition can build strong connections between $P_g$ and $P_r$ through $\ff(x)$ such that $\nabla_{\!x} \ff(x)$ at each sample $x \tsim P_g$ will point towards some real sample $y \tsim P_r$. This guarantees that $P_g$ is moving towards $P_r$ at every step, i.e, the convergence of GANs is guaranteed. %\lantao{Does each fake sample's gradient pointing to a real sample necessarily implies final convergence?}

Finally, in Section~\ref{sec_overlapped}, we extend our discussion on $\ff(x)$ and $\nabla_{\!x} \ff(x)$ to the case where the supports of $P_g$ and $P_r$ are overlapped and show that the locality of $\ff(x)$ and  $\nabla_{\!x} \ff(x)$ in traditional GANs turns out to be an intrinsic cause to mode collapse. 

%Finally, in Section~\ref{sec_success}, we explain the reason of empirical success of traditional GANs under the circumstance that they have no convergence guarantee. 

\begin{table}[h]
\vspace{-0pt}
\caption{Comparison of different objectives in GAN models.}
\label{table1}
\centering
\vspace{-6pt}
\resizebox{0.99\textwidth}{!}{
\begin{tabular}{|c|c|c|c|c|}
\hline 
                    & $\phi$                & $\varphi$           & $\mathcal{F}$                  & $\ff(x)$ \\
\hline 
JS-Divergence       & $-\log(\sigma(-x))$  & $-\log(\sigma(x))$  & $\{f: \mathbb R^n \rightarrow \mathbb R \}$    & $\log\frac{P_r(x)}{P_g(x)}$ \\
\hline
Least Square        & $(x-\alpha)^2$        & $(x-\beta)^2$       & $\{f: \mathbb R^n \rightarrow \mathbb R \}$    & $\frac{\alpha \cdot P_g(x)+\beta \cdot P_r(x)}{P_g(x)+P_r(x)}$ \\ 
\hline 
Wasserstein-1 with Lip$_1$       & $x$                   & $-x$                & $\{f: \mathbb R^n \rightarrow \mathbb R, ~ \lVert f \rVert_{lip} \leq 1 \}$ & $\mathsmaller{N/A}$  \\
\hline 
$\mu$-Fisher IPM    & $x$                   & $-x$                & $\{f: \mathbb R^n \rightarrow \mathbb R, ~ \E_{x\sim \mu} \lVert  f(x) \rVert^2 \leq 1 \}$   & $\frac{1}{\mathcal{F}_{\mu}(P_r,P_g)}\frac{P_r(x)-P_g(x)}{\mu(x)}$ \\
\hline 
\end{tabular}
}
\vspace{-5pt}
\end{table}

\section{The Fundamental Cause of Failure in Training of GANs} \label{sec_failue_cause}

Typically, the objectives of GANs can be formulated as follows:
\begin{equation} \label{eq_gan_formulation}
\begin{aligned}
	&\min_{f \in \mathcal{F}} \ J_D \triangleq \E_{z \sim P_z} [ \phi(f(g(z))) ] + \E_{x \sim P_r} [ \varphi(f(x)) ], \\ 
	&\min_{g \in \mathcal{G}} \ J_G \triangleq \E_{z \sim P_z} [ \psi(f(g(z))) ],
\end{aligned}
\end{equation}
where $P_z$ is the source distribution of the generator (usually a Gaussian distribution) in $\mathbb R^m$ and $P_r$ is the target (real) distribution in $\mathbb R^n$. The generative function $g: \mathbb R^m\rightarrow \mathbb R^n$ learns to output samples that shares the same dimension as $P_r$, while the discriminative function $f: \mathbb R^n \rightarrow \mathbb R$ learns to output a score indicating the authenticity of a given sample. We denote the implicit distribution of the generated samples as $P_g$, i.e., $P_g = g(P_z)$. 

$\mathcal{F}$ and $\mathcal{G}$ denote discriminative and generative function spaces parameterized by neural networks, respectively; functions $\phi$, $\varphi$, $\psi$: $\mathbb R \rightarrow \mathbb R$ are loss metrics. We list the choices of $\mathcal{F}$, $\phi$ and $\varphi$ in some representative GAN models in Table~\ref{table1}, where we denote $f^* = \argmin_{f \in \mathcal{F}} J_D$. 

In these GANs, the gradient that the generator receives from the discriminator with respect to a generated sample $x \tsim P_g$ is
\vspace{-2pt}
\begin{equation} \label{eq_gen_grad}
\begin{aligned}
\nabla_{\!x} \gloss(x) = \nabla_{\!f(x)} \psi(f(x)) \cdot \nabla_{\!x} f(x).
\end{aligned} 
\end{equation}
In Eq.~(\ref{eq_gen_grad}), the first term $\nabla_{\!f(x)} \psi(f(x))$ is a step-related scalar that is out of the scope of our discussion in this paper; the second term $\nabla_{\!x} f(x)$ is a vector indicating the direction that the generator should follow for optimizing on sample $x$. %In addition, 

\subsection{$\nabla_{\!x} \ff(x)$ on $P_g$ does not reflect useful information about $P_r$} \label{sec_tells_nothing}

In this section, we will show that when the supports of $P_g$ and $P_r$ are disjoint, $\nabla_{\!x} \ff(x)$ in traditional GANs does not reflect any useful information about $P_r$, and $P_g$ is not guaranteed to converge to $P_r$. 
We argue that this is the fundamental cause of non-convergence and instability in traditional GANs.\footnote{In this paper, traditional GANs mainly refers to the original GAN and Least-Squares GAN, where $\ff(x)$ depends only on the densities $P_g(x)$ and $P_r(x)$. Broadly, it refers to all GANs where $\ff(x)$ does not reflect information about the locations of the other points in $P_g$ and $P_r$, such as the Fisher GAN.}

\begin{figure}[t]
\vspace{-5pt}
\begin{subfigure}{0.33\linewidth}
    \centering
    \includegraphics[width=0.90\columnwidth]{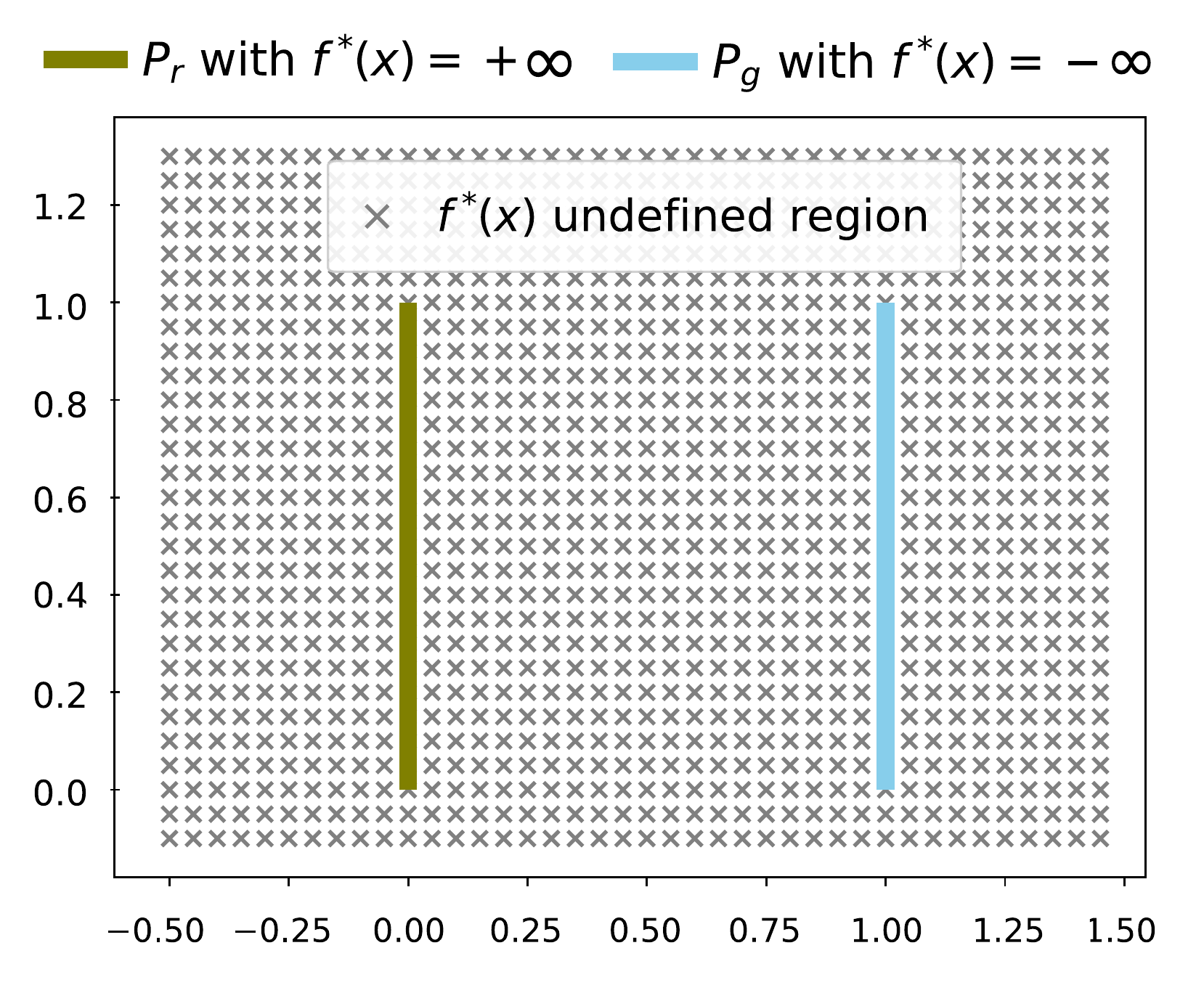}
    \vspace{-3pt}
    \caption{Original GAN}
    \label{fig1_originalgan}
\end{subfigure}	
\begin{subfigure}{0.33\linewidth}
    \centering
    \includegraphics[width=0.90\columnwidth]{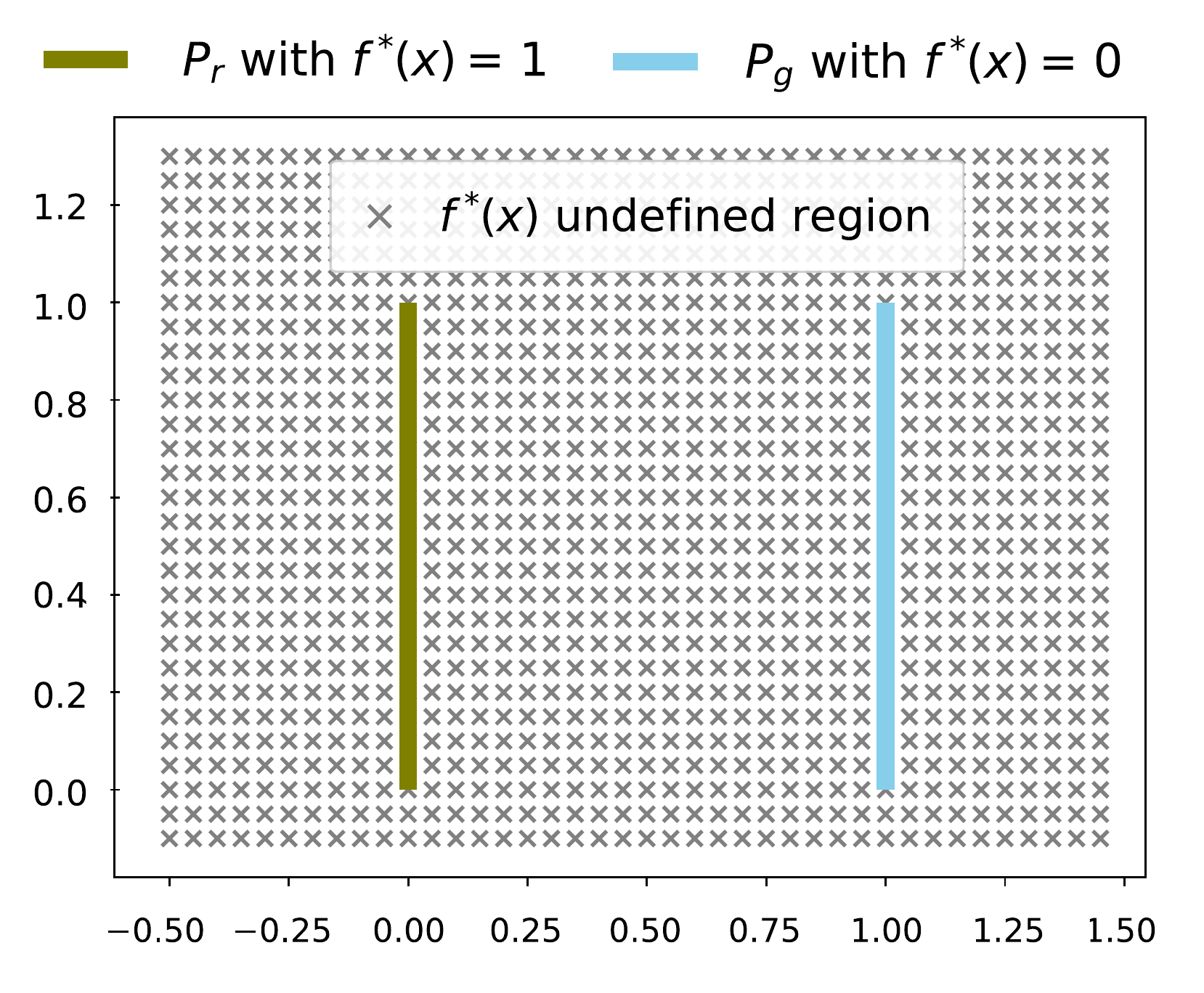}
    \vspace{-3pt}
    \caption{Wasserstein distance*}
    \label{fig1_wdistance}
\end{subfigure}
\begin{subfigure}{0.33\linewidth}
    \centering
    \includegraphics[width=0.90\columnwidth]{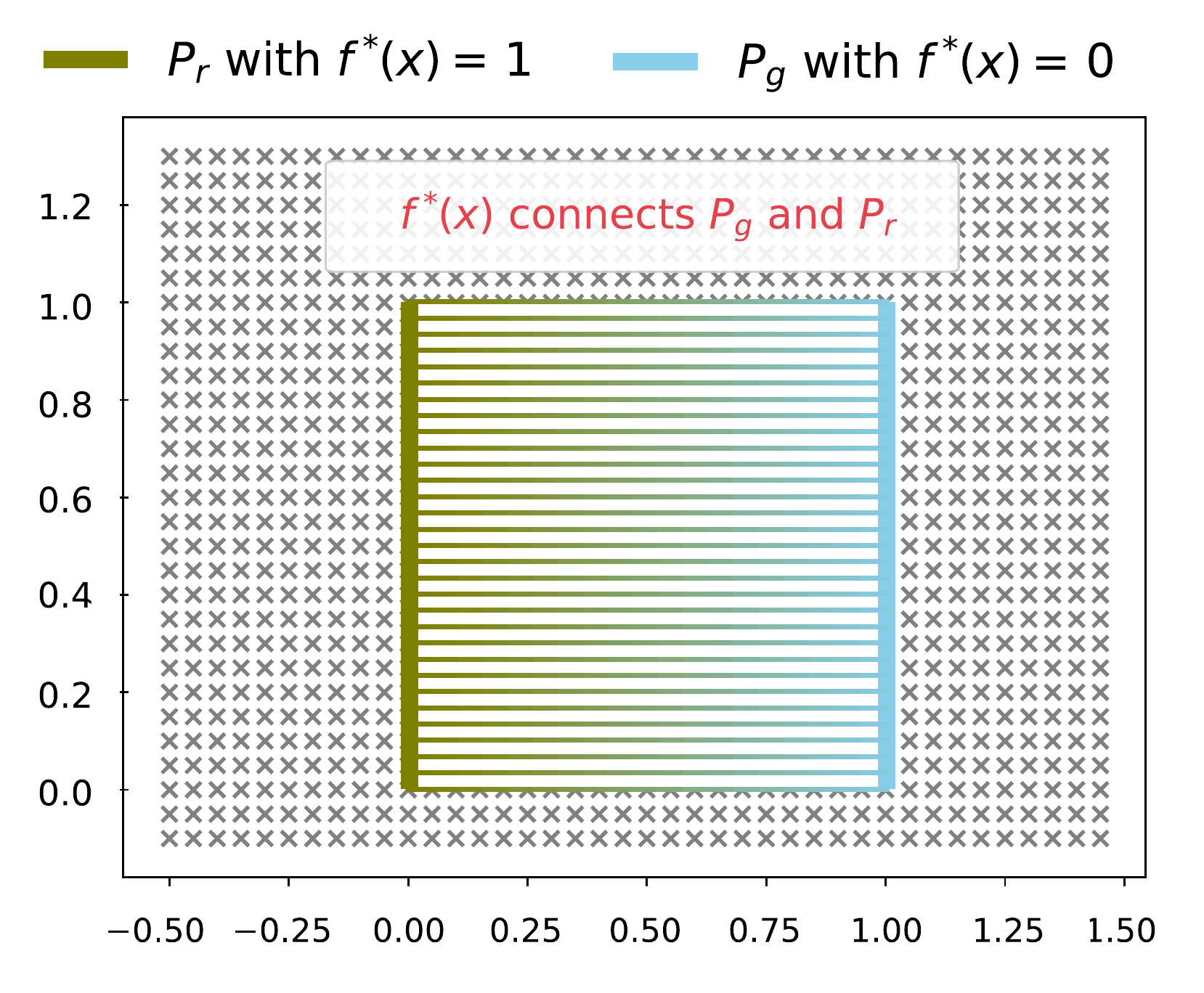}
    \vspace{-3pt}
    \caption{Lipschitz-continuity condition}
    \label{fig1_lipschitz}
\end{subfigure}
\vspace{-5pt}
\caption{
%\weinan{please make the legend font larger.}
In traditional GANs, $\ff(x)$ is only defined on the supports of $P_g$ and $P_r$ and its values do not reflect any information about the locations of other points in $P_g$ and $P_r$. Therefore, they have no guarantee on the convergence. Wasserstein distance in a compacted dual form suffers from the same problem. GAN objectives with Lipschitz-continuity constraint build connection between $P_g$ and $P_r$ where $\nabla_{\!x} \ff(x)$ pulls $P_g$ towards $P_r$.}
\label{figure1}
\vspace{-0pt}
\end{figure}

\subsubsection{The original GAN and Least-Squares GAN}

In the simplest case of Eq.~(\ref{eq_gan_formulation}), e.g., the original GAN \citep{gan} and Least-Squares GAN \citep{lsgan}, there is no restriction on $\mathcal{F}$. Therefore, $\ff(x)$ for each point $x$ is independent of other points, and we have
\begin{equation} \label{ff}
\begin{aligned}
\ff(x) = \argmin_{f(x) \in \mathbb R} \ P_g(x) \cdot \phi(f(x)) + P_r(x) \cdot \varphi(f(x)), \forall x.
\end{aligned} 
\end{equation}
Since we assume supports of $P_g$ and $P_r$ are disjoint, we further have 
\begin{equation} \label{ff2}
\begin{aligned}
\ff(x) = 
\begin{cases}
\argmin_{f(x) \in \mathbb R} \ P_g(x) \cdot \phi(f(x)), \forall x \tsim P_g, \\[4pt]
\argmin_{f(x) \in \mathbb R} \ P_r(x) \cdot \varphi(f(x)), \forall x \tsim P_r.
\end{cases}
\end{aligned} 
\end{equation}
For $x \sim P_g$, the value of $\ff(x)$ is irrelevant to $P_r$. Since $P_g$ and $P_r$ are disjoint\footnote{Here and later, ``two distributions are disjoint'' means that their supports  are disjoint.}, $\nabla_{\!x} \ff(x)$ for $x \tsim P_g$ also tells nothing about $P_r$. In consequence, the generator can hardly learn useful information and is not guaranteed to converge to the case where $P_g=P_r$.

\subsubsection{The Fisher GAN}

\cite{sobolevgan} prove that the optimal $\ff$ of $\mu$-Fisher IPM
%\weinan{what is IPM short for?}
$\mathcal{F}_{\mu}(P_r, P_g)$, the objective used in Fisher GAN \citep{fishergan}, has the following form
\begin{equation}
\ff(x)= \frac{1}{\mathcal{F}_{\mu}(P_r, P_g)}\frac{P_r(x)-P_g(x)}{\mu(x)}.
\end{equation}
%\weinan{what is $\mu(x)$?}
where $\mu$ is a distribution whose support covers $P_r$ and $P_g$. 
Given $P_r$ and $P_g$ are disjoint, we have
\begin{equation} \label{fisherff}
\begin{aligned}
\ff(x) = 
\begin{cases}
\frac{1}{\mathcal{F}_{\mu}(P_r, P_g)}\frac{-P_g(x)}{\mu(x)},& \forall x \tsim P_g; \\[4pt]
\frac{1}{\mathcal{F}_{\mu}(P_r, P_g)}\frac{P_r(x)}{\mu(x)},& \forall x \tsim P_r; \\[4pt]
0,&  \text{otherwise}.
\end{cases}
\end{aligned} 
\end{equation}
Note that the scalar $\frac{1}{\mathcal{F}_{\mu}(P_r, P_g)}$ is a constant. Eq.~(\ref{fisherff}) also defines $\ff(x)$ on $P_g$ and $P_r$ independently. Therefore, for $x \sim P_g$, $\ff(x)$ and $\nabla_{\!x} \ff(x)$ tell nothing about $P_r$.

\subsection{Connection to gradient vanishing}

The non-convergence problem of the original GAN has once been considered as the gradient vanishing problem. In \citep{gan}, it is addressed by using an alternative objective for the generator. However, it actually only changes the scalar $\nabla_{\!f(x)} \psi(f(x))$ while the aforementioned problem in $\nabla_{\!x} \ff(x)$ still exists. The least-squares GAN \citep{lsgan} is proposed to address the gradient vanishing problem, but it also focuses on $\nabla_{\!f(x)} \psi(f(x))$ basically. As we have discussed, the least-squares GAN also belongs to traditional GANs, which is not guaranteed to converge when $P_g$ and $P_r$ are disjoint. 

\cite{wgan} provided a new perspective on understanding the gradient vanishing problem. They argued that gradient vanishing stems from the ill-behaving of traditional metrics, i.e., the distance between $P_g$ and $P_r$ remains constant when they are disjoint. Wasserstein distance is thus proposed as an alternative metric, which can properly measure the distance between two distributions no matter they are disjoint or not. However, as we will show next, Wasserstein distance may also suffer from the same problem on $\nabla_{\!x} \ff(x)$, if a more compact dual form is used. % as traditional GANs

In summary, \textbf{gradient vanishing is about the scalar term $\nabla_{\!f(x)} \psi(f(x))$ in $\nabla_{\!x} \gloss(x)$ or the overall scale of $\nabla_{\!x} \gloss(x)$, and in this paper we investigate its direction $\nabla_{\!x} \ff(x)$, where the problem is more fundamental and challenging}. We will next show that Wasserstein distance, which can properly measure the distance for disjoint distributions, may also suffer from the same issue. 
 
\subsection{Wasserstein distance in compact dual form suffers from the same problem} \label{sec_w_also_suffer}

The ($1^{st}$-)Wasserstein distance is a distance function defined between two probability distributions:
\begin{equation}\label{eq_w_primal}
W_1(P_r,P_g) =  \inf_{\pi \in \Pi(P_r,P_g)} \, \E_{(x,y) \sim \pi} \, [d(x, y)],
\end{equation}
where $\Pi(P_r, P_g)$ denotes the collection of all probability measures with marginals $P_r$ and $P_g$ on the first and second factors, respectively. Since solving it in the primal form (Eq.~(\ref{eq_w_primal})) is burdensome, Wasserstein distance is usually solved in its dual form. Though Wasserstein distance in its dual form is usually written with Lipschitz-continuity condition, we here provide a more compact version. The proof of this dual form can be found in Appendix~\ref{app_dual_form}.
\begin{equation}
\begin{aligned}
W_1(P_r,P_g) = {\sup}_{\mathsmaller{f}} \,\, \E_{x \sim P_r} \, [f(x)] - \E_{x \sim P_g} \, [f(x)], \, \\
\emph{s.t.} \, f(x) - f(y) \leq d(x, y), \,\, \forall x \sim P_r, \forall y \sim P_g.
\end{aligned}
\label{eq_w_dual_form_1}
\end{equation}
We leave the detailed discussion on the relationship between Lipschitz-continuity condition and Wasserstein distance in Section~\ref{lip_stronger}. In Eq.~(\ref{eq_w_dual_form_1}), we replace the strong Lipschitz-continuity condition with a looser constraint. Note that Eq.~(\ref{eq_w_primal}) and Eq.~(\ref{eq_w_dual_form_1}) are still equivalent.
In the next, we will demonstrate that a well-defined distance metric, e.g., Wasserstein distance in this compacted dual form, may also suffer from the same problem in $\nabla_{\!x} \ff(x)$ and does not necessarily ensure the convergence of GANs. % which can also properly measure the distance between disjoint distributions, 

We now study the optimal discriminative function $\ff(x)$ of Wasserstein distance in this dual form. Since there is generally no closed-form solution for $\ff(x)$ in Eq.~(\ref{eq_w_dual_form_1}), we use an illustrative example for demonstration here, but the conclusion is general. 
Let $Z \tsim U[0,1]$ be a uniform variable on interval $[0, 1]$, $P_g$ be the distribution of $(1, Z) \in \mathbb{R}^2$, and $P_r$ be the distribution of $(0, Z) \in \mathbb{R}^2$, as shown in Figure \ref{figure1}. According to Eq.~(\ref{eq_w_dual_form_1}), one of the optimal $\ff$ is as follows
\begin{equation}
\ff(x)=
\begin{cases}
\begin{aligned}
&0 \,\,\,\,\,\, &&\forall x \sim P_g, \\
&1 \,\,\,\,\,\, &&\forall x \sim P_r. \\
\end{aligned}
\end{cases}
\end{equation}
Though having the constraint ``$f(x) - f(y) \leq d(x, y), \, \forall x \sim P_r, \forall y \sim P_g$'', Wasserstein distance in this dual form also only defines the value of $\ff(x)$ on the supports of $P_g$ and $P_r$, and the values of $\ff(x)$ on $P_g$ contain no useful information about the location of $P_r$. Therefore, if $P_g$ and $P_r$ are disjoint, $\nabla_{\!x} \ff(x)$ hardly provides useful information to the generator about how to change $P_g$ into $P_r$ and the generator is not guaranteed to converge to the case $P_g=P_r$.
It is worth noticing that %in Wasserstein distance, 
the value of $\ff(x)$ on the supports of $P_g$ and $P_r$ is sufficient to evaluate the Wasserstein distance. % (see Figure \ref{fig1_wdistance}).

\subsection{A well-defined metric does not necessarily guarantee the convergence} \label{sec_not_enough}

The objectives of GANs are usually defined as (or proved equivalent to) minimizing a distance metric between $P_g$ and $P_r$, which implies that $P_g=P_r$ is the unique global optimum and is in accordance with the final goal of the generative model, i.e., estimating the distribution of real samples. However, in this section, we emphasize that a well-defined (e.g., smooth, continuous, with $P_g=P_r$ being the optimum) distance metric does not necessarily guarantee the convergence of GANs. 

Given an objective is convex with respect to $P_g$ and holds the property that $P_g=P_r$ is the unique optimum, the convergence of GANs is guaranteed if only it directly optimizes $P_g$. 
However, directly optimizing the distribution $P_g$ is usually unfeasible and the practice is optimizing the generated samples according to $\nabla_{\!x} \ff(x)$. As shown in previous sections, when $P_g$ and $P_r$ are disjoint, $\nabla_{\!x} \ff(x)$, the direction that the generator follows for updating the generated samples, tells nothing about how to pull $P_g$ to $P_r$. Therefore, the convergence of GANs are not necessarily guaranteed. 

It is worth noticing that $\nabla_{\!x} \ff(x)$ indeed indicates the direction of decreasing the objective in terms of the current $\ff(x)$, but updating $x$ to make the value of $\ff(x)$ increase/decrease does not necessarily imply that $P_g$ is getting closer to $P_r$. Recall that in the failure case of Wasserstein distance dual form in the above section, the values of $\ff(x)$ on $P_g$ is $0$, while the values of $\ff(x)$ around $P_g$ is undefined. % which implies it is meaningless to update $x$ to increase or decrease $\ff(x)$.

In conclusion, a smooth distance metric satisfying $P_g=P_r$ is the optimum does not guarantee the convergence and sample updating according to $\nabla_{\!x} \ff(x)$ does not necessarily decrease the distance between $P_g$ and $P_r$. Therefore, if we use $\nabla_{\!x} \ff(x)$ for the update of the generator\footnote{Alternative strategies actually exist, for example, \cite{swgan} use the optimal transport plan \citep{largescaleotmap} between $P_g$ and $P_r$ to update the generator. %However, their results tend to be blurry.
}, it is necessary to make $\nabla_{\!x} \ff(x)$ aware of how to pull $P_g$ to $P_r$. 
In the next section, we will introduce the Lipschitz-continuity condition as a general solution for making $\nabla_{\!x} \ff(x)$ well-behaving and guaranteeing the convergence of $\nabla_{\!x} \ff(x)$-based GANs.

\section{A General Solution: Lipschitz-continuity Condition} \label{sec_lip}

Lipschitz-continuity condition becomes popular in GANs recently as part of the discriminator's objective %The first attempt of introducing Lipschitz-continuity condition in GANs is the Wasserstein GAN \citep{wgan}, and later Lipschitz-continuity condition is extended to other objectives such as 
\citep{wgan,kodali2017convergence,fedus2017many,sngan}, achieving great success.
%Lipschitz-continuity constraint becomes popular in GANs recently. The first attempt of introducing Lipschitz-continuity condition in GANs is the Wasserstein GAN \citep{wgan}, and later Lipschitz-continuity condition is extended to other objectives such as \citep{kodali2017convergence,fedus2017many,sngan}, achieving great success.
%
In this section, we explain the significance of Lipschitz-continuity condition when introduced into the objective of the discriminator. 

In a nutshell, under a board family of GAN objectives, Lipschitz-continuity condition is able to connect $P_g$ and $P_r$ through $\ff(x)$ such that when $P_r$ and $P_g$ are disjoint, $\nabla_{\!x} \ff(x)$ for each generated sample $x \tsim P_g$ will point towards some real sample $y \tsim P_r$, which guarantees the trend that $P_g$ is getting closer to $P_r$ at every step. More detailed results are presented as follows. 
%\weinan{why `point towards some real sample' is good for training a generator? Although you might think it is straightforward, I think it is still necessary to explicitly state the reason.}\zhiming{added explanation.}
%\footnote{In this paper, Lipschitz-continuity condition by default refer to the Lipschitz-continuity condition with respect to Euclidean distance.} %which makes $\nabla_{\!x} \ff(x)$ meaningful for the updating of the generator. if $\ff(x)$ is differentiable, 

\subsection{The main result} \label{sec_main_theorem}

A function $f: X \rightarrow Y$ is $k$-Lipschitz continuous if it satisfies the following property:
\begin{equation} \label{eq_lip}
\begin{aligned}
d_Y(f(x), f(y)) \leq k \cdot d_X(x, y), \forall \; x, y \in X,
\end{aligned} 
\end{equation}
where $d_X$ and $d_Y$ are distances metrics in domains $X$ and $Y$, respectively. The smallest constant $k$ is called the Lipschitz constant of function $f$. In this paper (and most GAN papers), $d_X$ and $d_Y$ are defined as Euclidean distance.\footnote{Actually, we argue that the distance metrics must be Euclidean distance in GANs. See Appendix~\ref{lip_direction}.} We let $\lVert y \!-\! x \rVert$ denote Euclidean distance. 
%instead of $d(x,y)$. 

As proved by \cite{wgangp}, when the Lipschitz-continuity condition is combined with Wasserstein distance, we have the following property if $\ff(x)$ is differentiable, then
\begin{equation} \label{lipgrad}
\begin{aligned}
{\rm Pr} \left(\nabla_{\!x} \ff(x_t) = \frac{y-x}{\lVert y - x \rVert}\right) = 1, \ {\rm for} \ (x, y)\sim \pi^*,
\end{aligned}
\end{equation}
where $x_t=t x + (1-t) y$, $0 \leq t \leq 1$, and $\pi^*$ is the optimal $\pi$ in Eq.~(\ref{eq_w_primal}). 
The meaning of this proposition is two-fold: (\rnum{1}) for each $x \tsim P_g$, there exists a $y\tsim P_r$ such that $\nabla_{\!x} \ff(x_t)=\frac{y-x}{\lVert y - x \rVert}$ for all linear interpolations $x_t$ between $x$ and $y$; (\rnum{2}) these $(x, y)$ pairs match the optimal coupling $\pi^*$. 

Next we introduce our theorem on the Lipschitz-continuity condition. It turns out when combining the Lipschitz-continuity condition with generalized objectives, Property-(\rnum{1}) still holds and Property-(\rnum{2}) is naturally dismissed as it is now not restricted to Wasserstein distance. 

\vspace{4pt}
\begin{theorem} \label{theorem} Let $\dloss\triangleq\E_{x \sim P_g} [ \phi(f(x)) ] + \E_{x \sim P_r} [ \varphi(f(x)) ]$ and $\partial_x\dloss$ denotes $P_g(x) \phi(f(x)) + P_r(x) \varphi(f(x))$. Let $\bar{P_r}$ and $\bar{P_g}$ denote the supports of $P_r$ and $P_g$, respectively. Assume $\ff = \argmin_f~\![\dloss + \lambda \cdot k(f)^2]$, where $k(f)$ is the Lipschitz constant of $f$. If $\phi(x)$ and $\varphi(x)$ in $\dloss$ satisfy
\vspace{-6pt}
\begin{align}
\begin{cases} \label{eq_solvable}
\phi'(x) > 0, \phi''(x) \geq 0, \\[3pt]
\varphi'(x) < 0, \varphi''(x) \geq 0, \\[3pt]
\exists \, a, \, \phi'(a) + \varphi'(a) = 0,
\vspace{-3pt}
\end{cases}
\end{align}
%\lantao{The third line in Eq.~12, should be $\phi'(a) + \varphi'(a)$?}
then we have that
\vspace{-3pt}
\begin{enumerate}[label=(\alph*),leftmargin=18pt]
\setlength\itemsep{0.0em}
\item $\forall x \in \bar{P_g} \cup \bar{P_r}$, $\exists y_{\neq x} \in \bar{P_g}\cup \bar{P_r}$ such that $|\ff(y)\!-\!\ff(x)|=k(\ff) \cdot \lVert x \!-\! y \rVert $ or $\nabla_{\!\ff(x)} \partial_x\dloss = 0$;
\item $\forall x \in \bar{P_g} \cup \bar{P_r} - \bar{P_g} \cap \bar{P_r}$, $\exists y_{\neq x} \in \bar{P_g}\cup \bar{P_r}$ such that $|\ff(y)\!-\!\ff(x)|=k(\ff) \cdot \lVert x \!-\! y \rVert $;
\item if $\bar{P_g}=\bar{P_r}$ and $ P_g\neq P_r$, then $\exists x$, $\exists y_{\neq x}$ such that $|\ff(y)\!-\!\ff(x)|=k(\ff) \cdot \lVert x \!-\! y \rVert $;
\item the only Nash Equilibrium of $\dloss + \lambda \cdot k(f)^2$ is reached when $ P_g= P_r$, where $k(f)=0$.
\end{enumerate}
\end{theorem}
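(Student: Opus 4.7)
The plan is to deduce all four conclusions from a single first-order KKT-type optimality condition on $\ff$ for the penalized problem $\min_f \dloss + \lambda\, k(f)^2$, combined with the sign and convexity hypotheses on $\phi$ and $\varphi$.

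For (a), I would reformulate the problem as $\min_{f,K}[\dloss(f) + \lambda K^2]$ subject to $|f(x) - f(y)| \leq K\,\|x-y\|$ for every pair, and consider a localized perturbation of $\ff$ (a small smooth bump) centred at some $x \in \bar{P_g} \cup \bar{P_r}$. The first-order change in $\dloss$ is proportional to $\nabla_{\!\ff(x)}\partial_x \dloss$, and only Lipschitz constraints involving $x$ can be affected by such a perturbation. If no $y \neq x$ in $\bar{P_g}\cup\bar{P_r}$ saturates $|\ff(y)-\ff(x)| = k(\ff)\|x-y\|$, the perturbation can be taken small enough to remain feasible with the same $K$, so optimality forces $\nabla_{\!\ff(x)}\partial_x \dloss = 0$. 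Equivalently, the KKT multipliers on non-saturated pair constraints vanish and the stationary condition at $x$ reduces exactly to this pointwise vanishing.

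Part (b) follows immediately: at $x$ belonging to only one of the supports, the derivative reduces to $P_g(x)\phi'(\ff(x))$ or $P_r(x)\varphi'(\ff(x))$, each of which is nonzero by the strict sign conditions on $\phi',\varphi'$ together with positivity of the density on its own support; the vanishing alternative is thus ruled out, so a tight partner must exist. For (c), assume $\bar{P_g}=\bar{P_r}$, $P_g \neq P_r$, and, for contradiction, that no tight pair exists; then (a) gives $P_g(x)\phi'(\ff(x)) + P_r(x)\varphi'(\ff(x)) = 0$ on the common support. In the affine case $\phi' \equiv c$, $\varphi' \equiv -c$ this collapses to $c(P_g(x)-P_r(x)) = 0$, directly contradicting $P_g \neq P_r$. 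Otherwise, the map $f \mapsto -\varphi'(f)/\phi'(f)$ is strictly monotone on at least one sub-interval (by $\phi'',\varphi'' \geq 0$ with one of them not identically zero, combined with the given signs), so the relation $-\varphi'(\ff(x))/\phi'(\ff(x)) = P_g(x)/P_r(x)$ with a non-constant density ratio forces $\ff$ to be non-constant; hence $k(\ff) > 0$, and on the (assumed) compact support the Lipschitz supremum is attained, producing a tight pair and the contradiction.

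For (d), I would study $V^{\ast}(P_g) := \min_f [\dloss(f,P_g) + \lambda k(f)^2]$, which the generator maximizes. Plugging in $f \equiv a$ gives the uniform bound $V^{\ast}(P_g) \leq \phi(a)+\varphi(a)$; at $P_g = P_r$ the bound is tight because $\phi'(a)+\varphi'(a)=0$ combined with convexity makes $a$ the pointwise minimizer, with $k = 0$. If $P_g \neq P_r$, then $\phi'(a)(P_g(x)-P_r(x))$ is nonzero at some $x$, so a small smooth bump at $x$ decreases $\dloss$ to first order while only raising the Lipschitz penalty to second order, giving $V^{\ast}(P_g) < \phi(a)+\varphi(a)$. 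Hence the saddle value is attained only at $P_g = P_r$, forcing $\ff$ constant and $k(\ff) = 0$. The main obstacle is the rigor of the localized perturbation in (a); I would phrase it via the convex subdifferential of $f \mapsto k(f)$ at $\ff$, which in the perturbation direction collapses to the convex hull of normalized signed vectors $\mathrm{sign}(\ff(x)-\ff(y))(y-x)/\|y-x\|$ over saturating partners $y$ and becomes zero exactly when no such $y$ exists, making the pointwise KKT condition rigorously equivalent to $\nabla_{\!\ff(x)}\partial_x \dloss = 0$.
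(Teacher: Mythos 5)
Your treatment of (a) and (b) matches the paper's: a localized perturbation argument (the paper's footnote ``adjust the value of $\ff(x)$ at $x$ according to the non-zero gradient'') gives the dichotomy in (a), and the strict sign conditions $\phi'>0$, $\varphi'<0$ rule out the vanishing alternative on $\bar{P_g}\cup\bar{P_r}\setminus(\bar{P_g}\cap\bar{P_r})$, which is (b). Your route to ``$\ff$ non-constant'' inside (c) via monotonicity of $-\varphi'/\phi'$ and a non-constant density ratio is the contrapositive of the paper's Lemma~\ref{lemma_pair_f}, so that step is also equivalent in content.

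There is a genuine gap in (c): you conclude from $k(\ff)>0$ that ``on the (assumed) compact support the Lipschitz supremum is attained, producing a tight pair.'' This is false in general. Even for a smooth $f$ on a compact set, $k(f)=\sup_{x\neq y}|f(x)-f(y)|/\lVert x-y\rVert$ equals $\sup_x\lVert\nabla f(x)\rVert$, a limit along the diagonal that need not be achieved by any pair with $x\neq y$ (e.g.\ $f(x)=x^2$ on $[0,1]$). The paper's logic at this point goes the other way around: assuming no pair is tight, one can strictly decrease $k(f)$ (padding the penalty term) without hurting $\dloss$, contradicting optimality. That step is still informal in the paper, but it is the correct direction of argument; ``non-constant $\Rightarrow$ tight pair'' does not follow.

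For (d) you use a value-function argument: $V^*(P_g)\le\phi(a)+\varphi(a)$ with equality iff $P_g=P_r$ and $k=0$. This is a clean observation and does establish that $P_g=P_r$ is the unique global maximizer of $V^*$, but that is not the same as uniqueness of Nash equilibrium over the generator's actual strategy space. A Nash point only requires the generator to be \emph{stationary} given $\ff$, and the generator updates samples through $\nabla_{\!x}\ff(x)$ rather than optimizing $V^*$ over densities. The paper's proof of (d) closes exactly this gap: at any Nash the generator's gradient $\nabla_{\!x}\ff$ must vanish on $\bar{P_g}\cup\bar{P_r}$; combined with Lemma~\ref{lem_k_neq_zero}, part (a), and Theorem~\ref{lemma} (which gives $\lVert\nabla_{\!x}\ff(x)\rVert_2=k(\ff)$ at any saturating point), this forces $k(\ff)=0$, and then Lemma~\ref{lemma_pair_f} gives $P_g=P_r$. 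To make your version rigorous you would need to additionally argue that the min-max is convex--concave in the relevant variables so that local Nash coincides with the global saddle, which your write-up does not do.
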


The above theorem states that when the Lipschitz-continuity condition is combined with an objective that satisfies Eq.~(\ref{eq_solvable}), then: (a) for the optimal discriminative function $\ff(x)$ at any point $x\in \bar{P_g} \cup \bar{P_r}$, it either is bounded by the Lipschitz constant or $\partial_x\dloss$ holds a zero-gradient with respect to $\ff(x)$; (b) for any point that only appears in $\bar{P_g}$ or $\bar{P_r}$, there must exist a point that bounds this point in terms of $|\ff(y)\!-\!\ff(x)| = k(\ff) \cdot \lVert x\! -\! y \rVert $, because for these points, $\partial_x\dloss$ will never get zero gradient with respect to $\ff(x)$ as we prove in the Appendix~\ref{app_proof}; (c) when $P_g$ and $P_r$ are totally overlapped, as long as $P_g$ still not converges to $P_r$, there exists at least one pair $(x, y)$ that bounds each other; %\weinan{what is `bounds each other'? shall we say Lipschitz-bounds?}\zhiming{bounded means $|\ff(y)\!-\!\ff(x)| = k(\ff) \cdot \lVert x\! -\! y \rVert$, it has been mentioned in (b), but maybe we should make it clear. I just think it might be too long} 
(d) the only Nash Equilibrium among $P_g$ and $\ff(x)$ under this objective is ``$P_g=P_r$ with $k(\ff)=0$''. The formal proof is in Appendix~\ref{app_proof}. 

Wasserstein distance, i.e., $\phi(x)=\varphi(-x)=x$ is one instance that satisfies Eq.~(\ref{eq_solvable}); and it is a very special case, which holds $\phi''(x)= 0$ and $\varphi''(x)= 0$. Eq.~(\ref{eq_solvable}) is actually quite general and there exists many other settings, e.g., $\phi(x)=\varphi(-x)=-\log(\sigma(-x))$, $\phi(x)=\varphi(-x)=x+\sqrt{x^2+1}$ and $\phi(x)=\varphi(-x)=\exp(x)$. Generally, it is feasible to set $\phi(x)=\varphi(-x)$. As such, to build a new objective, one only needs to find a function that is increasing and has non-decreasing derivative. See Figure~\ref{fig_function_curve}. In addition, all linear combinations of feasible $(\phi, \varphi)$ pairs also lie in the family. %We empirically study these objectives in Section~\ref{sec_exp}.

It is worth noting that $k(f)$ is also optimized here and it is actually necessary for Property-(c) and Property-(d). This is the key difference when the Lipschitz-continuity condition is extended to general objectives. The underlying reason for the need of also minimizing $k(f)$ comes from the existence of case ``$\nabla_{\!\ff(x)}\partial_x\dloss=0$ for  $P_g(x)\neq P_r(x)$'', which does not hold when the objective is Wasserstein distance. Minimizing $k(f)$ guarantees that the only Nash Equilibrium is ``$P_g=P_r$ with $k(\ff)=0$''. On the other hand, if $k(f)$ is not minimized towards zero, Wasserstein distance dual form based GANs are not guaranteed to have zero gradient $\nabla_{\!x}\ff(x)$ at the convergence state $P_g=P_r$. It indicates that minimizing $k(f)$ is also beneficial to the Wasserstein GAN~\citep{wgan}.

\subsection{Lipschitz-continuity connects $P_g$ and $P_r$ through $\ff(x)$} \label{sec_connections}

From Theorem \ref{theorem}, we know that for any point $x$, as long as $\partial_x\dloss$ does not hold a zero gradient with respect to $\ff(x)$, $\ff(x)$ %\weinan{what does `it' means here? the gradient?}\zhiming{it means $\ff(x)$} 
must be bounded by another point $y$ such that $|\ff(y)\!-\!\ff(x)|=k(\ff) \cdot \lVert x \!-\! y \rVert $. We here further clarify that, when there is a bounding relationship, it must involve both real sample(s) and fake sample(s). More formally, we have

\begin{theorem} \label{corollary}
%\weinan{the index of this corollary is weird}\zhiming{do you mean the 3.2.1? could you help handle it, this my best trial. it looks always weird.}
If $\ff = \argmin_{f} [\dloss + \lambda \cdot k(f)^2]$, then
\vspace{-3pt}
\begin{itemize}[leftmargin=20pt]
\item $\forall x \in \bar{P_g}$, if $\;\exists z_{\neq x} \in \bar{P_g}\cup \bar{P_r}$ such that $|\ff(x)\!-\!\ff(z)|=k(\ff)  \cdot \lVert x \!-\! z \rVert $, then $\exists y_{\neq x} \in \bar{P_r}$ such that $\ff(y)\!-\!\ff(x)=k(\ff) \cdot \lVert x \!-\! y \rVert $, 
\item $\forall y \in \bar{P_r}$, if $\;\exists z_{\neq y} \in \bar{P_g}\cup \bar{P_r}$ such that $|\ff(z)\!-\!\ff(y)|=k(\ff)  \cdot \lVert z \!-\! y \rVert $, then $\exists x_{\neq y} \in \bar{P_g}$ such that $\ff(y)\!-\!\ff(x)=k(\ff)  \cdot \lVert x \!-\! y \rVert $. 
\end{itemize}
\end{theorem}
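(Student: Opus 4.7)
The plan is to proceed by contradiction, exploiting the optimality of $\ff$ under $\dloss+\lambda k(f)^2$ and constructing a finite tight-upward-edge chain from $x$ to the real support. I will sketch the first bullet; the second follows from the symmetric argument that swaps $\phi\leftrightarrow\varphi$, $\sgen\leftrightarrow\sdata$, and the sign of the tight Lipschitz relation. First I would argue that the tight bound at $x$ guaranteed by the hypothesis can be taken with $\ff(z)>\ff(x)$: if every tight pair at $x$ instead satisfied $\ff(x)>\ff(z)$, a small downward perturbation of $\ff$ at $x$ would relax all such constraints, leave every other constraint slack by continuity, and not raise $k(\ff)$; since $\phi'(\ff(x))>0$ and $P_g(x)>0$, this would strictly lower $\dloss+\lambda k(f)^2$, contradicting the optimality of $\ff$.

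Iterating this observation, define $T\subseteq\sgen\cup\sdata$ as the set of points reachable from $x$ through finite chains $x=w_0,w_1,\ldots,w_m=w$ with $\ff(w_{i+1})-\ff(w_i)=k(\ff)\lVert w_{i+1}-w_i\rVert$ at every step. The central claim is $T\cap\sdata\neq\emptyset$. Suppose otherwise, so $T\subseteq\sgen$. Let $\tilde f$ equal $\ff-\epsilon$ on $T$ and $\ff$ on $(\sgen\cup\sdata)\setminus T$, extended to a $k(\ff)$-Lipschitz function on $\mathbb R^n$ via a Kirszbraun-type extension. By construction every tight upward edge leaving $T$ already lands in $T$, so each cross-boundary pair $(w,w')$ with $w\in T$, $w'\notin T$, and $\ff(w')>\ff(w)$ has strictly positive Lipschitz slack; cross-boundary pairs with $\ff(w')\leq\ff(w)$ are only relaxed by lowering $\ff$ on $T$; and pairs fully inside or fully outside $T$ are unaffected. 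For sufficiently small $\epsilon>0$ the prescribed values remain $k(\ff)$-Lipschitz and hence $k(\tilde f)\leq k(\ff)$. Because $T\subseteq\sgen$ and $\phi'>0$, $\dloss$ strictly decreases, giving $\dloss(\tilde f)+\lambda k(\tilde f)^2<\dloss(\ff)+\lambda k(\ff)^2$ and contradicting optimality.

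Hence some $y\in T\cap\sdata$ exists, together with a finite chain $x=w_0,w_1,\ldots,w_m=y$ satisfying $\ff(w_{i+1})-\ff(w_i)=k(\ff)\lVert w_{i+1}-w_i\rVert$ at every step. Telescoping and the triangle inequality yield
\begin{equation}
\ff(y)-\ff(x)=k(\ff)\sum_{i=0}^{m-1}\lVert w_{i+1}-w_i\rVert\;\geq\;k(\ff)\lVert x-y\rVert,
\end{equation}
while the global Lipschitz bound on $\ff$ supplies the reverse inequality, so $\ff(y)-\ff(x)=k(\ff)\lVert x-y\rVert$, which is the desired identity. The main technical obstacle lies in the perturbation step under continuous supports: one must ensure $T$ is measurable, secure a uniform positive lower bound on the cross-boundary Lipschitz slack so that a single $\epsilon>0$ works globally, and exclude the possibility that $T$ contains real samples only as accumulation points of infinite chains, in which case the finite telescoping must be replaced by a limiting argument. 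Under standard compactness of supports and continuity of $\ff$ these are routine, but in full generality a Zorn-type or measure-theoretic refinement is needed.
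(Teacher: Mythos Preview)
The paper never supplies a proof of this theorem; the only justification offered is the one-sentence intuition in Section~3.2 that ``samples from the same distribution \ldots\ will not bound each other,'' together with the remark about chains of bounding relationships. Your proposal is therefore not competing with a paper proof but rather providing one, and the overall strategy---a downward perturbation on the set of points tied to $x$ by upward tight edges, followed by the telescoping observation that a tight chain collapses to a single tight edge via the triangle inequality and the global Lipschitz bound---is precisely the formalization the paper's intuition invites. The second step (the set $T$) and the telescoping conclusion are sound: the contradiction hypothesis $T\cap\sdata=\emptyset$ forces $T\subseteq\sgen\setminus\sdata$, so only the $\phi$-term acts on $T$ and the strict decrease goes through.

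There is, however, a genuine gap in your first step. You argue that $x$ must carry a tight \emph{upward} edge by perturbing $\ff(x)$ downward and invoking $\phi'(\ff(x))>0$, $P_g(x)>0$ to obtain a strict decrease of $\dloss$. This ignores the possible term $P_r(x)\varphi'(\ff(x))<0$ when $x\in\sgen\cap\sdata$: for such $x$ the pointwise derivative $P_g(x)\phi'(\ff(x))+P_r(x)\varphi'(\ff(x))$ can be zero or negative, and lowering $\ff(x)$ need not improve the objective. In fact the statement as literally written appears to fail in exactly this situation: with two atoms $a,b$, $P_g(a)=P_r(b)=0.9$, $P_g(b)=P_r(a)=0.1$, and $\phi(t)=-\varphi(t)=t$, the unique tight relation at the optimum is $\ff(b)-\ff(a)=k(\ff)\lVert a-b\rVert$, so $b\in\sgen$ has a tight edge but there is no $y_{\neq b}\in\sdata$ with $\ff(y)-\ff(b)=k(\ff)\lVert b-y\rVert$. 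Your argument therefore establishes the first bullet for $x\in\sgen\setminus\sdata$---which is the case the paper actually uses downstream---but not in the full generality stated.
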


The intuition behind the above theorem is that samples from the same distribution, e.g., the fake samples, will not bound each other. It is worth noticing that there might exist a chain of bounding relationships that involves a dozen of fake samples and real samples, and these points all lie in the same line and bounds each other.

Under the Lipschitz-continuity condition, the \textbf{bounded line} in the value surface of $\ff$ is the basic \textbf{building block} that connects $P_g$ and $P_r$, and each fake sample lies in one of the bounded lines. Next we will further interpret the implication of bounding relationship and show that it guarantees meaningful $\nabla_{\!x} \ff(x)$ for all involved points.

\subsection{Lipschitz-continuity ensures the convergence of $\nabla_{\!x} \ff(x)$-based GANs} \label{sec_convergence}

Recall that the proposition in Eq.~(\ref{lipgrad}) states that $\nabla_{\!x} \ff(x_t)=\frac{y-x}{\lVert y - x \rVert}$. We next show that it is actually a direct consequence of bounding relationship between $x$ and $y$. We formally state it as follows:
\begin{theorem} \label{lemma}
\vspace{2pt}
Assume $f(x)$ is  differentiable and $k$-Lipschitz continuous. For all $x$ and $y$ which satisfy $x\neq y$ and $f(y)-f(x)=k \cdot \lVert x - y \rVert$, we have $\nabla_{\!x} f(x_t)=k \cdot \frac{y-x}{\rVert y-x \rVert}$, where $x_t=tx+(1-t)y$ for $0 \leq t \leq 1$. 
\vspace{-5pt}
\end{theorem}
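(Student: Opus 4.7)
The plan is to first show that $f$ is affine along the segment joining $x$ and $y$ with slope $k$ in the direction $u \triangleq (y-x)/\lVert y-x\rVert$, and then use $k$-Lipschitz continuity together with Cauchy-Schwarz to pin down $\nabla_{\!x} f(x_t)$ completely.

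First, for any $t \in [0,1]$, write $\lVert x_t - x\rVert = (1-t)\lVert y-x\rVert$ and $\lVert y - x_t\rVert = t\lVert y-x\rVert$. By $k$-Lipschitz continuity,
\begin{equation*}
f(x_t) - f(x) \leq k(1-t)\lVert y-x\rVert, \qquad f(y) - f(x_t) \leq k\,t\lVert y-x\rVert.
\end{equation*}
Adding these yields $f(y) - f(x) \leq k\lVert y-x\rVert$, which by hypothesis holds with equality. Hence both inequalities must be equalities, so $f(x_t) - f(x) = k(1-t)\lVert y-x\rVert$, i.e., $f$ restricted to the segment is affine with slope $k$ in the direction $u$. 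By the same reasoning, for any two points $x_s, x_t$ on the segment with $s < t$, we have $f(x_s) - f(x_t) = k(t-s)\lVert y - x\rVert$.

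Second, I would use Lipschitz continuity at the differentiability point $x_t$ to bound the gradient norm: for any unit vector $v$ and sufficiently small $\epsilon>0$, $|f(x_t+\epsilon v)-f(x_t)|\leq k\epsilon$, hence the directional derivative $\langle \nabla f(x_t), v\rangle$ is at most $k$ in absolute value, so $\lVert \nabla f(x_t)\rVert \leq k$.

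Third, I would compute the directional derivative of $f$ at $x_t$ in the direction $u$ directly from the affine behavior established in the first step: for $\epsilon > 0$ small enough that $x_t + \epsilon u$ still lies on the segment between $x$ and $y$, we have $f(x_t + \epsilon u) - f(x_t) = k\epsilon$, so $\langle \nabla f(x_t), u\rangle = k$. Combined with $\lVert \nabla f(x_t)\rVert \leq k$ and $\lVert u\rVert = 1$, Cauchy-Schwarz forces $\nabla f(x_t) = k u = k\cdot\frac{y-x}{\lVert y-x\rVert}$, since equality in Cauchy-Schwarz requires parallel vectors and the inner product fixes the scalar.

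The one mild subtlety is at the endpoints $t=0$ and $t=1$, where the small perturbations in direction $u$ can only be taken on one side of the segment. However, since $f$ is assumed differentiable at these points, the gradient exists and the directional derivative along any direction (including one-sided limits on the segment) coincides with $\langle \nabla f, \cdot\rangle$, so the same Cauchy-Schwarz argument applies without modification. This will be the only technical point worth being careful about; the rest is essentially the chain of tight Lipschitz inequalities forcing affine behavior on the segment.
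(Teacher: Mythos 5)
Your proof is correct and follows essentially the same route as the paper: establish that the tight Lipschitz chain forces $f$ to be affine with slope $k$ along the segment, bound $\lVert\nabla f(x_t)\rVert$ by $k$ via Lipschitzness, compute the directional derivative along $u=(y-x)/\lVert y-x\rVert$ to be exactly $k$, and invoke the equality case of Cauchy--Schwarz. The only difference is that you explicitly note the one-sided issue at the endpoints, which the paper leaves implicit.
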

In other words, if two points $x$ and $y$ bound each other in terms of $f(y)\!-\!f(x)\!=\!k \cdot \lVert x\! -\! y \rVert$, there is a straight line between $x$ and $y$ in the value surface of $f$. Any point in this line holds the maximum gradient slope $k$, and the direction of these gradient all point towards the $x\! \rightarrow\! y$ direction. % %\lantao{no link}
Combining Theorem~\ref{theorem} and Theorem~\ref{corollary}, we can conclude that when $P_g$ and $P_r$ are disjoint, $\nabla_{\!x} \ff(x)$ for each sample $x \tsim P_g$ points to a sample $y \tsim P_r$, which guarantees that $P_g$ is moving towards $P_r$. 

In fact, Theorem~\ref{theorem} provides further guarantee on the convergence. Property-(b) implies that for any $x \tsim P_g$ that does not lies in $P_r$, $\nabla_{\!x} \ff(x)$  points to some real sample $y \tsim P_r$. In the fully overlapped case, according to Property-(c), unless $P_g=P_r$, there exists a pair $(x, y)$ in bounding relationship and $\nabla_{\!x} \ff(x)$ pulls $x$ towards $y$. Property-(d) guarantees that the only Nash Equilibrium is ``$P_g=P_r$''. The proof of Theorem~\ref{lemma} is provided in Appendix~\ref{lip_direction}.

%Given the only Nash Equilibrium is $P_g=P_r$, Lipschitz-continuity condition under the objectives in Theorem~\ref{theorem} can guarantee the convergence of the corresponding GAN models.
%. One can prove that: if $(x, y)$ is in bounding relationship in $\ff(x)$, then $P_r(y)/P_g(y)>P_r(x)/P_g(x)$ which means that transferring density from $x$ to $y$ makes the two distributions closer. A

\begin{figure}[t]
\vspace{-10pt}
\begin{subfigure}{0.33\linewidth}
    \centering
    \includegraphics[width=0.8\columnwidth]{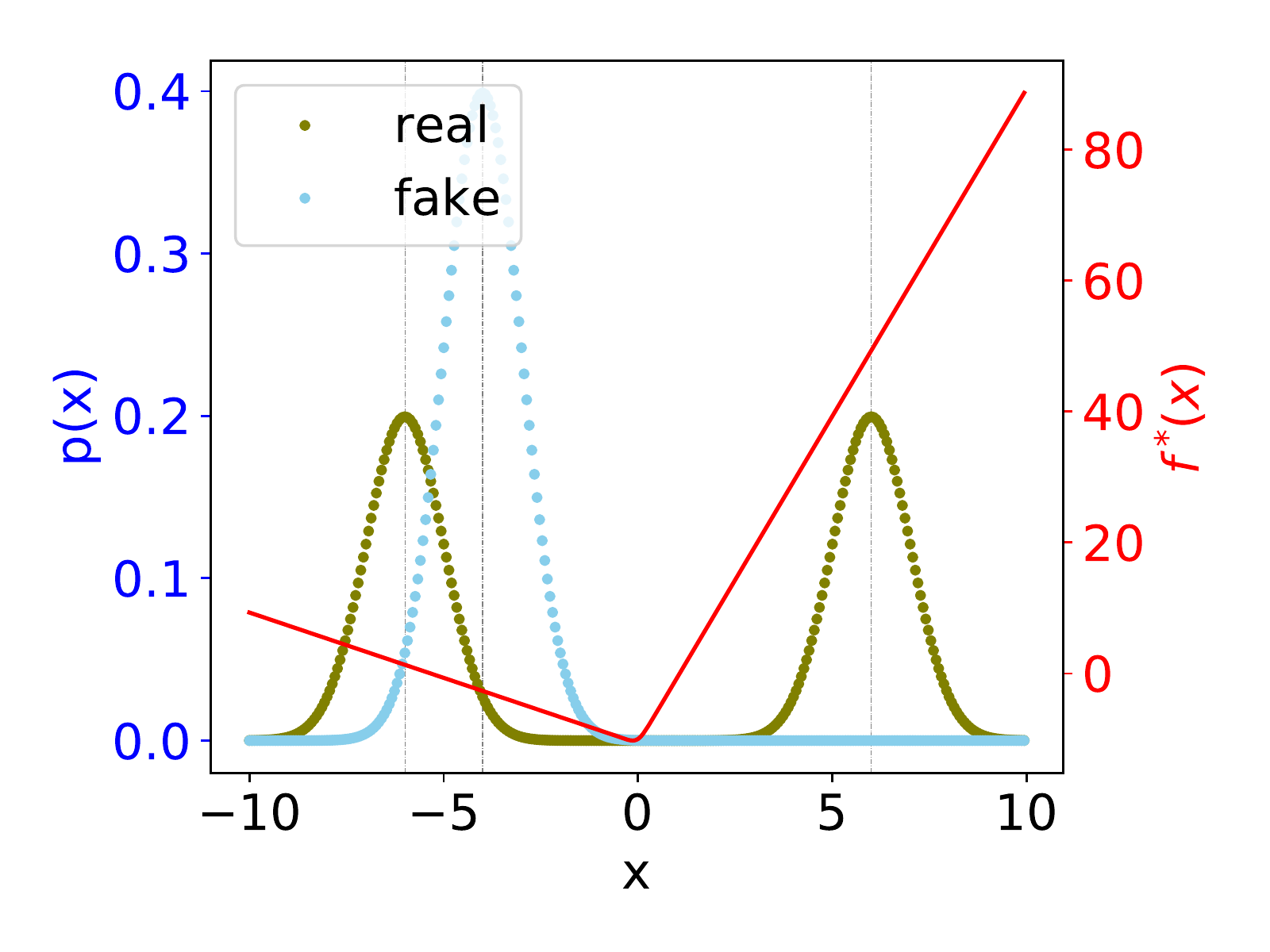}
    \vspace{-8pt}
    \caption{Original GAN}
    \label{fig2_originalgan}
\end{subfigure}	
\begin{subfigure}{0.33\linewidth}
    \centering
    \includegraphics[width=0.8\columnwidth]{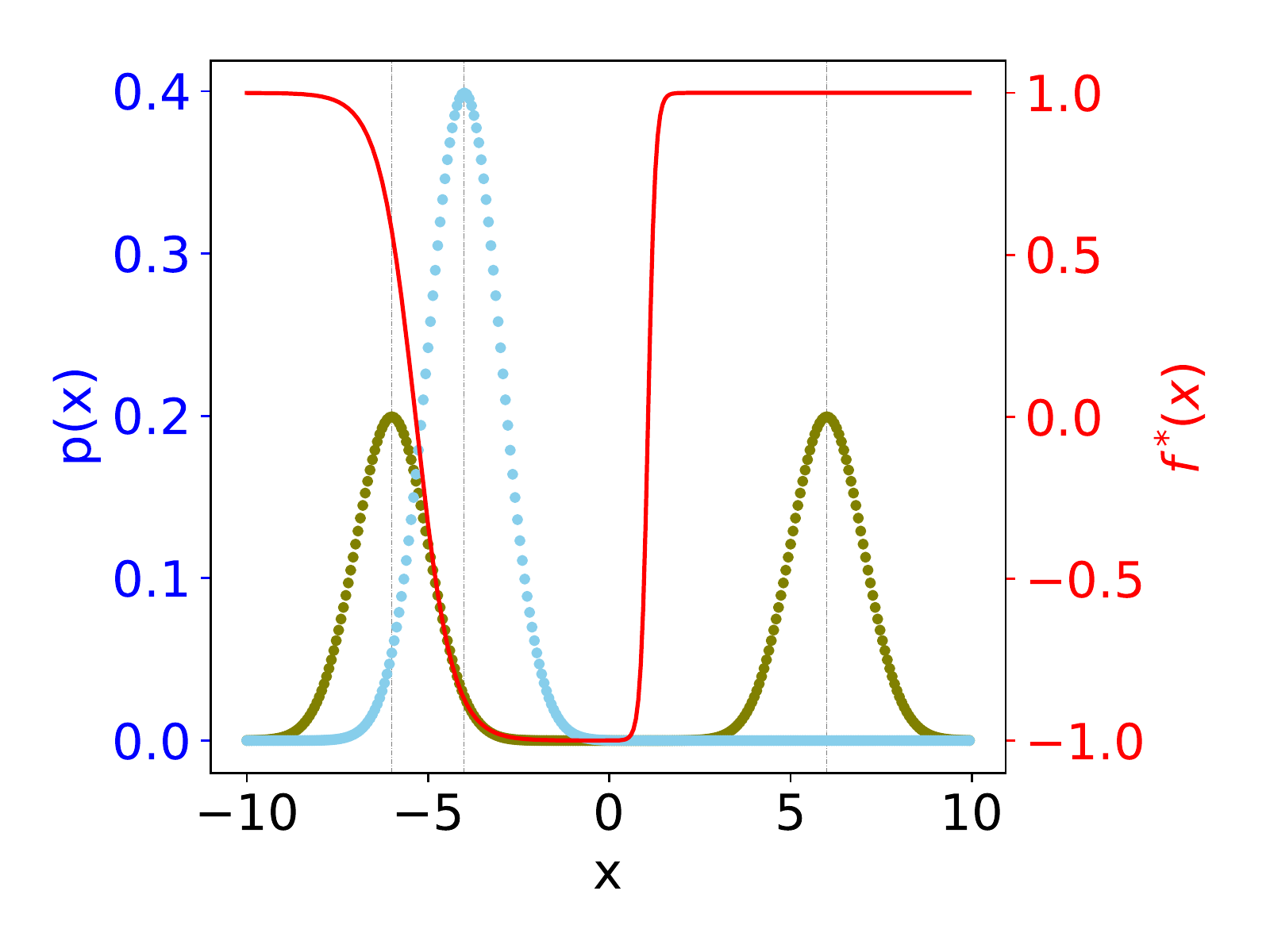}
    \vspace{-8pt}
    \caption{Least Square GAN}
    \label{fig2_lsgan}
\end{subfigure}
\begin{subfigure}{0.33\linewidth}
    \centering
    \includegraphics[width=0.8\columnwidth]{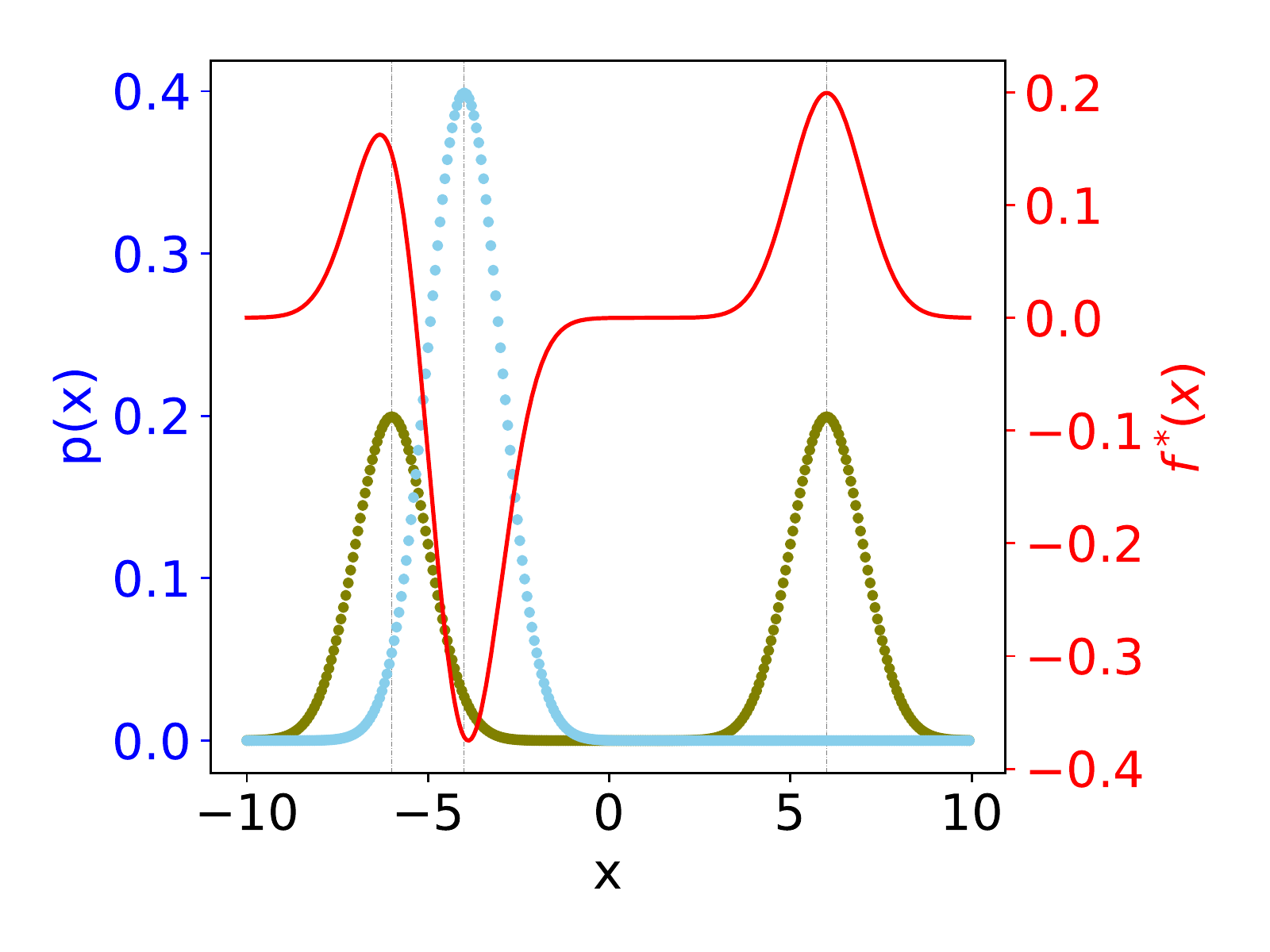}
    \vspace{-8pt}
    \caption{Fisher GAN with uniform $\mu$}
    \label{fig2_fishergan}
\end{subfigure}
\vspace{-5pt}
\caption{The source of Mode Collapse. In traditional GANs, $\ff(x)$ is a function of the local densities $P_g(x)$ and $P_r(x)$. Given $\ff(x)$ is an increasing function of $P_r(x)$ and decreasing function of $P_g(x)$, when fake samples get close to a mode of the $P_r$, $\nabla_{\!x} \ff(x)$ move them towards the mode.} % And being inaccessible to the global status, it sticks there.}, and  is also local
\label{figure2}
\vspace{-5pt}
\end{figure}

\section{Overlapping case: the cause of mode collapse} \label{sec_overlapped}
%Towards Comprehensive Understanding}
%\weinan{this section title should be refined.}\zhiming{indeed, how is it now}

%The Problem of $\nabla_{\!x} \ff(x)$ When $P_g$ and $P_r$ Are Overlapped} 
% \subsection{From disjoint case to overlapping case} 

In Section~\ref{sec_failue_cause}, we discuss the problem of $\ff(x)$ and $\nabla_{\!x} \ff(x)$ in the case where $P_g$ and $P_r$ are disjoint. In this section, we extend our discussion to the overlapping case. 
In the disjoint case, we argue that ``$\ff(x)$ on $P_g$ does not reflect any information about the location of other points in $P_r$'' will lead to an unfeasible $\nabla_{\!x} \ff(x)$ and thus non-convergence. In the overlapping and continuous case, things are actually different, $\ff(x)$ around each point is also defined, and its gradient $\nabla_{\!x} \ff(x)$ now reflects the local variation of $\ff(x)$.

For most traditional GANs, $\ff(x)$ mainly reflects the local information about the density $P_g(x)$ and $P_r(x)$. However, it is worth noting that $\ff(x)$ is usually an increasing function with respect to $P_r(x)$ while a decreasing function with respect to $P_g(x)$. For instance, $\ff(x)$ in the original GAN is $\log {P_r(x)}/{P_g(x)}$. 
Optimizing the generator according $\nabla_{\!x} \ff(x)$ will move sample $x$ towards the direction of increasing $\ff(x)$. Because $\ff(x)$ positively correlates with $P_r(x)$ and negatively correlated with $P_g(x)$, it in sense means $x$ is becoming more real. %\weinan{please refine this sentence. I cannot understand.}\zhiming{refined. it is not strict. actually ``$\ff(x)$ increases'' means increasing of $\log \frac{P_r(x)}{P_g(x)}$ and $\frac{\alpha \cdot P_g(x)+\beta \cdot P_r(x)}{P_g(x)+P_r(x)}$ in the original GAN and Least-Squares GAN respectively.}
However, such a local greedy strategy turns out to be a fundamental cause of mode collapse.

%as we will next show,  cannot guarantee convergence and
%Fisher GAN are slight different, where $\ff(x)$ is proportional to $(P_r(x)\!-\!P_g(x))/\mu(x)$. It holds an extra $\mu(x)$ inside $\ff(x)$. Depending on the selection of $\mu(x)$, it actually could be even worse. Saying $x$ is in one dimension and $P_r(x)\!-\!P_g(x)$ is increasing while $(P_r(x)\!-\!P_g(x))/\mu(x)$ is decreasing; $\nabla_{\!x} \ff(x)$ by the definition will point towards the increasing direction of $(P_r(x)\!-\!P_g(x))/\mu(x)$, however, it turns out to be the decreasing direction of $P_r(x)\!-\!P_g(x)$. 

% \subsection{The cause of mode collapse: the locality of $\ff(x)$ and $\nabla$} \label{sec_mode_collapse}

Mode collapse is a notorious problem in GANs' training, which refers to the phenomenon that the generator only learns to produce part of $P_r$. Many literatures try to study the source of mode collapse \citep{mode_gan,unrolled_gan,kodali2017convergence,arora2017generalization} and measure the degree of mode collapse \citep{acgan,arora2017gans}.

The most recognized cause of mode collapse is that, if the generator is much stronger than the discriminator, it may learn to only produce the sample(s) in the local or global maximum of $f(x)$ for the current discriminator. This argument is true for most of GAN models. However, from our perspective on $\ff(x)$ and its gradient, there actually exists a much more fundamental cause of mode collapse, i.e., the locality of $\ff(x)$ in traditional GANs and the locality of gradient operator $\nabla$. 

In traditional GANs, $\ff(x)$ is a function of local densities $P_g(x)$ and $P_r(x)$, which is local, and the gradient operator $\nabla$ is also a local operator. As the result, $\nabla_{\!x} \ff(x)$ only reflects its local variations and cannot capture the statistic of $P_r$ and $P_g$ that is far from itself. If $\ff(x)$ in the surrounding area of $x$ is well-defined, $\nabla_{\!x} \ff(x)$ will move $x$ towards the \textbf{nearby} location where the value of $\ff(x)$ is higher. It does not take the global status into account.

The typical result is that when fake samples get close to a mode of the $P_r$, they move towards the mode and get stuck there (due to the locality). Assume $P_r$ consists of two Gaussian distributions (A and B) that are distant from each other, while the current $P_g$ is uniformly distributed over its support and close to real Gaussian A. In this case, $\nabla_{\!x} f(x)$ of all fake samples will point towards the center of Gaussian A. If $P_g$ is a Gaussian with the same standard deviation as Gaussian A, $\nabla_{\!x} f(x)$ in original GAN and Least-Square GAN shows almost identical behaviors, which is illustrated in Figure~\ref{figure2}. In Fisher GAN, if $\mu(x)$ is uniform, the case is even worse: a large amount of points that are relatively far from Gaussian A will move away from A (but the direction is not necessarily towards B, though in our 1-D case it is). %these points who are relatively near to the real Gaussian A will move towards A, but another
%
%\vspace{-1pt}
This observation again supports our argument that ``a well-defined distance metric does not necessarily guarantee the convergence'', and the validity of $\nabla_{\!x} \ff(x)$ is still necessary even if $P_g$ and $P_r$ is continuous and overlapped. %As far as we know, this is the first work that provides a clear explanation on this cause of mode collapse. 

%The gradient problems of $\nabla_{\!x} \ff(x)$ in traditional GANs are actually not limited to these above-mentioned. We have extended discussions on non-vanishing gradient at convergence and Type-\RNum{1} and Type-\RNum{2} gradient vanishing in the Appendix~\ref{}, i.e.~the issues regarding the gradient scale, in the main paper, we more focused on $\nabla_{\!x} \ff(x)$ and its direction.

%\section{Discussion and Contribution Clarification}
%\label{sec_5}

\section{Extended Discussions}

\subsection{The relation between Lipschitz-continuity and Wasserstein distance} \label{lip_stronger}

Most literature presents the dual form of Wasserstein distance with the Lipschitz-continuity condition. However, it is worth noticing that the Lipschitz-continuity condition is actually stronger than the necessary one in the dual form of Wasserstein distance.
Recall that in the dual form of Wasserstein distance, the constraint can be more compactly written as (introduced in Section~\ref{sec_w_also_suffer} and proved in Appendix~\ref{app_dual_form})
\vspace{-3pt}
\begin{equation}
\begin{aligned}
f(x) - f(y) \leq d(x, y), \,\, \forall x \sim P_r, \forall y \sim P_g.
\end{aligned}
\label{eq_w_dual_form_1_st}
\end{equation}
However, it is usually written as 1-Lipschitz continuous, which is
\begin{equation}
\begin{aligned}
f(x) - f(y) \leq d(x, y), \forall x, \forall y.
\end{aligned}
\label{eq_lip_simp}
\end{equation}
The key difference is that the constraint in Eq.~(\ref{eq_w_dual_form_1_st}) restricts the range of $x$ and $y$, but Lipschitz-continuity condition (Eq.~(\ref{eq_lip_simp})) does not have the restriction on the range, thus the latter is the sufficient condition of the former one. 
It is also worth noticing that, though Lipschitz-continuity condition is stronger than the compact one, it does not affect the final solution (Appendix~\ref{app_dual_form}). In other words, Lipschitz-continuity condition is a safe extension of the compact constraint. And if the supports of $P_g$ and $P_r$ are the entire space, Eq.~(\ref{eq_w_dual_form_1_st}) and Eq.~(\ref{eq_lip_simp}) are actually identical; in such condition, Wasserstein distance in its dual form always works. \textbf{However, $P_g$ and $P_r$ are usually disjoint in GANs}. Therefore, \textbf{using the strong Lipschitz-continuity condition is necessary} to ensure the validity of the dual form of Wasserstein distance in $\nabla_{\!x} \ff(x)$-based updating, and the constraint in Eq.~(\ref{eq_w_dual_form_1_st}) is not enough as shown in Section~\ref{sec_w_also_suffer}. 

\subsection{Explanation on the empirical success of traditional GANs} \label{sec_success}

Though traditional GANs does not have any guarantee on its convergence, it has already achieved its great success. The reason is that having no guarantee does not mean it cannot converge. It turns out extensive parameter-tuning actually increases the probability of the convergence.

As shown in Appendix~\ref{hyper_para}, hyper-parameters are important in influencing the value surface of $\ff(x)$. Some typical settings (e.g., simplified neural network architecture, relu or leaky relu activation, relatively high learning rate, Adam optimizer, etc.) tend to form a relatively smooth value surface (e.g., monotonically increasing from $P_g$ to $P_r$), making $\nabla_{\!x} \ff(x)$ much more meaningful. That is, one can find these settings, where $\nabla_{\!x} \ff(x)$ or $\nabla_{\!x} f(x)$ is more favourable, to enable traditional GANs to work. %, i.e., implicitly restricting the function space $\mathcal{F}$
In opposite, we have tried highly-nonlinear activation such as swish \citep{ramachandran2018searching} in the discriminator. It turns out traditional GANs are very likely to fail. In contrast, our proposed Lipschitz-continuity condition based GANs are compatible with highly-nonlinear activation. 
Another important empirical technique is to delicately balance the generator and the discriminator or limit the capacity of the discriminator. This is to avoid the fatal optimal $\ff(x)$. 
All these could possibly make traditional GANs work. \textbf{However, the consequence is that these GANs are very sensitive to hyper-parameters and hard to use.}

\section{Experiments} \label{sec_exp}

In this section, we present the experiment results on our proposed objectives for GANs. The anonymous code is provided at \url{http://bit.ly/2Kvbkje}.

\subsection{Verifying the objective family and its gradient $\nabla_{\!x} \ff(x)$}

\begin{figure}[thbp]
\vspace{-10pt}
\begin{minipage}{.5\textwidth}
\begin{subfigure}{0.49\linewidth}
\includegraphics[width=0.99\columnwidth]{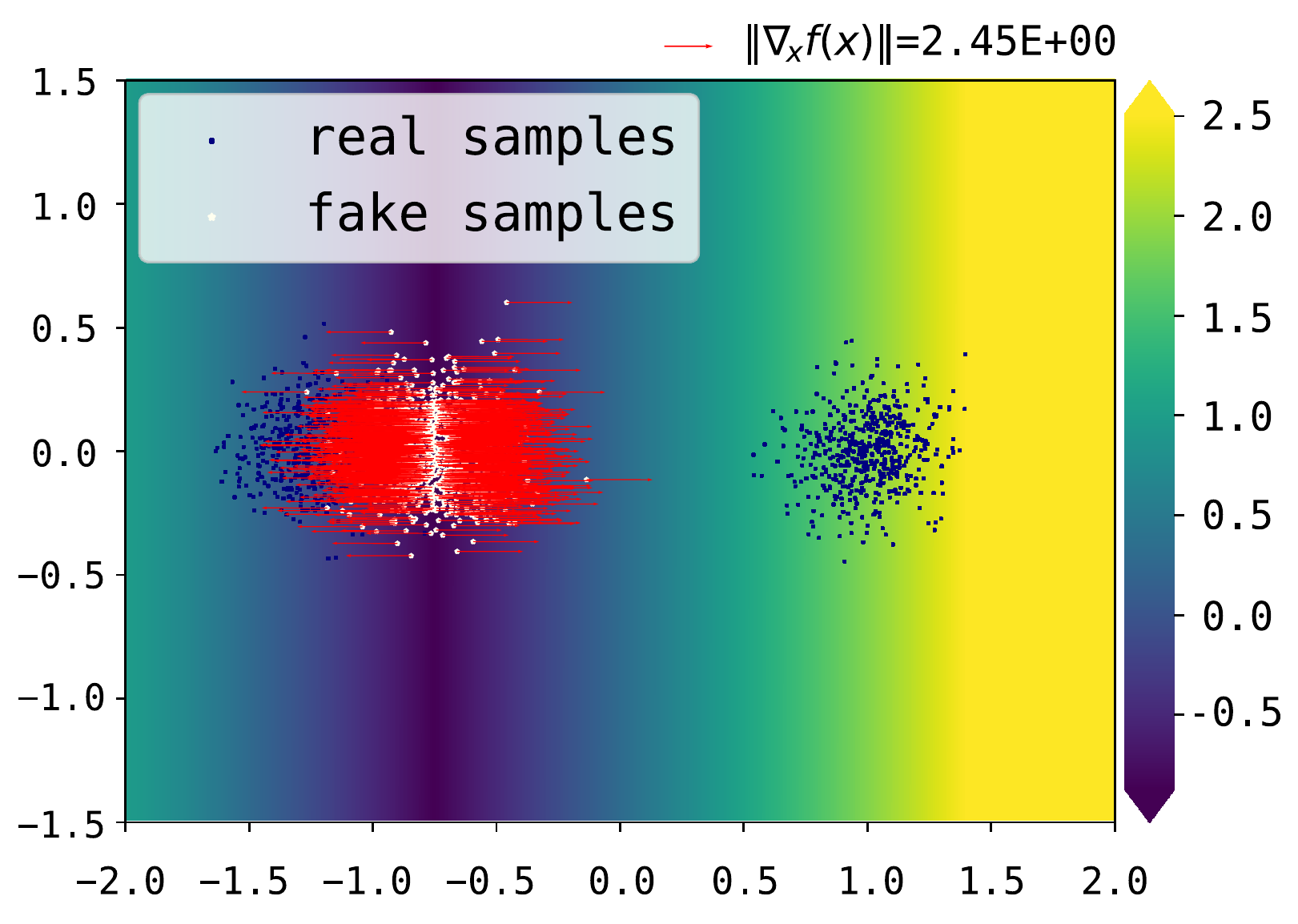}
\vspace{-17pt}
\caption{$x$}
\label{a}
\end{subfigure}	
\begin{subfigure}{0.49\linewidth}
\includegraphics[width=0.99\columnwidth]{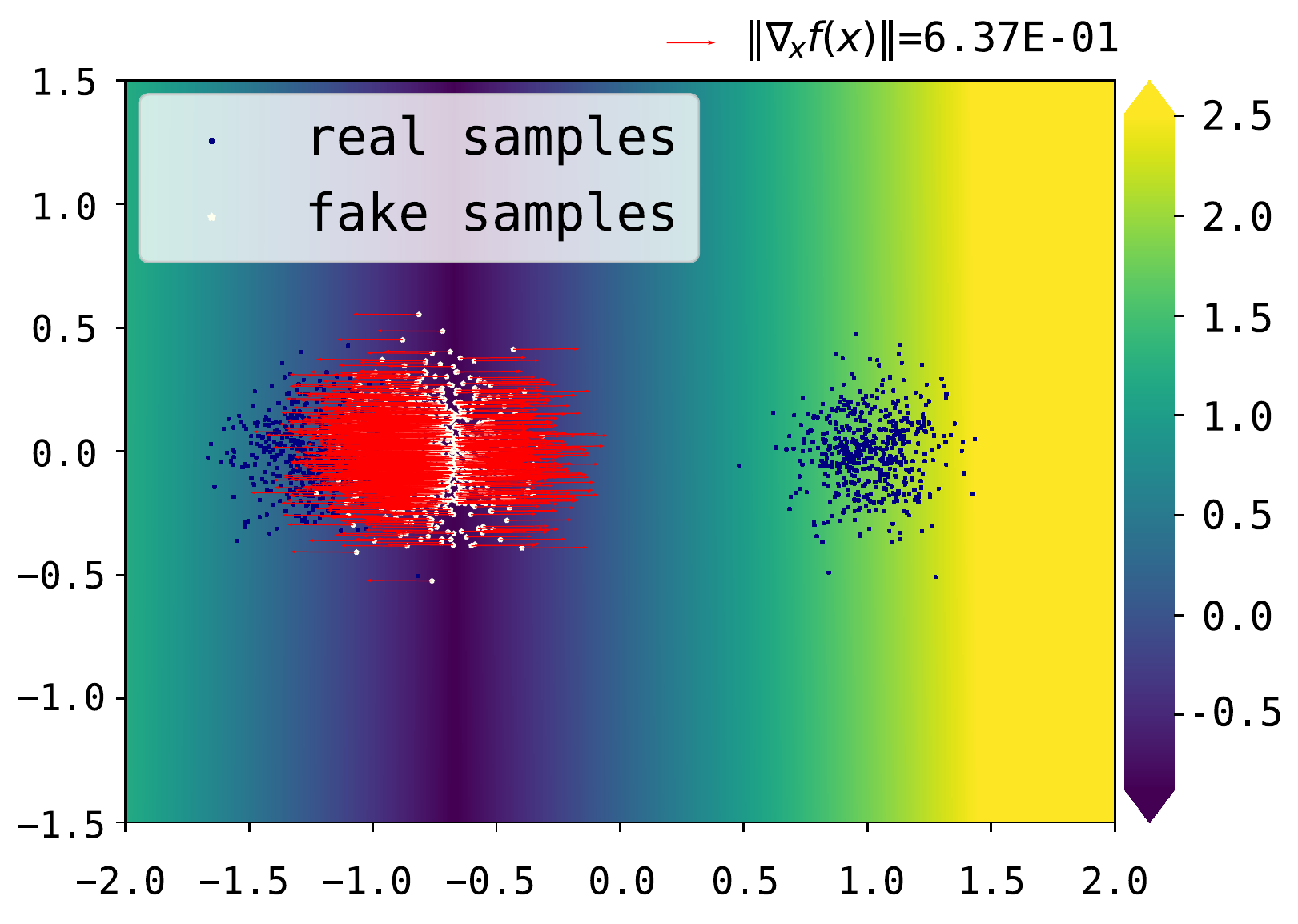}
\vspace{-17pt}
\caption{$-\log(\sigma(-x))$}
\label{b}
\end{subfigure}	
\begin{subfigure}{0.49\linewidth}
\includegraphics[width=0.99\columnwidth]{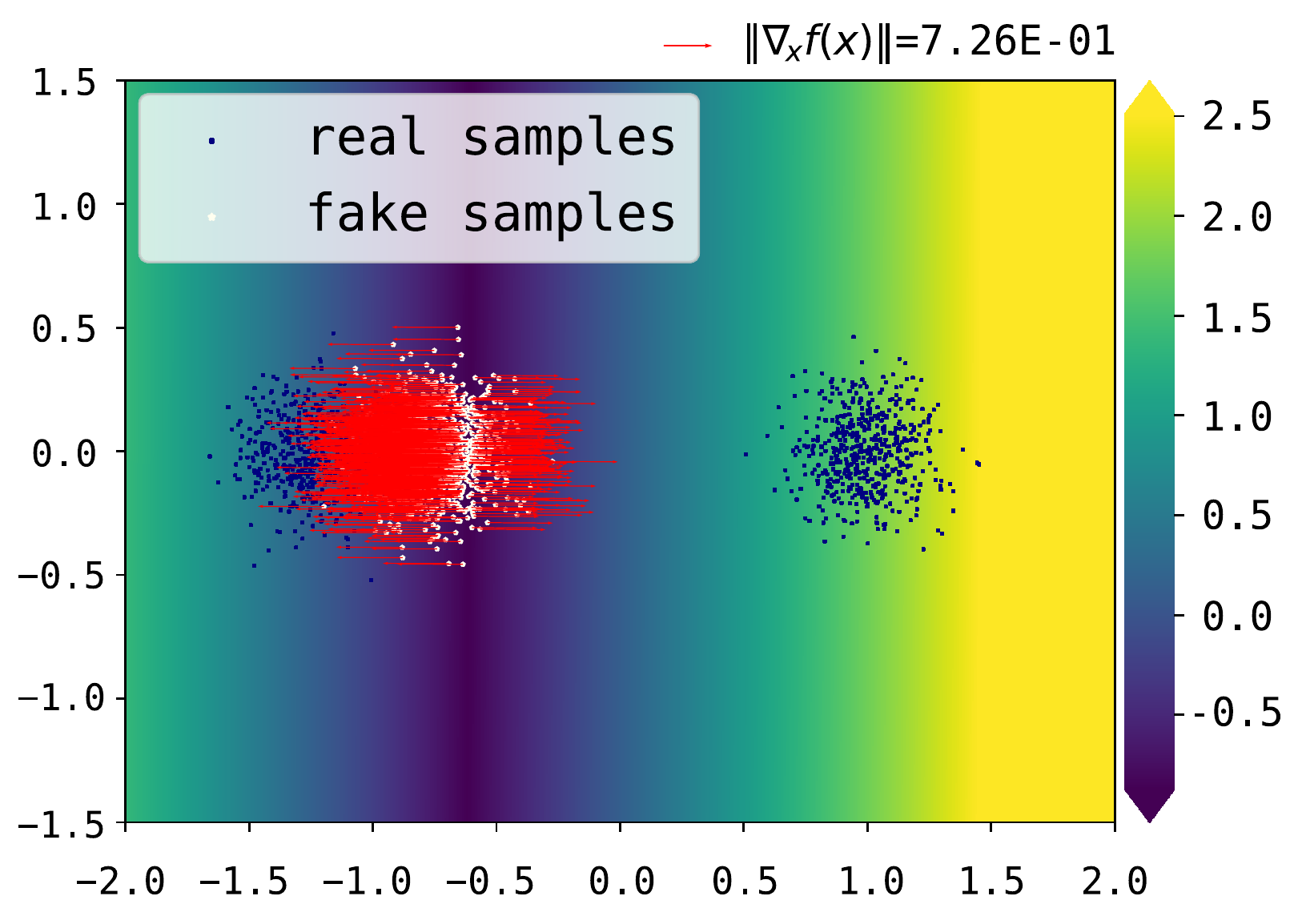}
\vspace{-17pt}
\caption{$x+\sqrt{x^2+1}$}
\label{c}
\end{subfigure}	
\begin{subfigure}{0.49\linewidth}
\includegraphics[width=0.99\columnwidth]{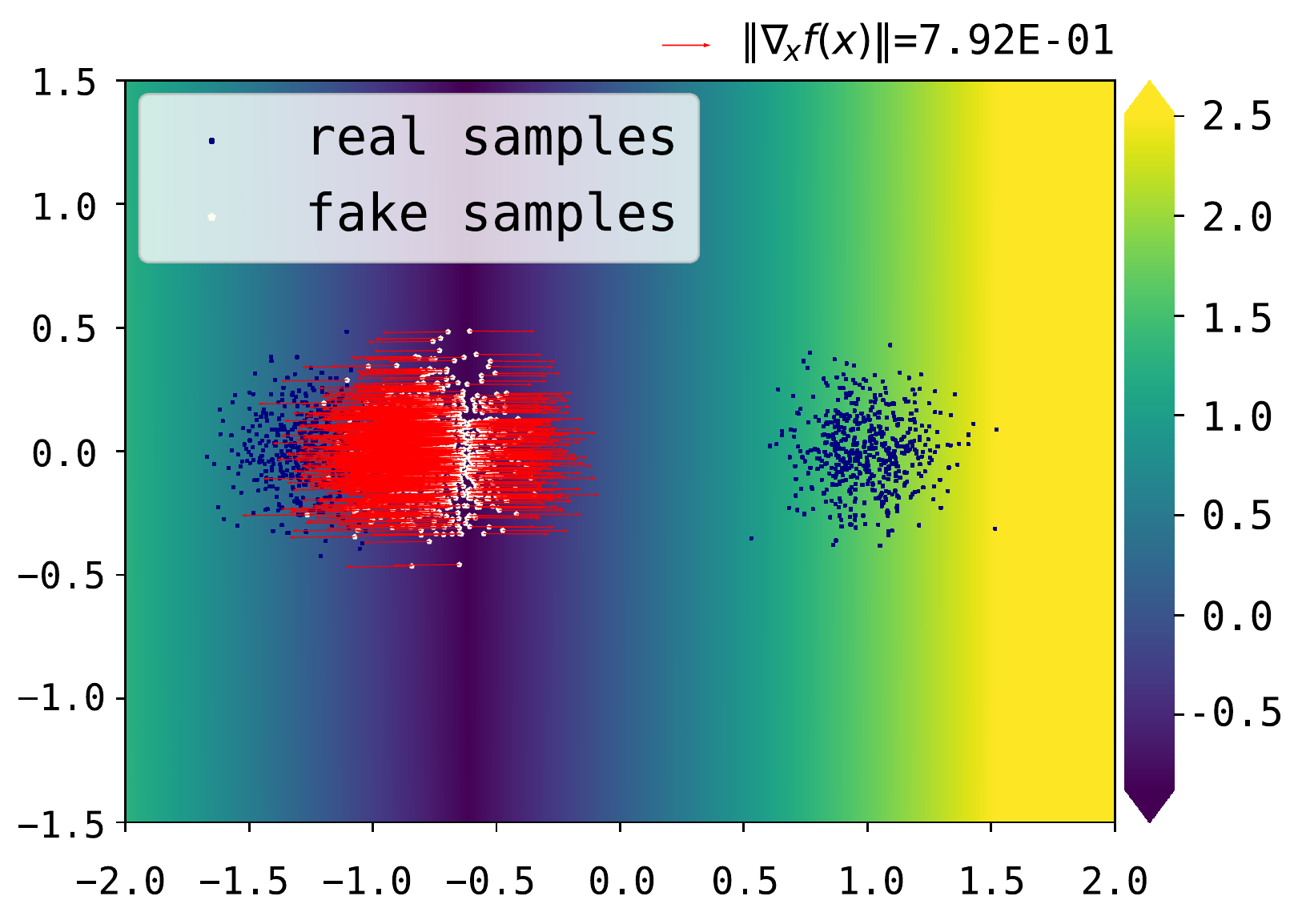}
\vspace{-17pt}
\caption{$\exp(x)$}
\label{d}
\end{subfigure}	
\caption{Verifying the objective family}
\label{figure3}
\end{minipage}
\begin{minipage}{.5\textwidth}
\includegraphics[width=0.97\linewidth]{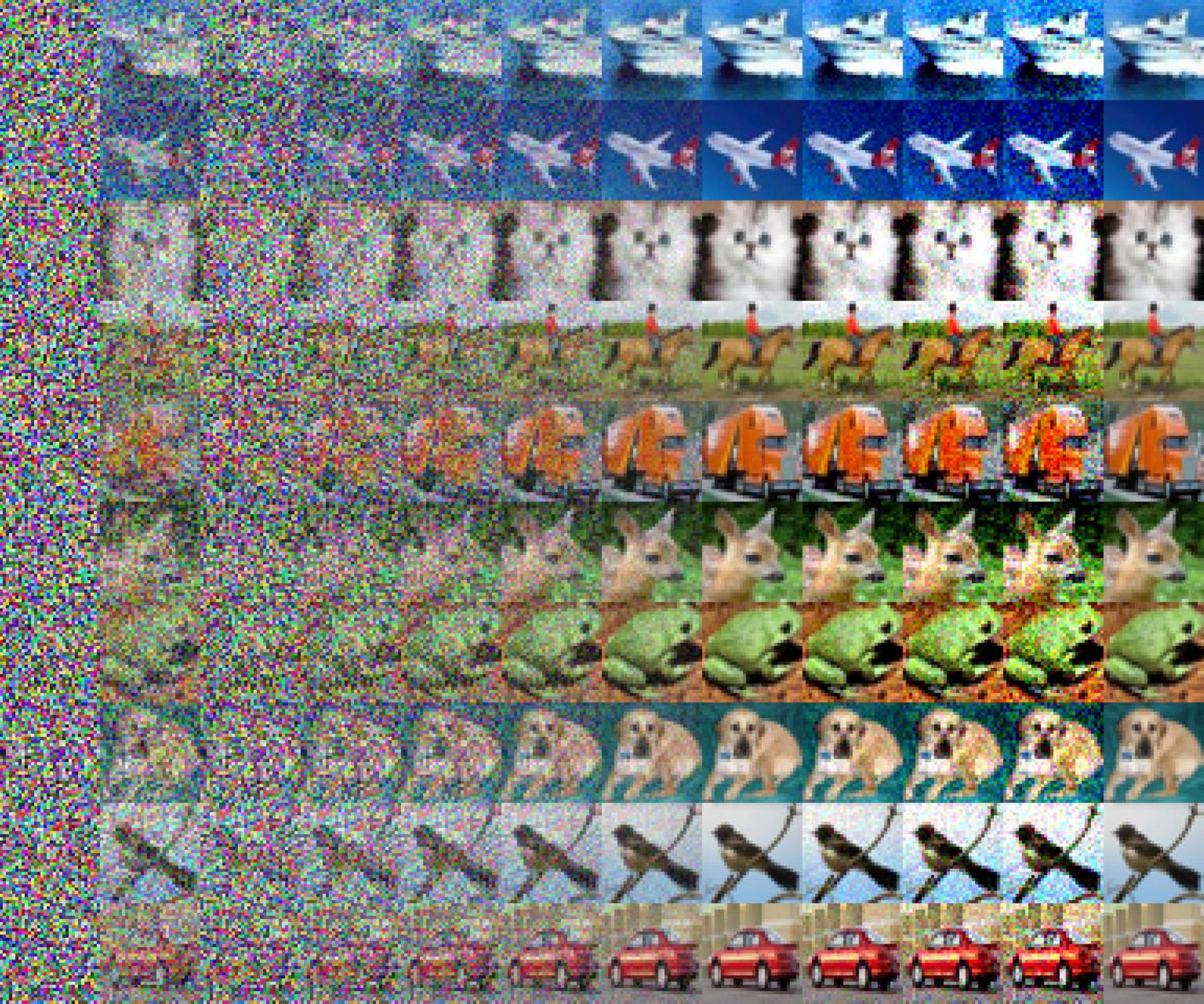}
\caption{$\nabla_{\!x} \ff(x)$ gradation with CIFAR-10}
\label{figure4}
\end{minipage}
\end{figure}

We verify a set of $\phi$ and $\varphi$ satisfying Eq.~(\ref{eq_solvable}): (a) $\phi(x)=\varphi(-x)=x$; (b) $\phi(x)=\varphi(-x)=-\log(\sigma(-x))$; (c) $\phi(x)=\varphi(-x)=x+\sqrt{x^2+1}$; (d) $\phi(x)=\varphi(-x)=\exp(x)$. As shown in Figure~\ref{figure3}, the gradient of each generated sample is towards a real sample. 

We further verify $\nabla_{\!x} \ff(x)$ with the real-world data, using \textbf{ten CIFAR-10 images} as $P_r$ and \textbf{ten noise images} as $P_g$ to make the solving of $\ff(x)$ feasible. The result is shown in Figure~\ref{figure4}, where The leftmost in each row are the $x \tsim P_g$ and the second are their gradient $\nabla_{\!x} f(x)$. The interior are $x+\epsilon\cdot\nabla_{\!x} f(x)$ with increasing $\epsilon$, which will pass through a real sample, and the rightmost are the nearest $y \tsim P_r$. This result visually demonstrates that the gradient of a generated sample is towards the direction of one real sample. Note that the final results of this experiment keep almost identical when varying the loss metric $\phi(x)$ and $\varphi(x)$ in the family. %Because, in this setting, $P_g$ and $P_r$ are totally disjoint, and according our theorem, each $x\tsim P_g$ will point towards some real sample $y\tsim P_r$. %only reason is that for all objectives that satisfying the Eq.~(\ref{eq_solvable}), it holds for 

%In this experiments, we use keep track of the maximum gradient of $f(x)$ and directly penalize the Lipschitz constant via $\max\{\lVert\nabla_{\!x} f(x)\rVert\}^2$, which we call the \textbf{maxgp}.

% \begin{figure}
% \centering
% \begin{subfigure}{0.25\linewidth}
%     \centering
%     \includegraphics[width=0.99\columnwidth]{figures/wgan_logit.pdf}
%     \vspace{-17pt}
%     \caption{$x$}
%     \label{fig5_a}
% \end{subfigure}	
% \begin{subfigure}{0.25\linewidth}
%     \centering
%     \includegraphics[width=0.99\columnwidth]{figures/wgans_logit.pdf}
%     \vspace{-17pt}
%     \caption{$-\log(\sigma(-x))$}
%     \label{fig5_b}
% \end{subfigure}	
% \vspace{-5pt}
% \caption{$\ff(x)$ in new objective is more stable.}
% \label{figure5}
% \end{figure}

\begin{figure}[t]
\vspace{-5pt}
\begin{minipage}{.6\textwidth}
\centering
\begin{subfigure}{0.49\linewidth}
    \includegraphics[width=0.99\columnwidth]{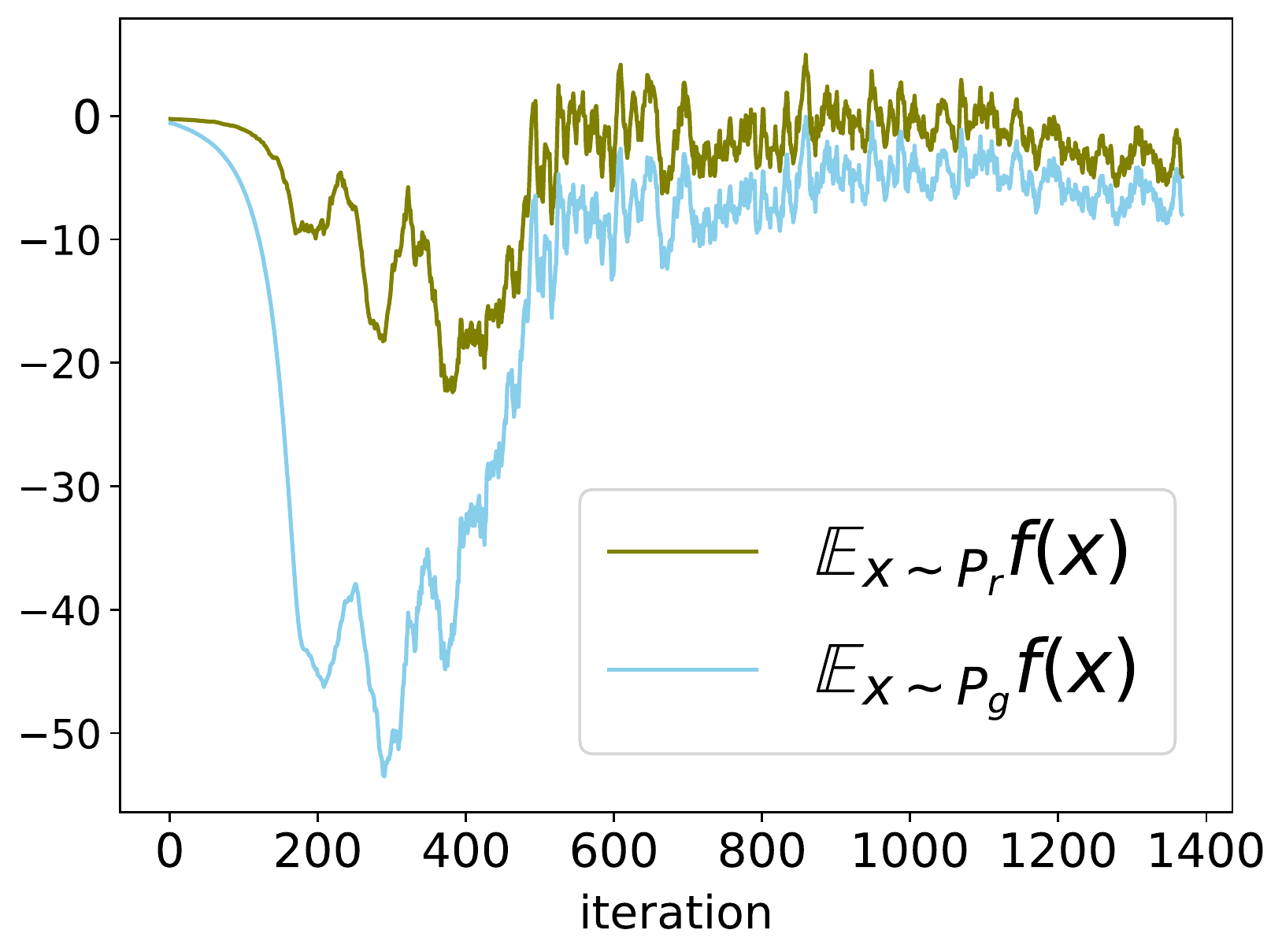}
    \vspace{-17pt}
    \caption{$x$}
    \label{fig5_a}
\end{subfigure}	
\begin{subfigure}{0.49\linewidth}
    \includegraphics[width=0.99\columnwidth]{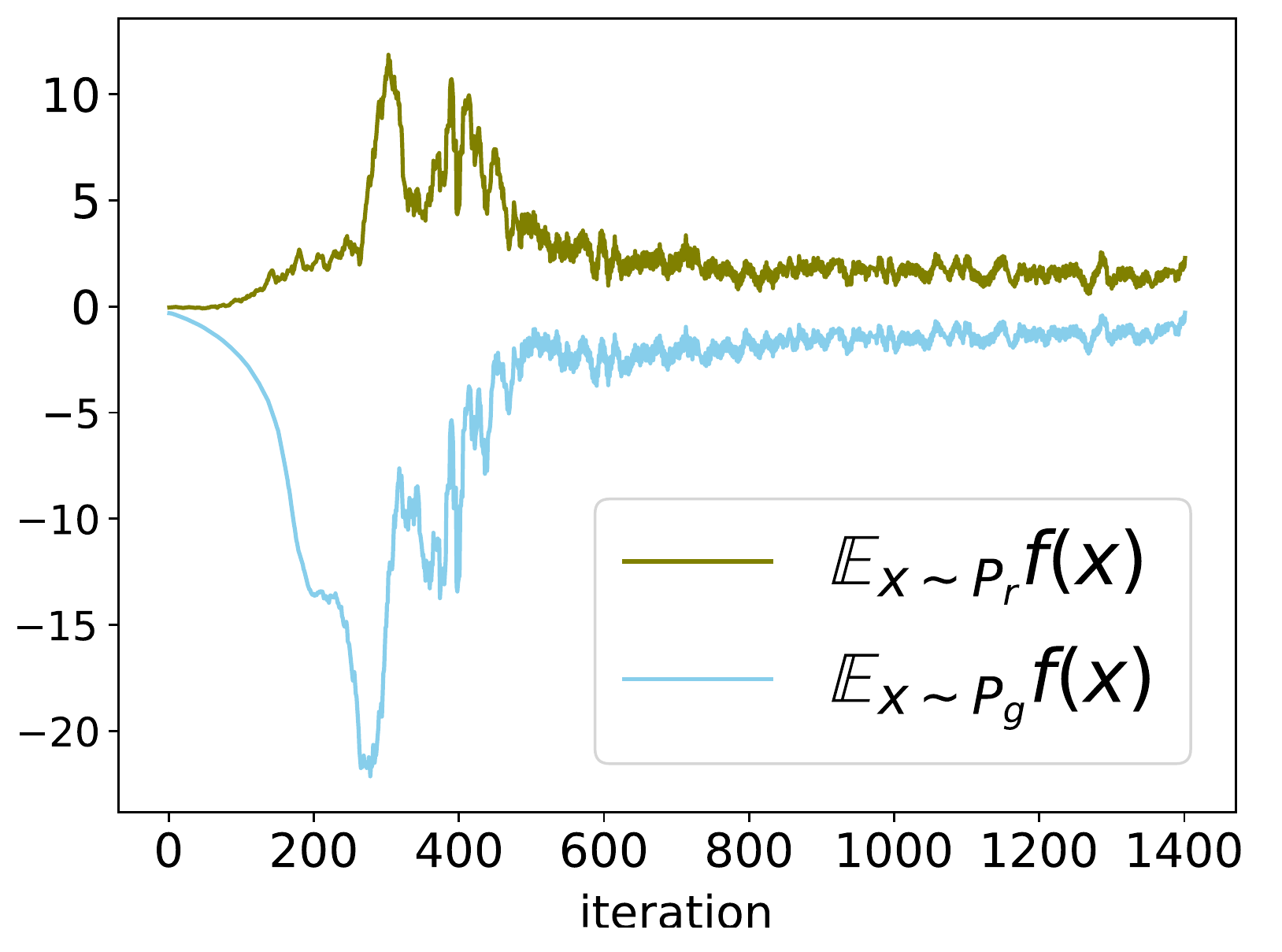}
    \vspace{-17pt}
    \caption{$-\log(\sigma(-x))$}
    \label{fig5_b}
\end{subfigure}	
\vspace{-5pt}
\caption{$\ff(x)$ in new objective is more stable.}
\label{figure5}
\end{minipage}
\begin{minipage}{.4\textwidth}	
    \vspace{-5pt}
    \centering
    \includegraphics[width=0.85\columnwidth]{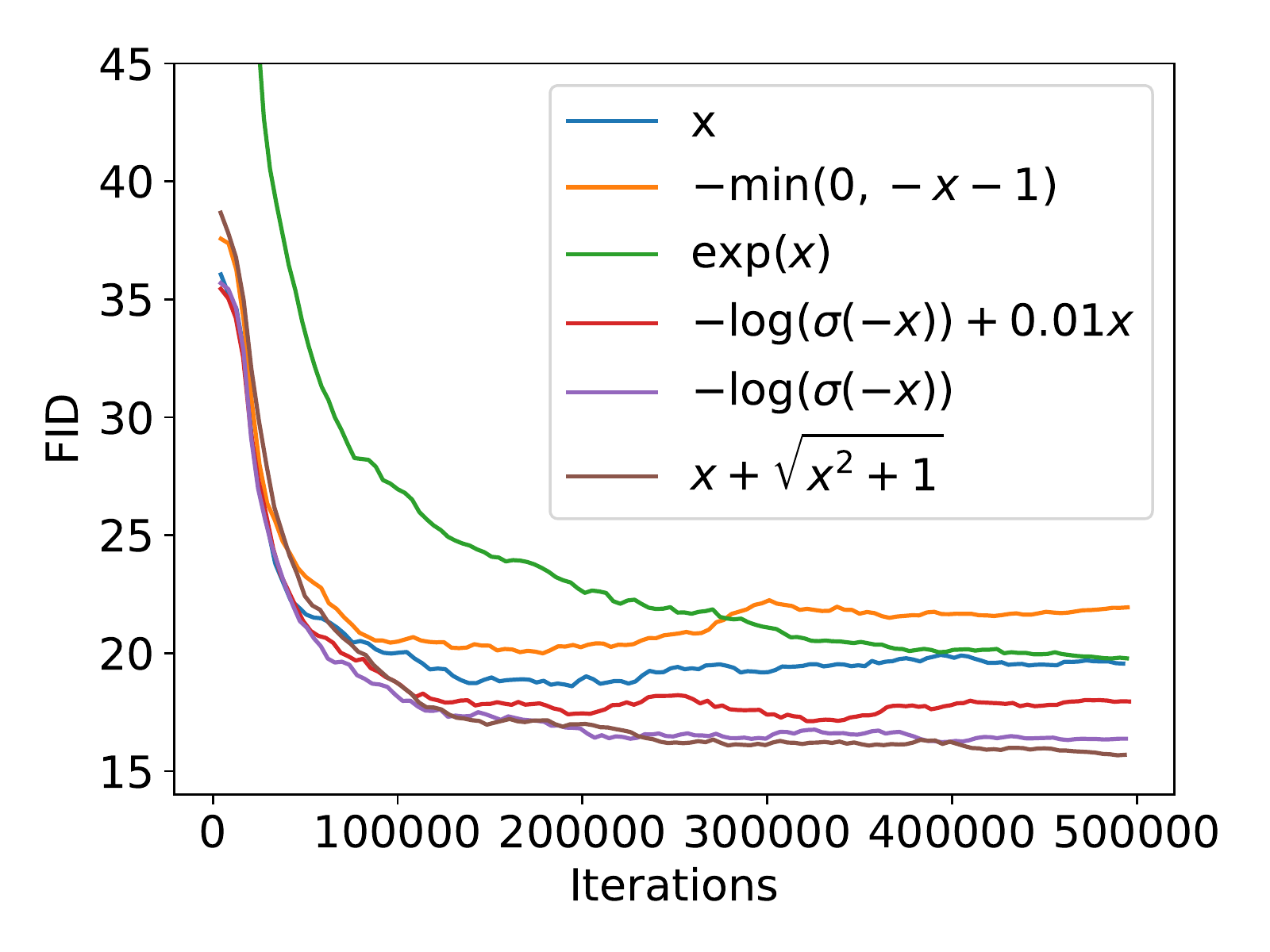}
    \vspace{-5pt}
    \caption{Training curves on CIFAR-10.}
    \label{figure6}
\end{minipage}
\vspace{-10pt}
\end{figure}        

\subsection{Stabilizing $\ff(x)$ with new Objectives}

Wasserstein distance is a special case in our proposed family of objectives where $\phi''(x)=\varphi''(x)=0$. As a result, $\ff(x)$ under the Wasserstein distance objective where $\phi(x)=\varphi(-x)=x$ has a free offset, which means given a $\ff(x)$, $\ff(x)+b$ with any $b \in \mathbb{R}$ is also an optimal. In practice, this behaves as an oscillatory $f(x)$ during training. Any other instance of our new proposed objectives does not have this problem. We illustrate this practical difference in Figure~\ref{figure5}. 

\subsection{Benchmark on unsupervised image generation tasks}
\vspace{-3pt}
\begin{table}[h]
\centering
\captionof{table}{Quantitative comparisons on unsupervised image generation tasks.}
\vspace{-5pt}
\resizebox{0.96\textwidth}{!}{
\begin{tabular}{|c|c c|c c|c c|}
\hline 
 \multirow{2}{1.2cm}{Objective}             & \multicolumn{2}{|c|}{CIFAR-10} & \multicolumn{2}{c}{Tiny ImageNet} & \multicolumn{2}{|c|}{Oxford 102 Flower}\\
 \cline{2-7}
 &FID         & IS & FID & IS  & FID* & IS* \\
 \hline
$-\min(0, -x-1)$            & $21.58\pm0.21$    & $7.43\pm0.04$     &$16.22\pm0.33$     & $\bf8.58\pm0.08$     &$9.72\pm0.51$     &$21.91\pm 0.18$ \\ 
\hline 
$x$                         & $19.64\pm0.23$    & $7.66\pm0.03$     &$18.81\pm 0.58$    &$8.20\pm 0.05$     &$9.74\pm0.63$        &$21.66\pm0.22$\\
\hline 
\rowcolor{LightCyan} $-\log(\sigma(-x))$         & $16.36\pm0.09$    &$\bf8.49\pm0.11$     &$\bf15.94\pm0.33$     &$8.42\pm0.04$     &$9.40\pm 0.49$        &$21.82\pm0.11$\\
\hline 
\rowcolor{LightCyan} $x+\sqrt{x^2+1}$            & $\bf15.76\pm0.13$ & $8.04\pm0.04$  &$16.83\pm0.41 $   &$8.35\pm0.09 $    &$\bf9.16 \pm 0.52$        & $\bf21.96\pm0.19$ \\ 
\hline 
$\exp(x)$                   & $19.82\pm0.13$    & $7.79\pm0.03$     &$20.45\pm0.15$   & $8.06\pm0.05$    &$9.90\pm0.72$        & $21.91\pm0.22 $ \\
\hline 
$-\log(\sigma(-x))+0.01x$   & $18.32\pm0.15$    & $7.75\pm0.04$     &$16.09\pm0.23$   & $8.47\pm0.10$    &$9.50\pm0.39$        & $21.91\pm0.20$ \\
\hline 
%$-\log(\sigma(-x))+x$  & $19.20\pm0.29$  & $7.75\pm0.05$ \\
%\hline 
\end{tabular}
}
\centering
\label{table2}
\vspace{-3pt}
\end{table}

Finally, we fix $\psi(x)=-x$ in the generator's objective and compare various objectives on unsupervised image generation tasks. The results of Inception Score \citep{improved_gan} and Frechet Inception Distance \citep{two_time_scale_gan} are presented in Table~\ref{table2}. We also include the hinge loss $\phi(x)=\varphi(-x)=-\min(0, -x-1)$ which used in \citep{sngan}. We use a classifier on Oxford 102 Flower Dataset for the evaluation of FID and Inception Score for results on Oxford 102.

The gradient of $\exp(x)$ varies significantly and we find it requires a small learning rate to avoid explosion. The objectives $x + \sqrt{x^2 + 1}$ and $-\log(\sigma(-x))$ %and $-\log(\sigma(-x))+ 0.01x$ 
achieve the best performances. This is probably because they have bounded gradient and reduce the gradient of well-identified points towards zero, which enables the discriminator to pay more attention to these ill-identified. Hinge loss $-\min(0, -x-1)$ does not lie in our proposed objective family and turns out to be unstable and performs unsatisfactory in same cases. %According to the experiments, we suggest using $-\log(\sigma(-x))+\alpha x$ with a small $\alpha$, which has non-vanishing gradient and may be more easy-to-use. 
We also plot the training curve in terms of FID in Figure~\ref{figure6}. 

Due to page limitation, we leave the details, visual results and more experiments in the Appendix. 
 %$\phi(x)=\varphi(-x)=-\log(\sigma(-x))+x$ which is the combination of the original GAN objective and Wasserstein distance, and 
%$x$ and $-\log(\sigma(-x))+x$ lie in the middle. 

\vspace{-5pt}
\section{Conclusion}
\vspace{-5pt}

%\zhiming{yuxuan, please handle the \bf warnings and errors in ``log''}

In this paper we have shown that the fundamental cause of failure in training of GANs stems from the unreliable $\nabla_{\!x} \ff(x)$. Specifically, when $P_g$ and $P_r$ are disjoint, $\nabla_{\!x} \ff(x)$ for fake sample $x \tsim P_g$ tells nothing about $P_r$, making it impossible for $P_g$ to converge to $P_r$. We have further demonstrated that even Wasserstein distance in a more compact dual form (still is equivalent to Wasserstein distance and can properly measure the distance between distributions) also suffers from the same problem when $P_g$ and $P_r$ are disjoint. This implies that ``whether a distance metric can properly measure the distance'' does not yet touch the key of non-convergence of GANs. We have highlighted in this paper that a well-defined distance metric %, or more generally, ``$P_g = P_r$ is the optimum'', 
does not necessarily guarantee the convergence of GANs because $\nabla_{\!x} \ff(x)$ %which the generator's update based on 
can be meaningless. 
Therefore, if we update the generator based on $\nabla_{\!x} \ff(x)$, we need to pay more attention on the design of $\ff(x)$. Furthermore, to address the aforementioned problem, we have proposed the Lipschitz-continuity condition as a general solution to make $\nabla_{\!x} \ff(x)$ reliable and ensure the convergence of GANs, which works well with a large family of GAN objectives. In addition, we have shown that in the overlapping case, $\nabla_{\!x} \ff(x)$ is also problematic which turns out to be an intrinsic cause of mode collapse in traditional GANs.

% \iffalse

% \subsection{Remarks}

\textbf{Remark 1:} It is worth noticing that $\dloss$ in our formulation is not derived from any well-established distance metric; it is derived based on Lipschitz-continuity condition. As we have shown that a well-established distance metric does not necessarily ensure the convergence, we hope our trial could shed light on the new direction of GANs. 

\textbf{Remark 2:} Though the objective of generator is not the focus of this paper, our analysis indicates that the minimax in terms of $\psi$ in Eq.~(\ref{eq_gan_formulation}) is not essential, because it only influences the scale of the gradient. Nevertheless, the function $\psi$ does influence the updating of the generator and we leave the detailed investigation as future work. 

%Another example that has a theoretically meaningful $\nabla_{\!x} \ff(x)$ is Coulomb GAN \citep{coulombgan}, which is also derived neither from minimax game nor from well-defined distance metric. 

% \fi

%\vspace{-2pt}
\section{Related work} 

% \subsection{Clarification: relation to Wasserstein GAN}

%This work is substantially different from Wasserstein GAN \citep{wgan}. Though the final solution in Wasserstein GAN is sound, its main argument is off the point. 
The main argument in Wasserstein GAN \citep{wgan} for the benefit of Wasserstein distance is that it can properly measure the distance between two distributions no matter whether their supports are disjoint. %, i.e., Wasserstein distance is a good distance metric. 
However, according to our analysis, %in Section~\ref{sec_not_enough}, 
a proper distance metric does not necessarily ensure the convergence of GAN and the Lipschitz-continuity condition in Wasserstein GAN is crucial for ensuring its convergence. More specifically, we have shown that Wasserstein distance in the dual form with compacted constraint also cannot provide meaningful gradient through $\nabla_{\!x} \ff(x)$. % Though it can properly measure the distance no matter whether their supports are disjoint, as we showed in , it also does not guarantee the convergence. 
%the final solution in Wasserstein GAN is sound as we prove. %We argue in this paper that a well-defined distance metric, including Wasserstein distance, is not enough for ensure the convergence of GANs and prove that Lipschitz-continuity condition is a general solution to non-convergence problem. 

In addition, we have shown that Lipschitz-continuity condition is able to ensure the convergence of GANs for a family of GAN objectives, which is not restricted to Wasserstein distance. For example, Lipschitz-continuity condition is also introduced to original GAN in \citep{sngan,kodali2017convergence} and shows improvements on the quality of generated samples. As a matter of fact, the original GAN objective $\phi(x)=\varphi(-x)=-\log(\sigma(-x))$ is another instance in our proposed family. Thus our analysis explains why and how it works. 

\cite{fedus2017many} also argued that divergence is not the primary guide of the training of GANs and pointed out that the gradient does not necessarily related to the divergence. However, they tended to believe that original GAN with non-saturating generator objective can somehow work. As we have proved before, given the optimal $\ff$, the original GAN has no guarantee on its convergence. And we argue that practical work scenarios benefit from parameter-tuning.
%Given that their arguments are mainly supported via experiments, we believe our conclusion that derived from theoretically analysis is more sound. %Towards their finding of ``gradient penalty improves original GANs'', our theory is more general and explicitly tells under what condition and how it helps. % which is contradict to our conclusion.

Some work study the suboptimal $f(x)$ \citep{mescheder2017numerics,mescheder2018training,arora2017generalization}, which is another important direction for understanding GANs theoretically. While the behaviors of suboptimal can be slightly different, we think the optimal $\ff(x)$ should well-behave in the first place. %\weinan{what do you mean by `at the first place'?}\zhiming{at first?in the first place?}

Researchers also found that applying Lipschitz-continuity condition to the generator also benefits the quality of generated samples \citep{zhang2018self, odena2018generator}. In addition, researchers also investigated implementation of Lipschitz-continuity condition in GANs \citep{wgangp,wganlp,sngan}. However, this branch of related work is out of scope of the discussion in this paper.

%\paragraph{Clarification} Because there does exist a strong connection between our work and the previous work Wasserstein GAN \citep{wgan}, we here further clear several points to make it less confusing: (\rnum{1}) we show that there exists a broad family of GANs objectives that combines with Lipschitz-continuity condition guaranteeing the convergence while Wasserstein distance is only one of its instances; (\rnum{2}) Lipschitz-continuity condition is stronger than the necessary one in the dual form of Wasserstein distance; (\rnum{3}) we show that the pure Wasserstein distance without the unnecessary stronger Lipschitz-continuity condition will also fail; (\rnum{4}) our contribution also include the investigation on the cause of failure in traditional GANs, in terms of $\ff(x)$ and $\nabla_{\!x} \ff(x)$. 

\bibliography{reference}
\bibliographystyle{iclr2019}
\newpage
\appendix

\section{Experiments: the influence of hyper-Parameters}
\label{hyper_para}
The value surface of traditional GANs is highly depended on the network and training hyper-parameters. 
We here plot the value surface of Least-Square GAN with various hyper-parameter settings, to give directly impression on how these parameters influence GANs training. Not very strictly, but our empirical code is: (\rnum{1}) a low-capacity network tends to learn a simple surface; (\rnum{2}) SGD tends to learn a more complex surface than ADAM; (\rnum{3}) large learning rate tends to learns a simpler surface than small learning rate; (\rnum{4}) highly nonlinear activation function tends to result in more complex value surface.

Though hyper-parameters tuning could possibly make traditional GANs work, it also makes these GANs hard to use, sensitive to hyper-parameters and easily broken.

\vspace{-0pt}
\begin{figure}[!htbp]
	\begin{subfigure}{0.33\linewidth}
		\vspace{-0pt}
		\centering
		\includegraphics[width=0.90\columnwidth]{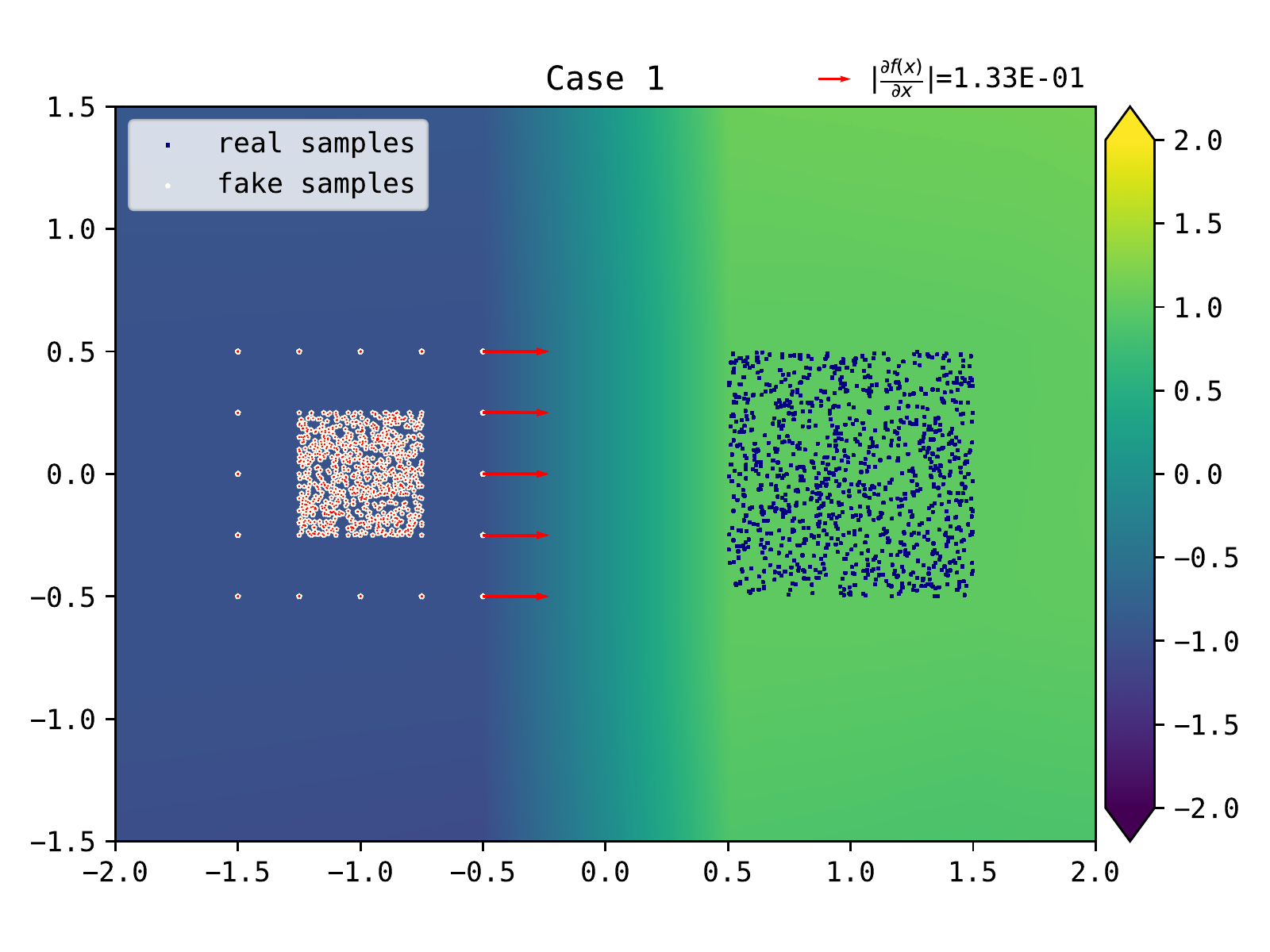}
		%\vspace{-10pt}
		%\caption{Activation: RELU; Optimizer: ADAM; Small LR}
		\label{fig_case1_lsgan_adam_1e-2_relu_1024*1_toy}
	\end{subfigure}
	\begin{subfigure}{0.33\linewidth}
		\vspace{-0pt}
		\centering
		\includegraphics[width=0.90\columnwidth]{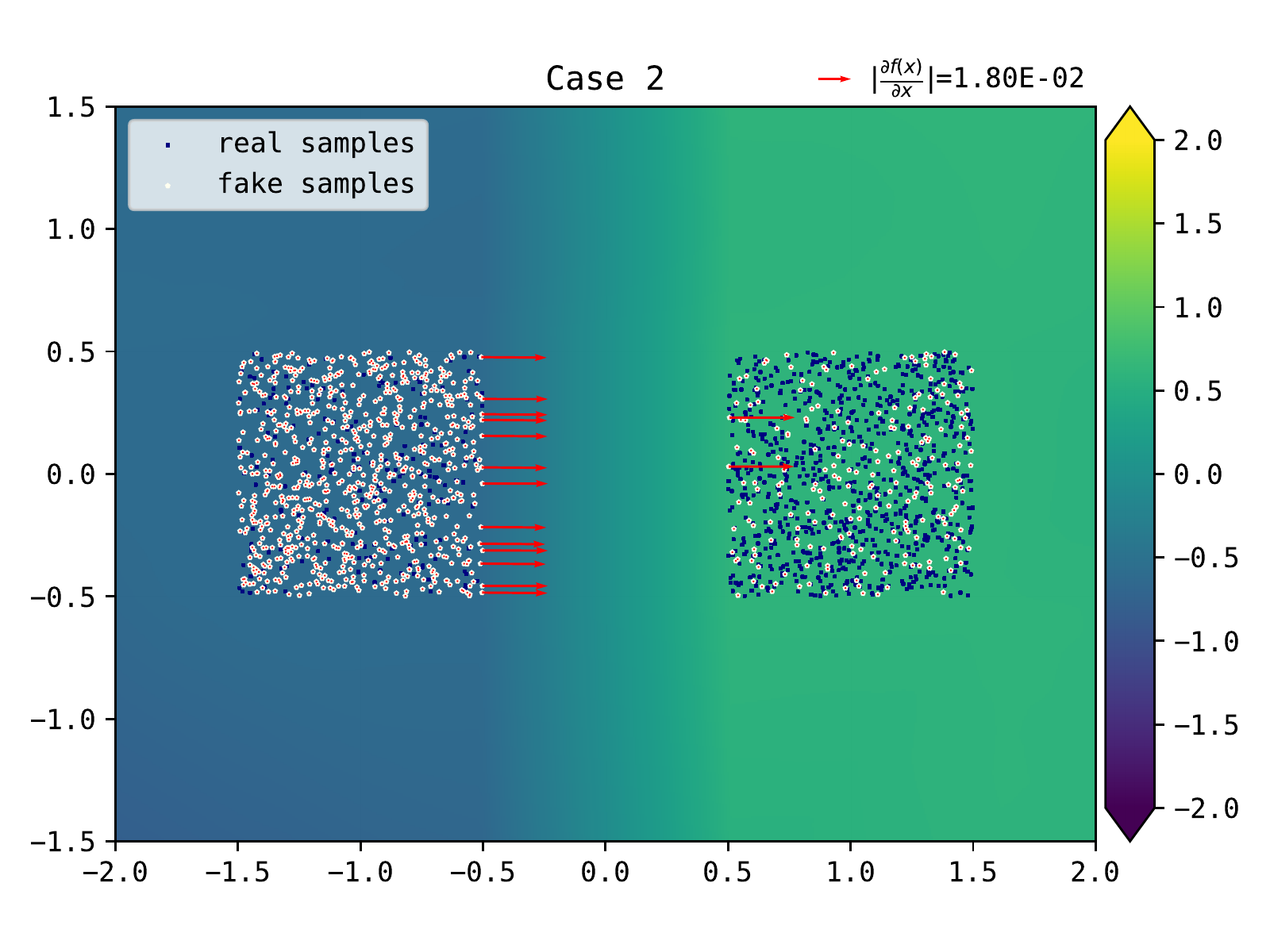}
		%\vspace{-10pt}
		%\caption{Activation: RELU; Optimizer: ADAM; Small LR}
		\label{fig_case2_lsgan_adam_1e-2_relu_1024*1_toy}
	\end{subfigure}
	\begin{subfigure}{0.33\linewidth}
		\vspace{-0pt}
		\centering
		\includegraphics[width=0.90\columnwidth]{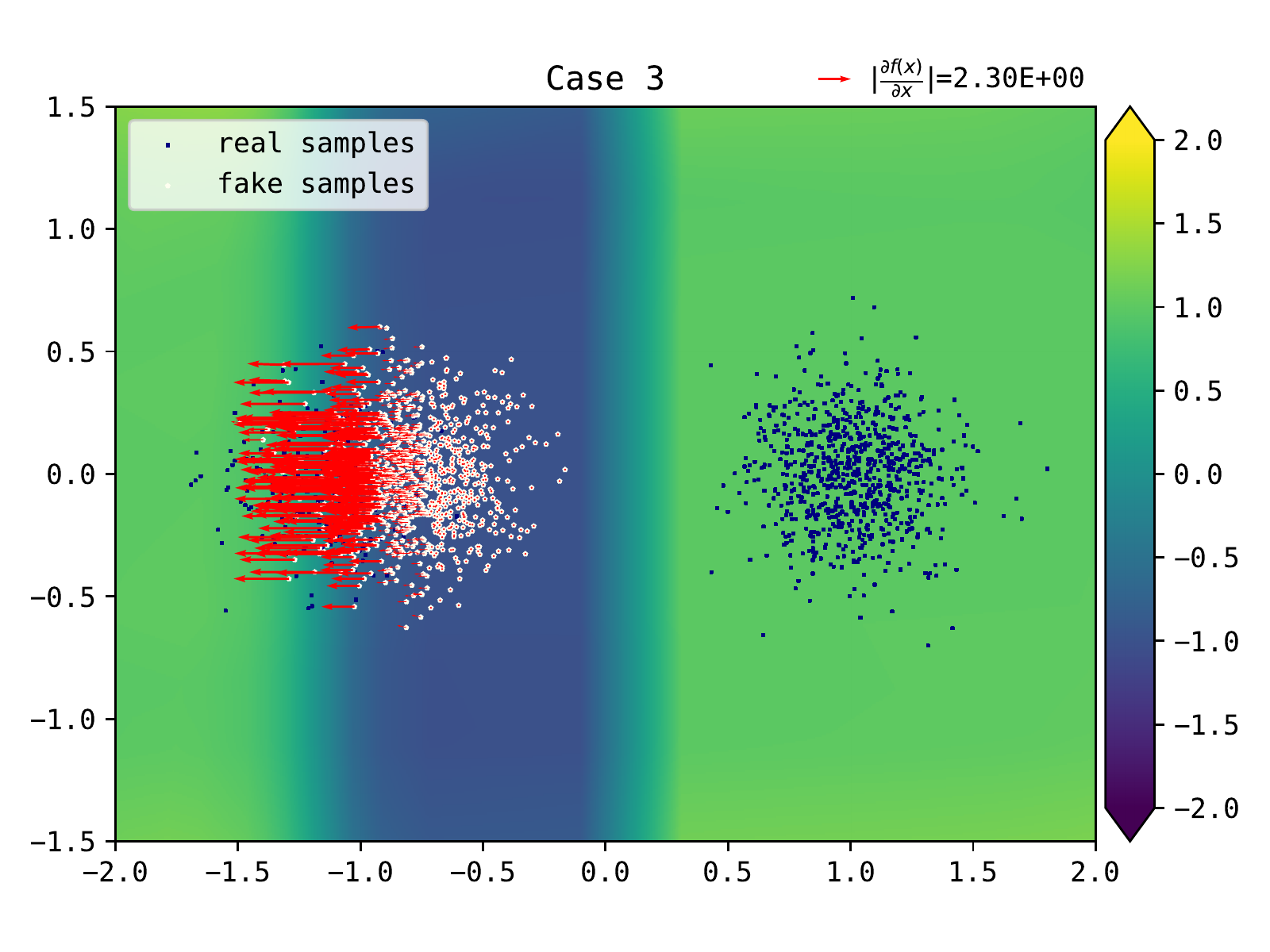}
		%\vspace{-10pt}
		%\caption{Activation: RELU; Optimizer: ADAM; Small LR}
		\label{fig_case3_lsgan_adam_1e-2_relu_1024*1_toy}
	\end{subfigure}
	\caption{ADAM optimizer with lr=1e-2, beta1=0.0, beta2=0.9. MLP with RELU activations, \#hidden units=1024, \#layers=1.}
\end{figure}
\vspace{-10pt}
\begin{figure}[!htbp]
	\vspace{-0pt}
	\begin{subfigure}{0.33\linewidth}
		\vspace{-0pt}
		\centering
		\includegraphics[width=0.90\columnwidth]{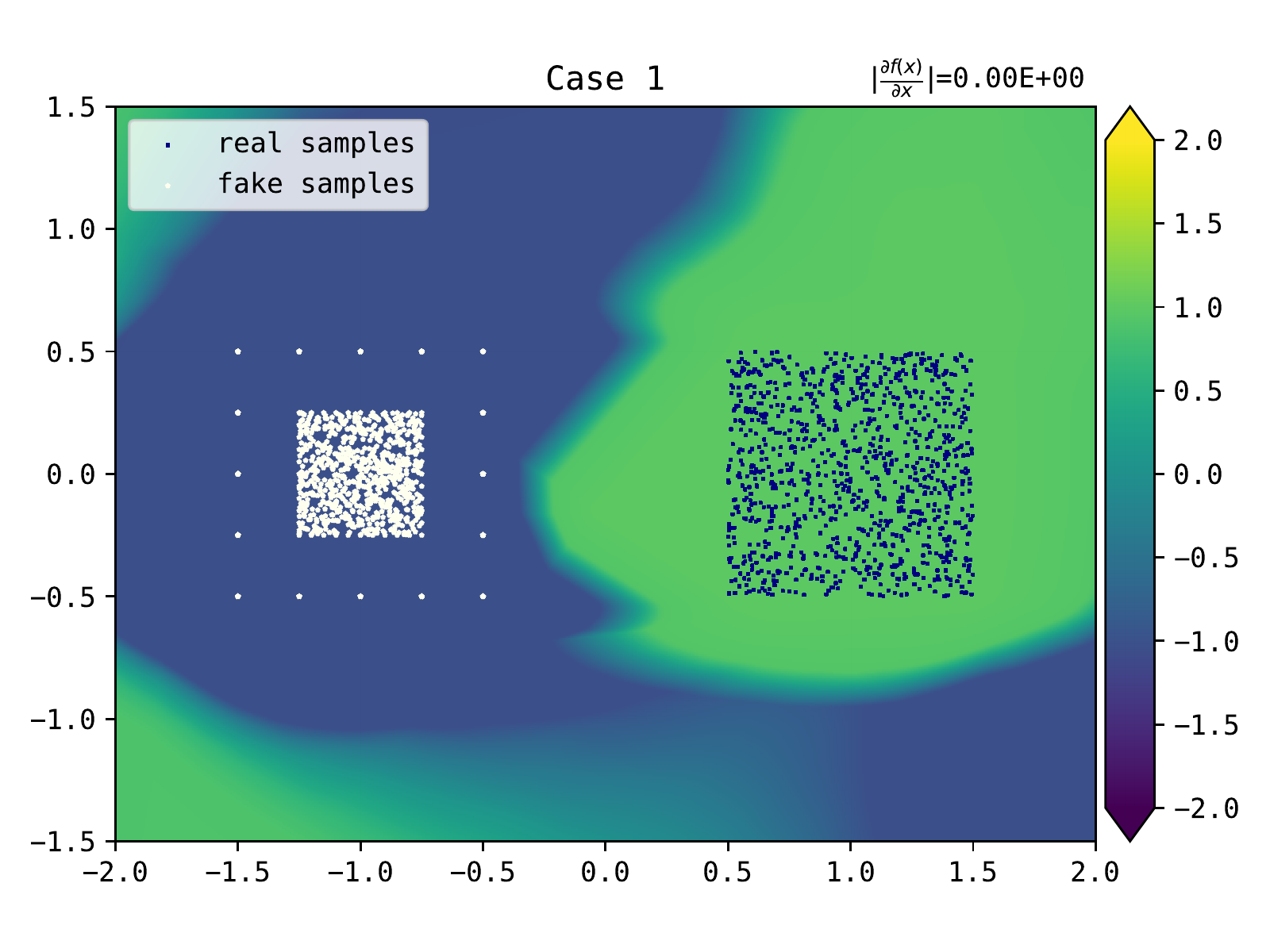}
		%\vspace{-10pt}
		%\caption{Activation: RELU; Optimizer: ADAM; Small LR}
		\label{fig_case1_lsgan_adam_1e-2_relu_1024*4_toy}
	\end{subfigure}
	\begin{subfigure}{0.33\linewidth}
		\vspace{-0pt}
		\centering
		\includegraphics[width=0.90\columnwidth]{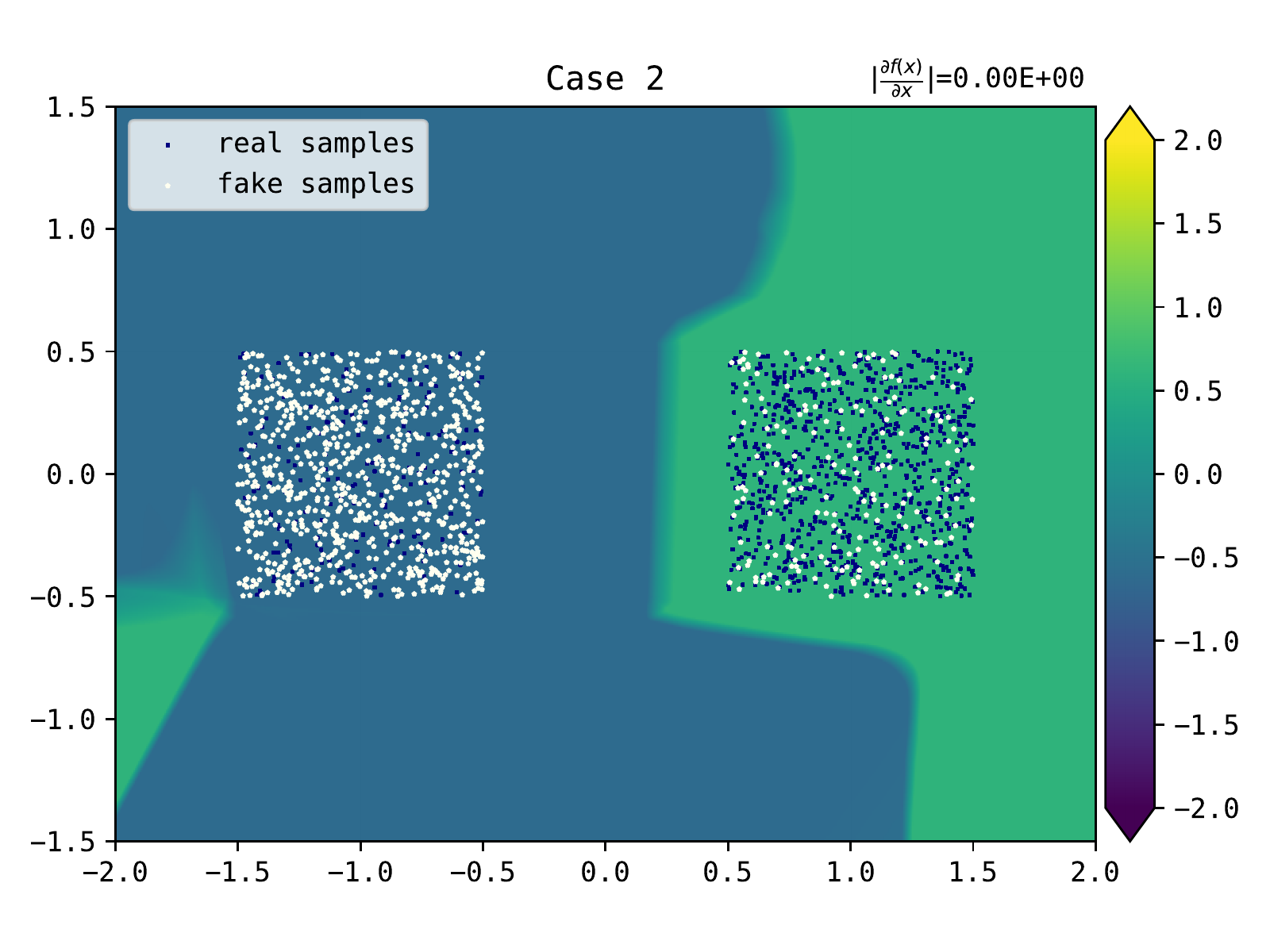}
		%\vspace{-10pt}
		%\caption{Activation: RELU; Optimizer: ADAM; Small LR}
		\label{fig_case2_lsgan_adam_1e-2_relu_1024*4_toy}
	\end{subfigure}
	\begin{subfigure}{0.33\linewidth}
		\vspace{-0pt}
		\centering
		\includegraphics[width=0.90\columnwidth]{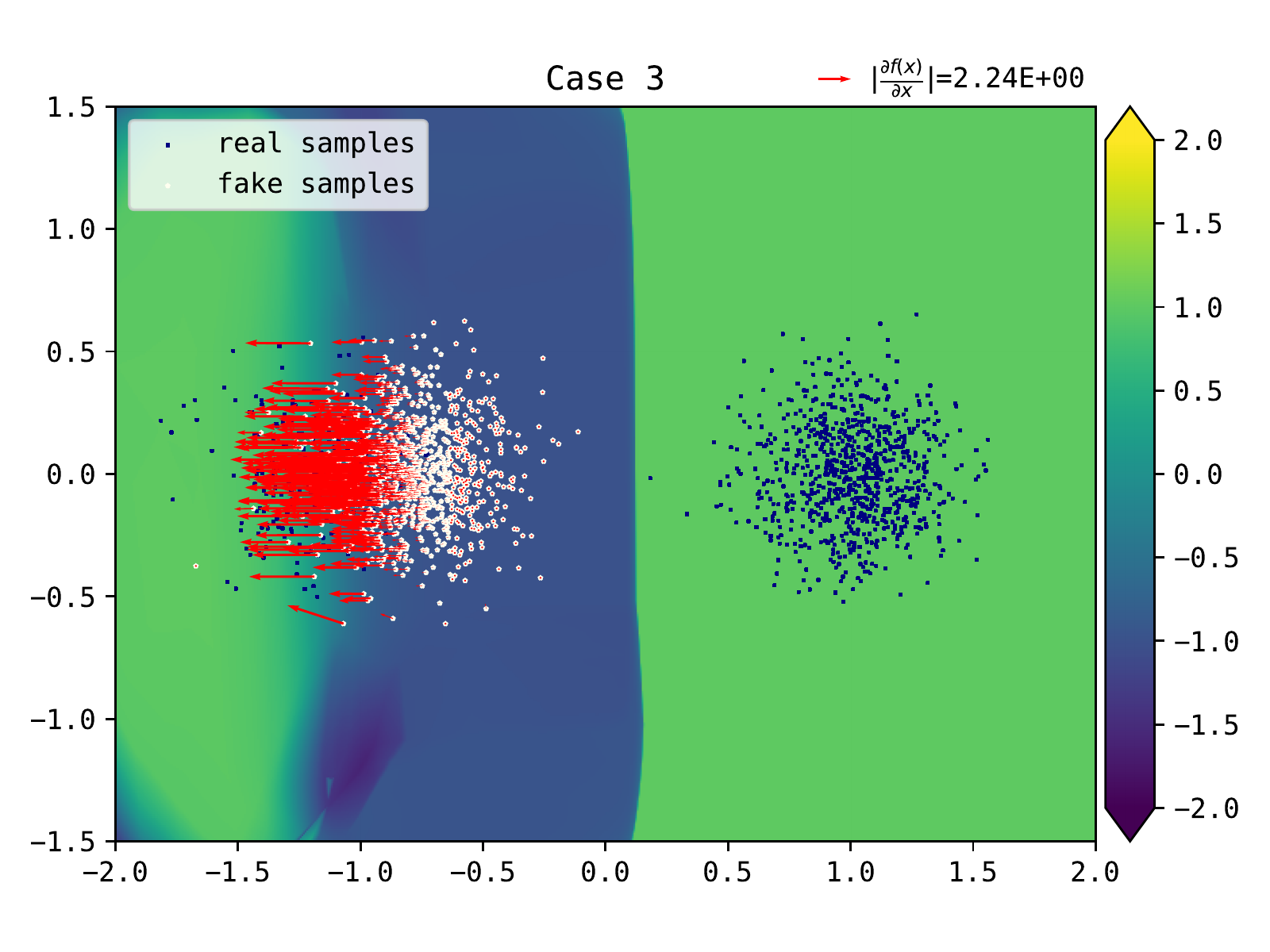}
		%\vspace{-10pt}
		%\caption{Activation: RELU; Optimizer: ADAM; Small LR}
		\label{fig_case3_lsgan_adam_1e-2_relu_1024*4_toy}
	\end{subfigure}
	\caption{ADAM optimizer with lr=1e-2, beta1=0.0, beta2=0.9. MLP with RELU activations, \#hidden units=1024, \#layers=4.}
\end{figure}
\vspace{-10pt}
\begin{figure}[!htbp]
	\vspace{-0pt}
	\begin{subfigure}{0.33\linewidth}
		\vspace{-0pt}
		\centering
		\includegraphics[width=0.90\columnwidth]{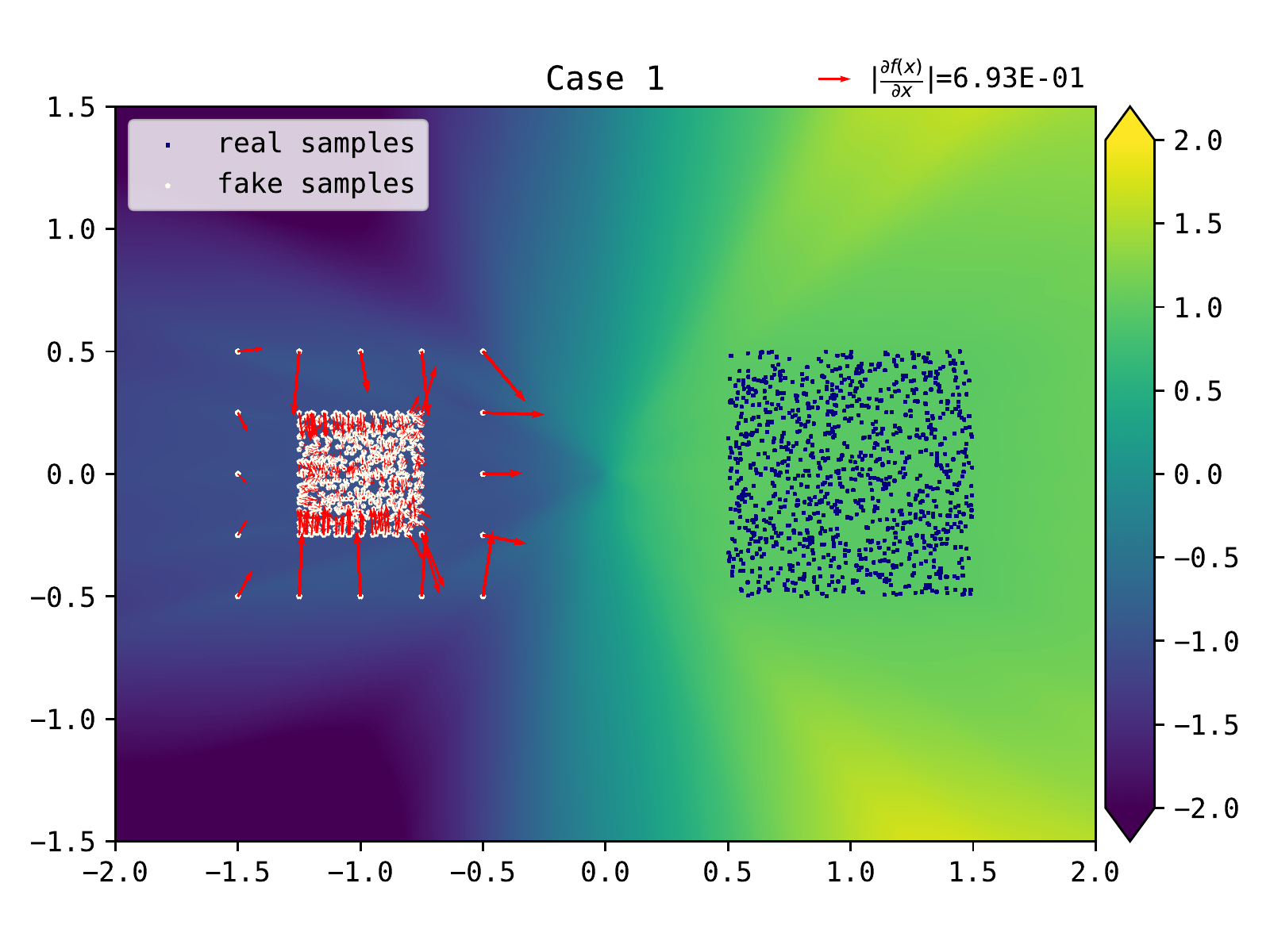}
		%\vspace{-10pt}
		%\caption{Activation: RELU; Optimizer: ADAM; Small LR}
		\label{fig_case1_lsgan_adam_1e-5_relu_1024*4_toy}
	\end{subfigure}
	\begin{subfigure}{0.33\linewidth}
		\vspace{-0pt}
		\centering
		\includegraphics[width=0.90\columnwidth]{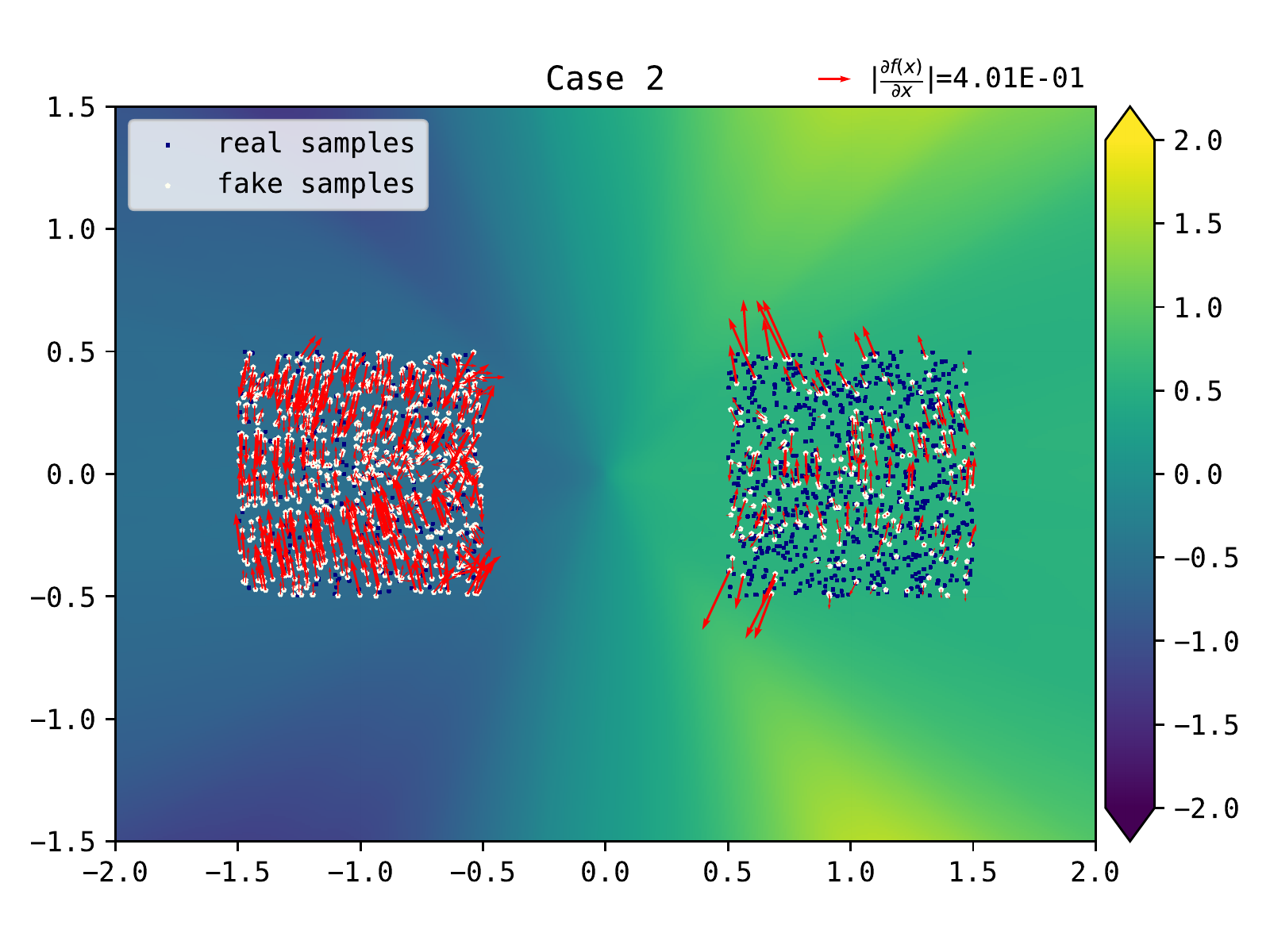}
		%\vspace{-10pt}
		%\caption{Activation: RELU; Optimizer: ADAM; Small LR}
		\label{fig_case2_lsgan_adam_1e-5_relu_1024*4_toy}
	\end{subfigure}
	\begin{subfigure}{0.33\linewidth}
		\vspace{-0pt}
		\centering
		\includegraphics[width=0.90\columnwidth]{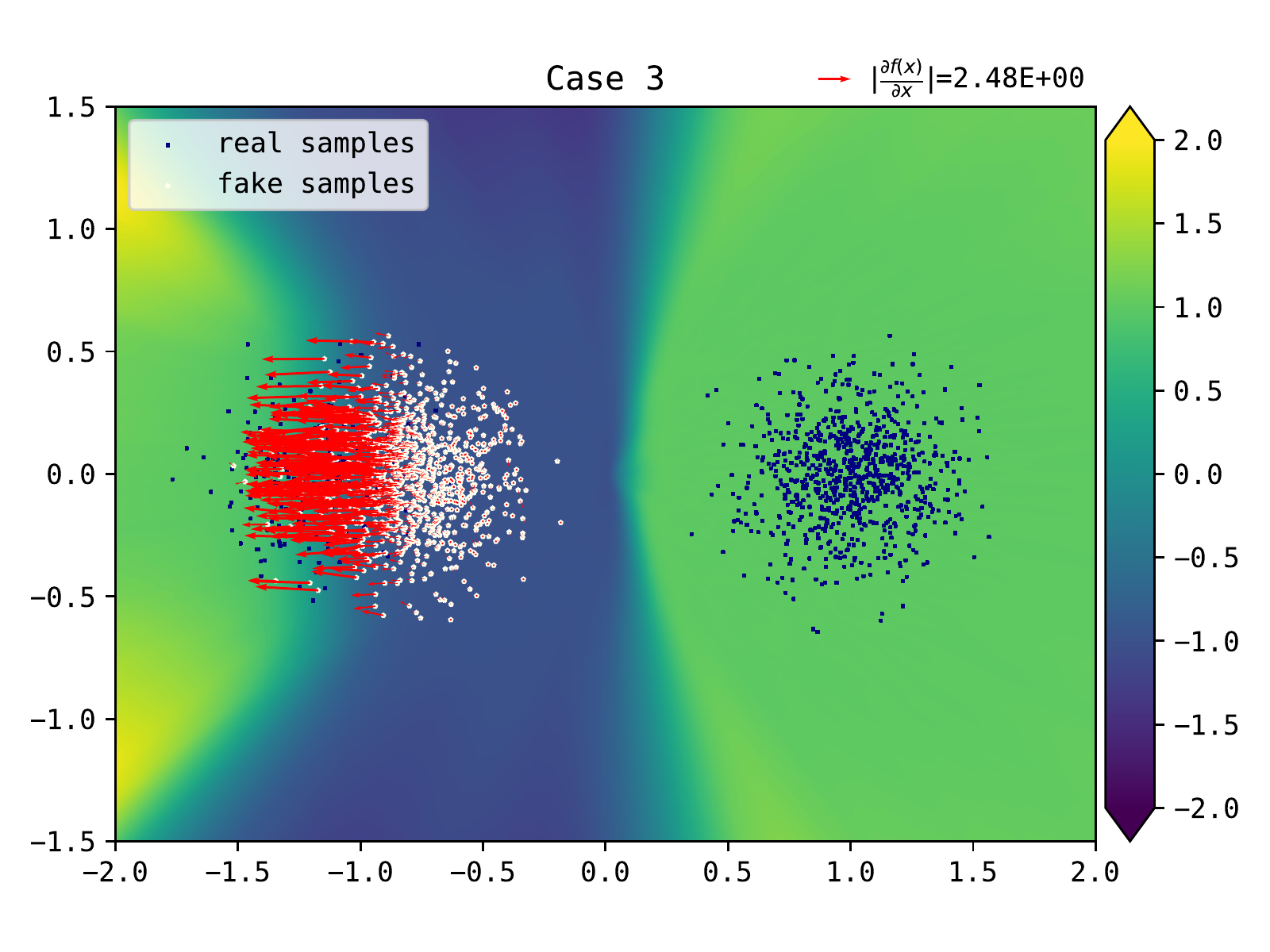}
		%\vspace{-10pt}
		%\caption{Activation: RELU; Optimizer: ADAM; Small LR}
		\label{fig_case3_lsgan_adam_1e-5_relu_1024*4_toy}
	\end{subfigure}
	\caption{ADAM optimizer with lr=1e-5, beta1=0.0, beta2=0.9. MLP with RELU activations, \#hidden units=1024, \#layers=4.}
\end{figure}
\vspace{-20pt}
\begin{figure}[!htbp]
	\vspace{-0pt}
	\begin{subfigure}{0.33\linewidth}
		\vspace{-0pt}
		\centering
		\includegraphics[width=0.90\columnwidth]{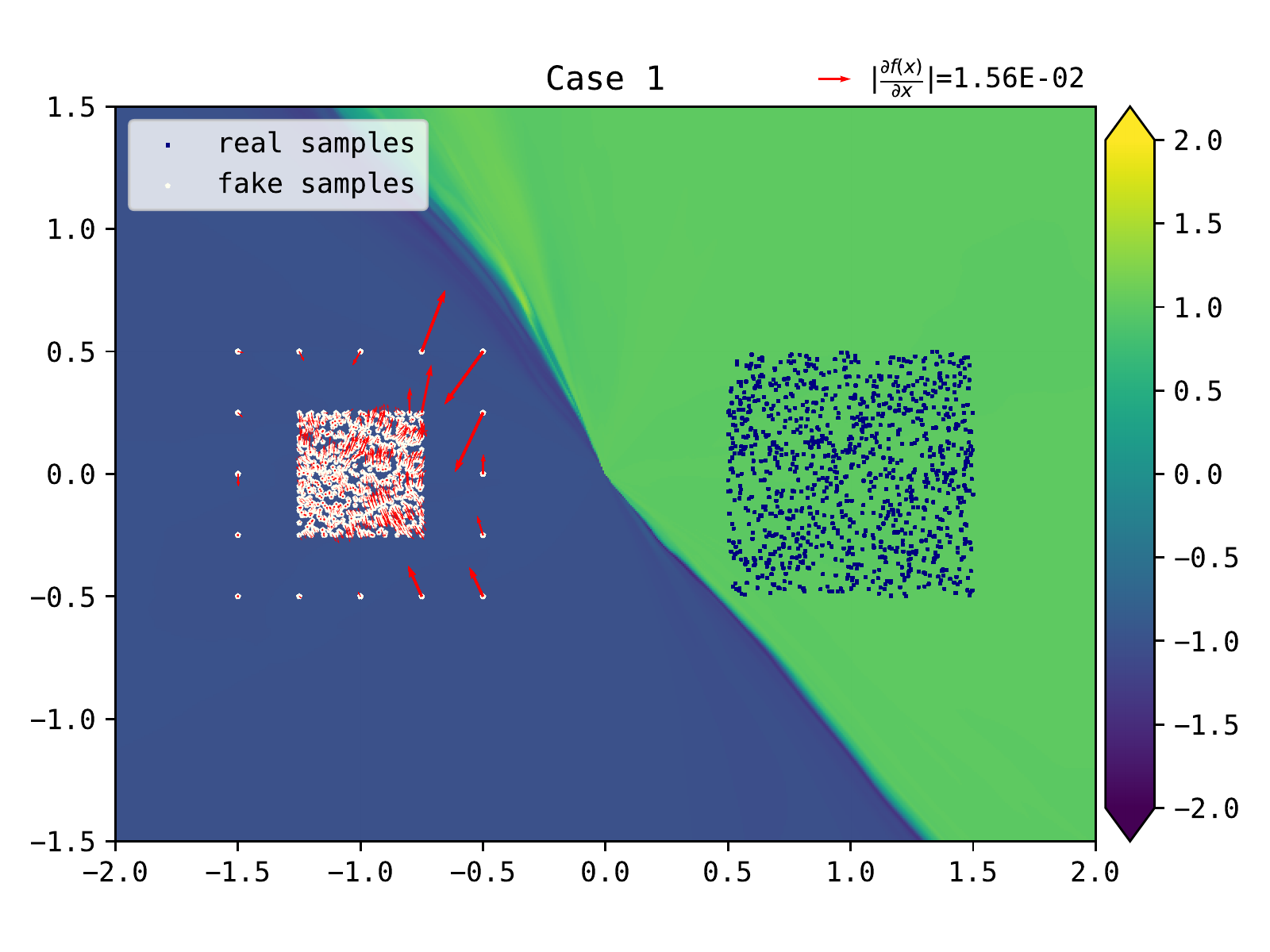}
		%\vspace{-10pt}
		%\caption{Activation: RELU; Optimizer: ADAM; Small LR}
		\label{fig_case1_lsgan_sgd_1e-3_relu_128*64_toy}
	\end{subfigure}
	\begin{subfigure}{0.33\linewidth}
		\vspace{-0pt}
		\centering
		\includegraphics[width=0.90\columnwidth]{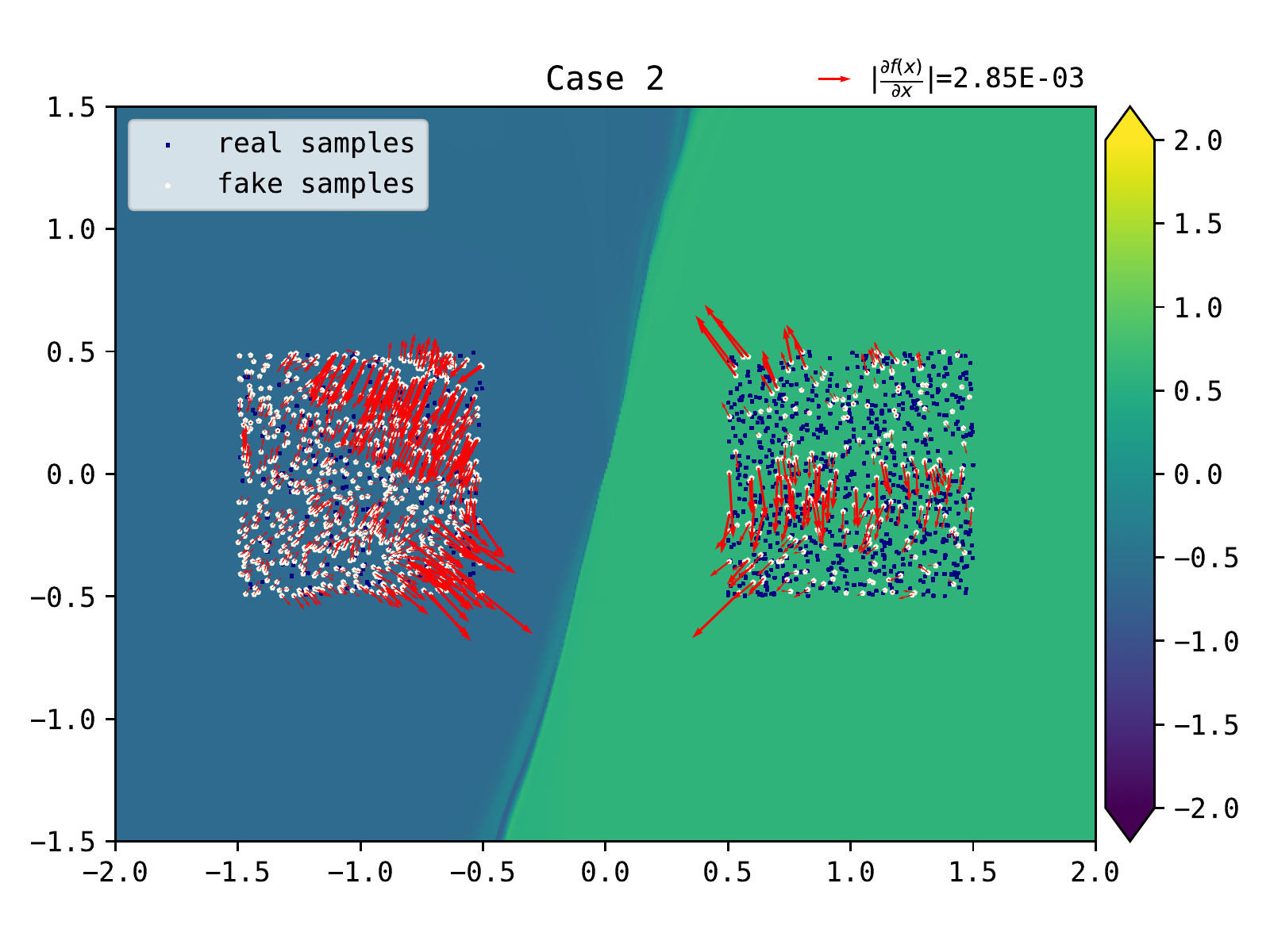}
		%\vspace{-10pt}
		%\caption{Activation: RELU; Optimizer: ADAM; Small LR}
		\label{fig_case2_lsgan_sgd_1e-3_relu_128*64_toy}
	\end{subfigure}
	\begin{subfigure}{0.33\linewidth}
		\vspace{-0pt}
		\centering
		\includegraphics[width=0.90\columnwidth]{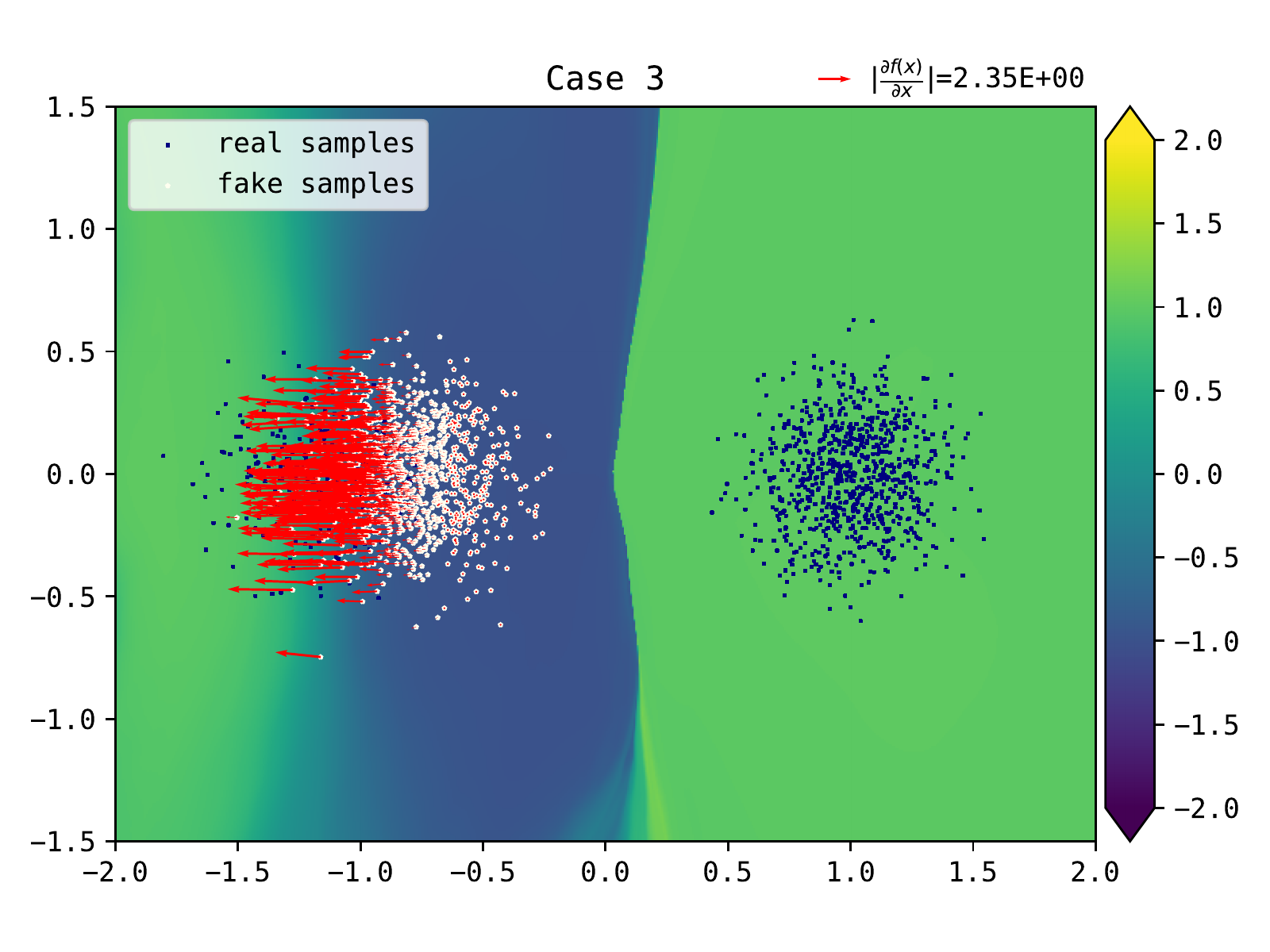}
		%\vspace{-10pt}
		%\caption{Activation: RELU; Optimizer: ADAM; Small LR}
		\label{fig_case3_lsgan_sgd_1e-3_relu_128*64_toy}
	\end{subfigure}
	\caption{SGD optimizer with lr=1e-3. MLP with SELU activations, \#hidden units=128, \#layers=64.}
\end{figure}
\vspace{-10pt}
\begin{figure}[!htbp]
	\vspace{-0pt}
	\begin{subfigure}{0.33\linewidth}
		\vspace{-0pt}
		\centering
		\includegraphics[width=0.90\columnwidth]{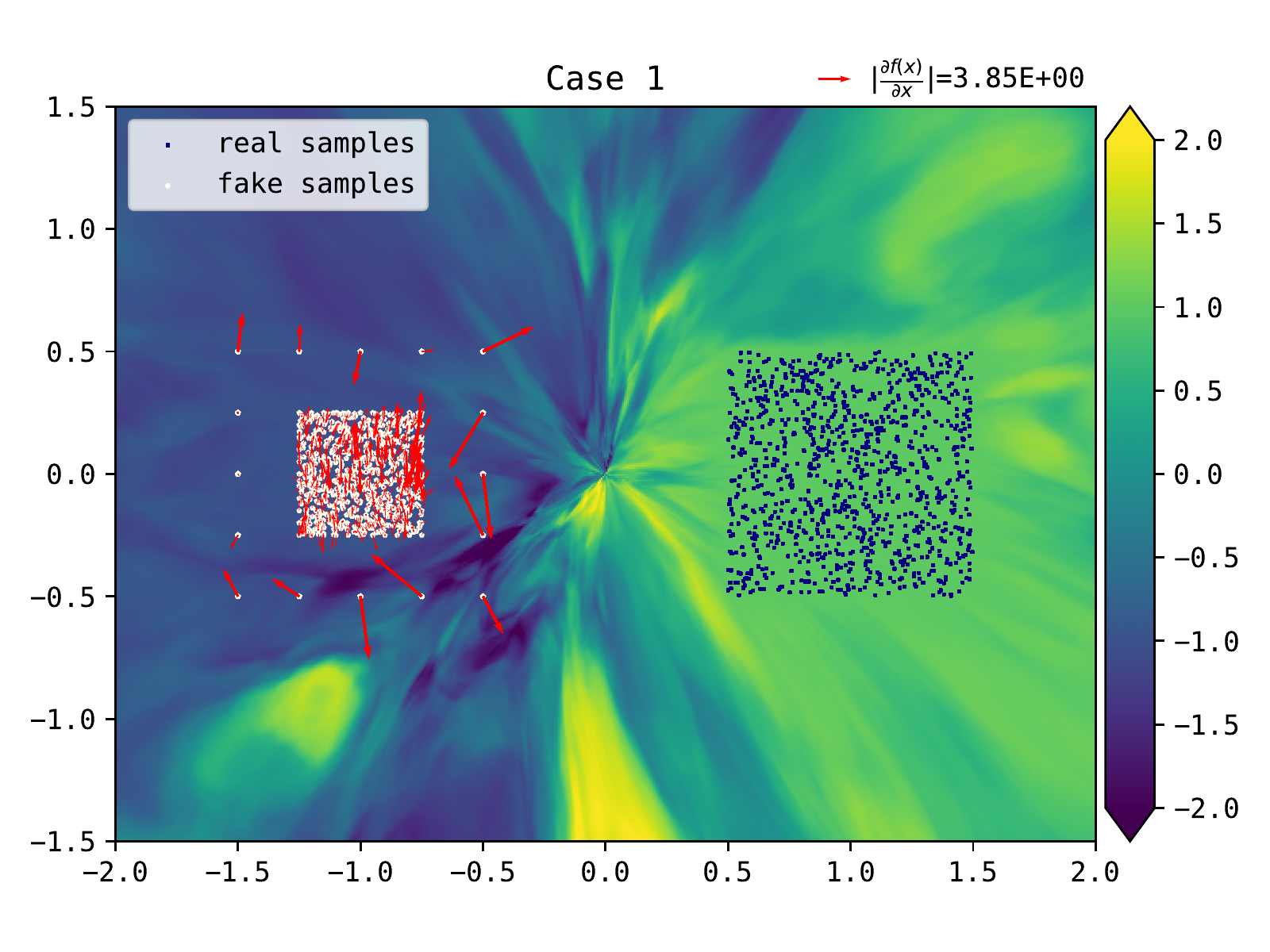}
		%\vspace{-10pt}
		%\caption{Activation: RELU; Optimizer: ADAM; Small LR}
		\label{fig_case1_lsgan_sgd_1e-4_relu_128*64_toy}
	\end{subfigure}
	\begin{subfigure}{0.33\linewidth}
		\vspace{-0pt}
		\centering
		\includegraphics[width=0.90\columnwidth]{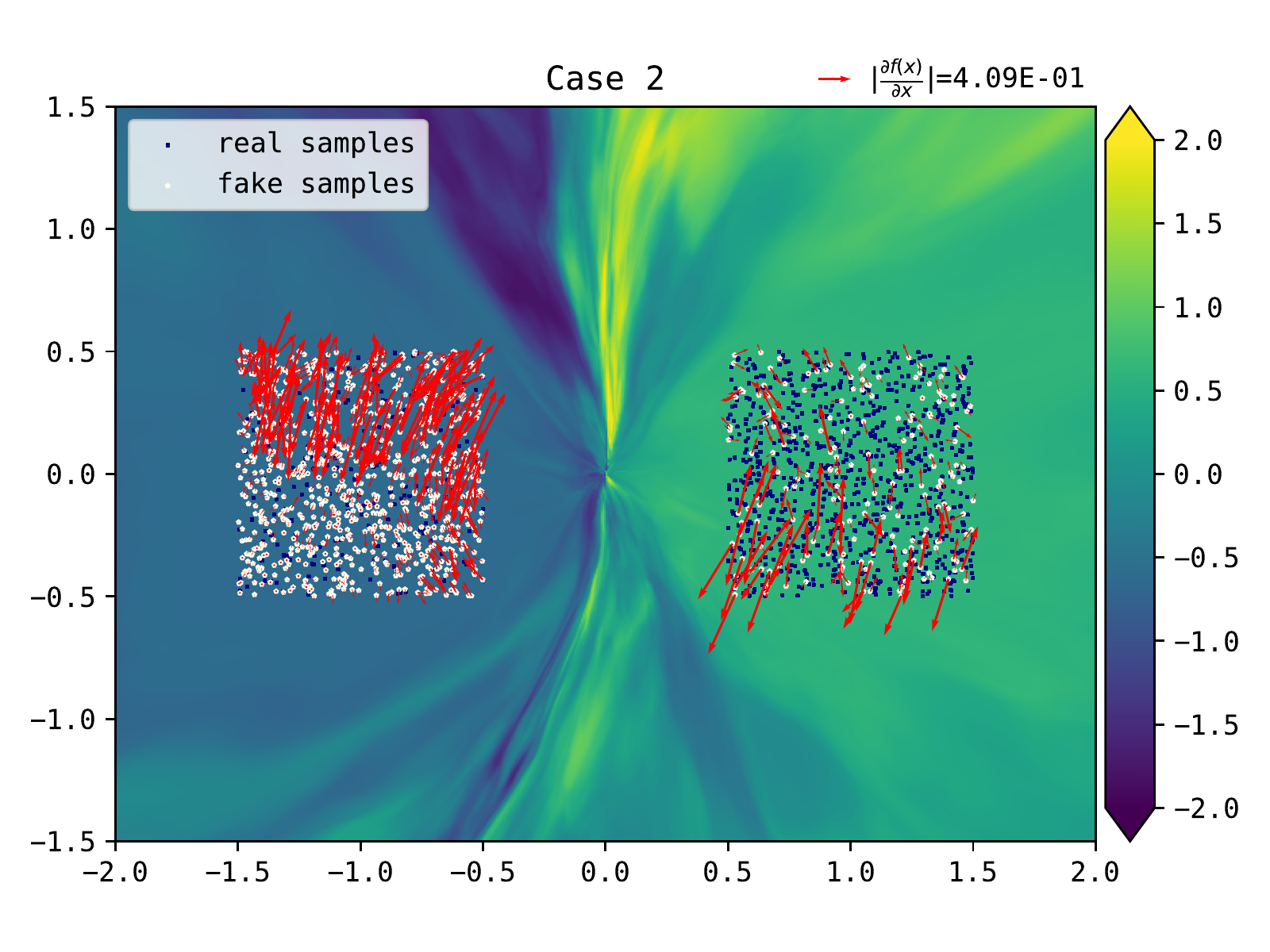}
		%\vspace{-10pt}
		%\caption{Activation: RELU; Optimizer: ADAM; Small LR}
		\label{fig_case2_lsgan_sgd_1e-4_relu_128*64_toy}
	\end{subfigure}
	\begin{subfigure}{0.33\linewidth}
		\vspace{-0pt}
		\centering
		\includegraphics[width=0.90\columnwidth]{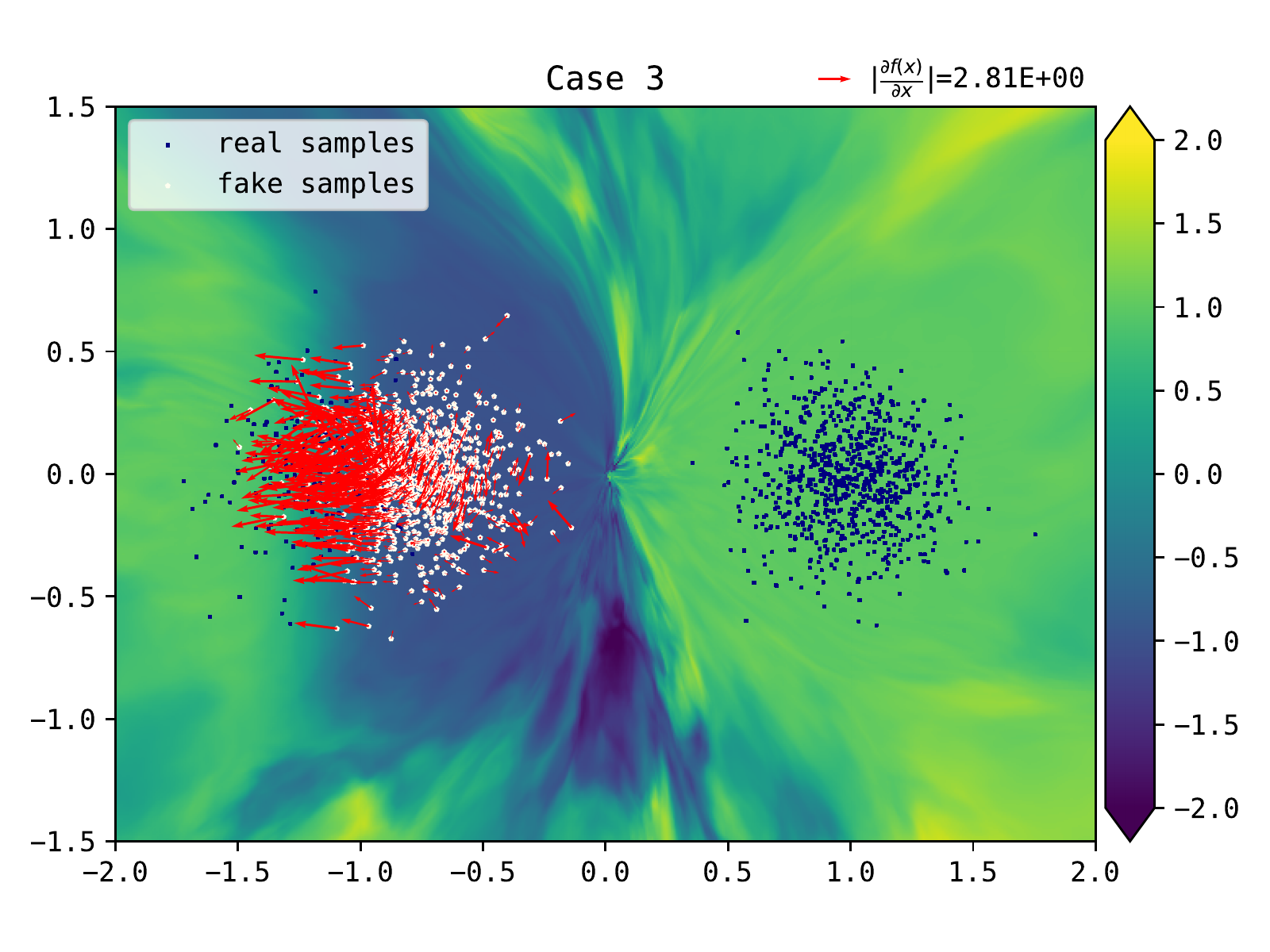}
		%\vspace{-10pt}
		%\caption{Activation: RELU; Optimizer: ADAM; Small LR}
		\label{fig_case3_lsgan_sgd_1e-4_relu_128*64_toy}
	\end{subfigure}
	\caption{SGD optimizer with lr=1e-4. MLP with SELU activations, \#hidden units=128, \#layers=64.}
\end{figure}

\section{Various  $\phi(x)$  and  $\varphi(x)$ that satisfies Eq.~\ref{eq_solvable}}

For Lipschitz-continuity condition based GANs, $\phi(x)$ and $\varphi(x)$ are required to satisfy Eq.~\ref{eq_solvable}. Eq.~(\ref{eq_solvable}) is actually quite general and there exists many other instances, e.g., $\phi(x)=\varphi(-x)=x$, $\phi(x)=\varphi(-x)=-\log(\sigma(-x))$, $\phi(x)=\varphi(-x)=x+\sqrt{x^2+1}$, $\phi(x)=\varphi(-x)=\exp(x)$, etc. We plot these instances of $\phi(x)$ and $\varphi(x)$ in Figure \ref{fig_function_curve}. 

Generally, it is feasible to set $\phi(x)=\varphi(-x)$. Note that rescaling and offsetting along the axes are trivial operation to found more $\phi(x)$ and $\varphi(x)$ within a function class, and linear combination of two or more $\phi(x)$ or $\varphi(x)$ from different function classes also keep satisfying Eq.~\ref{eq_solvable}. 

\begin{figure}[!htbp]
	\centering
	\vspace{-0pt}
	\includegraphics[width=1.0\columnwidth]{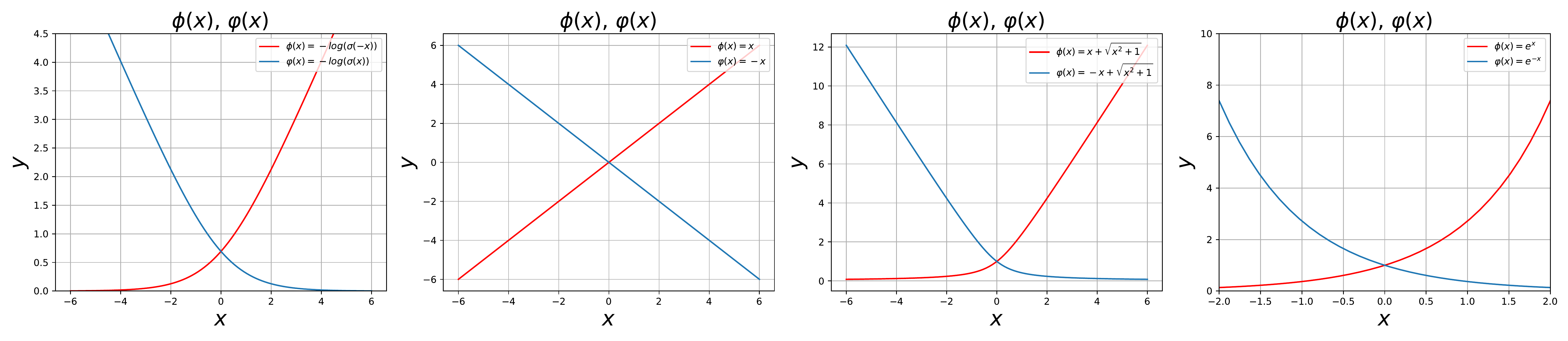}
	\vspace{-8pt}
	\caption{Various $\phi(x)$ and $\varphi(x)$ that satisfies Eq.~\ref{eq_solvable}.}
	\label{fig_function_curve} 
\end{figure}

\section{Generated images and Training Curves}
Training curves on Tiny ImageNet are plotted in Figure \ref{fig:Tinycurve}. And comparisons on the visual results among different objectives are also provided in Figure \ref{oxford}, Figure \ref{cifar10} and Figure \ref{imagenet}.    
% \begin{figure}[t]
% 	\centering
% 	\vspace{-30pt}
% 	\centering
% 	\includegraphics[width=0.80\columnwidth]{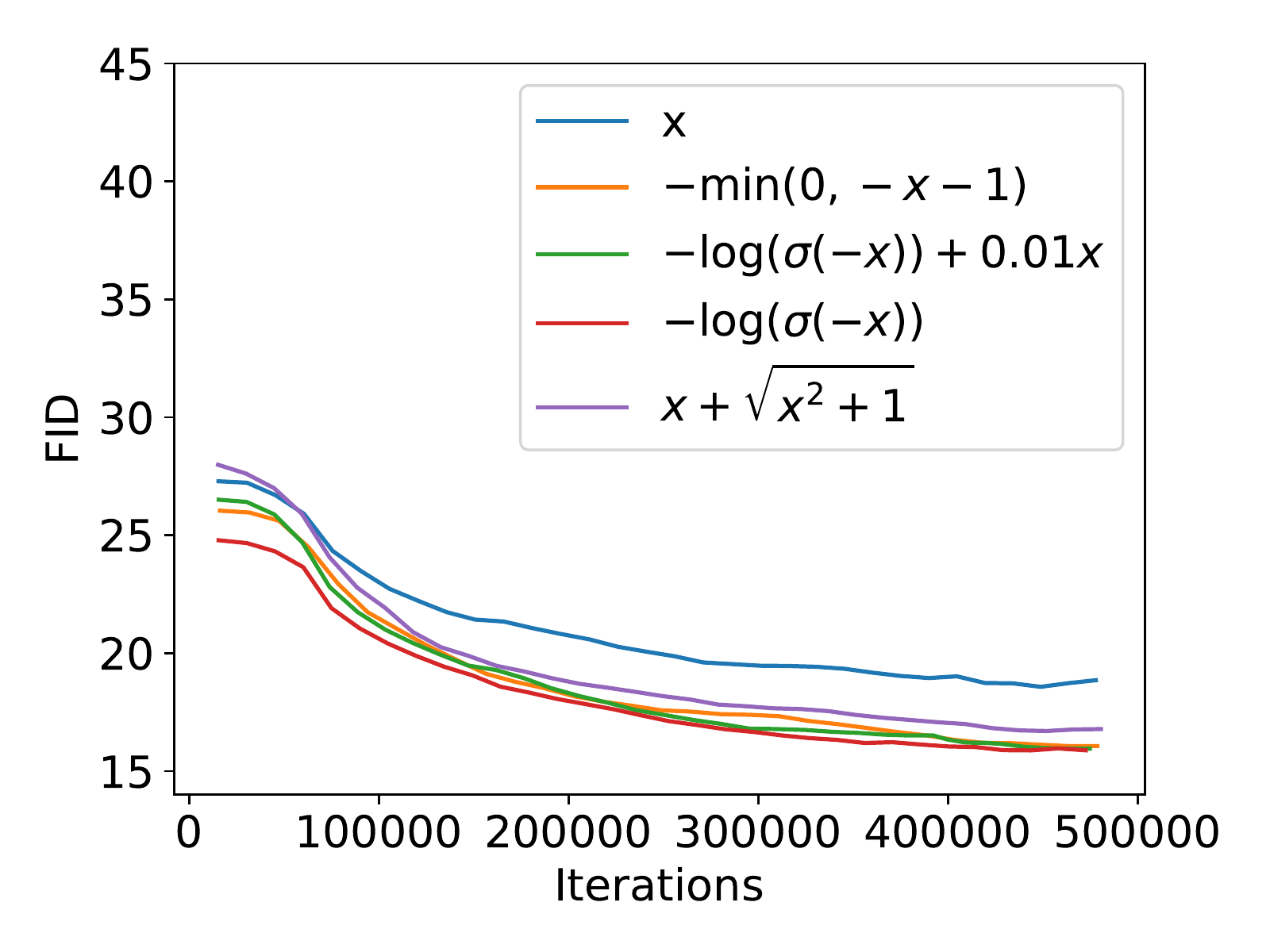}
% 	\vspace{-5pt}
% 	\caption{FID training curves of different objective functions on Tiny Imagenet Dataset.}\label{fig:Tinycurve}
% \end{figure}
% \newpage
\begin{figure}[!htbp]
	\centering
	\begin{subfigure}{0.45\linewidth}
		\centering
		\includegraphics[width=0.95\columnwidth]{figures/training_curve_tinyscore_fid.pdf}
		\vspace{-3pt}
		\caption{FID training curve}
		\label{fig:Tinycurvefid}
	\end{subfigure}
	\begin{subfigure}{0.45\linewidth}
		\centering	
		\includegraphics[width=0.95\columnwidth]{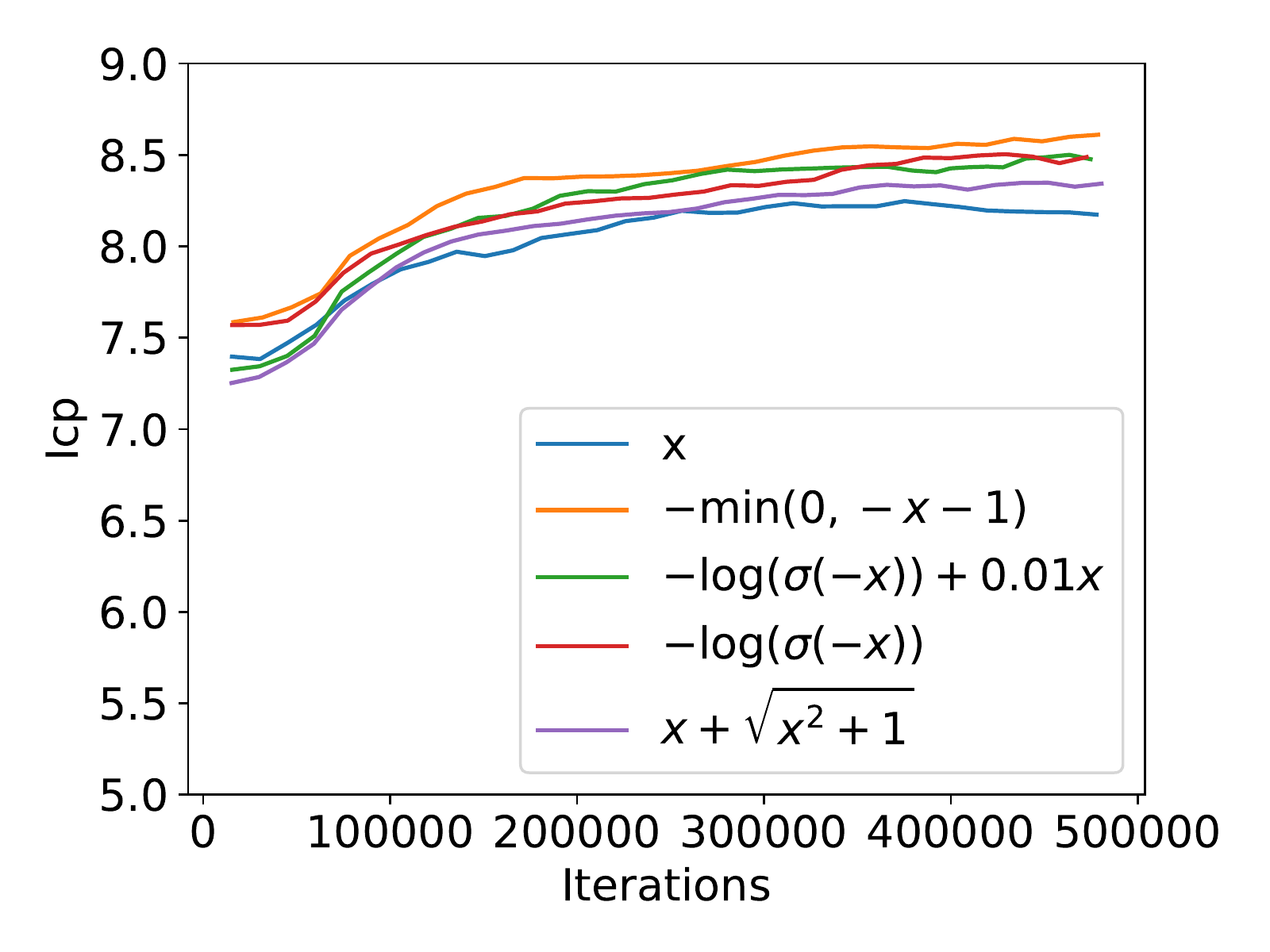}
		\vspace{-3pt}
		\caption{Inception Score training curve}
		\label{fig:Tinycurveicp}
	\end{subfigure}	
	\vspace{-5pt}
	\caption{FID and ICP (Inception Score) training curves of different objectives on Tiny ImageNet.}
	\label{fig:Tinycurve}
	\vspace{-3pt}
\end{figure}

\begin{figure}[!htbp]
	\centering
% 	\vspace{-20pt}
	\begin{subfigure}{0.45\linewidth}
		\centering
		\includegraphics[width=0.95\columnwidth]{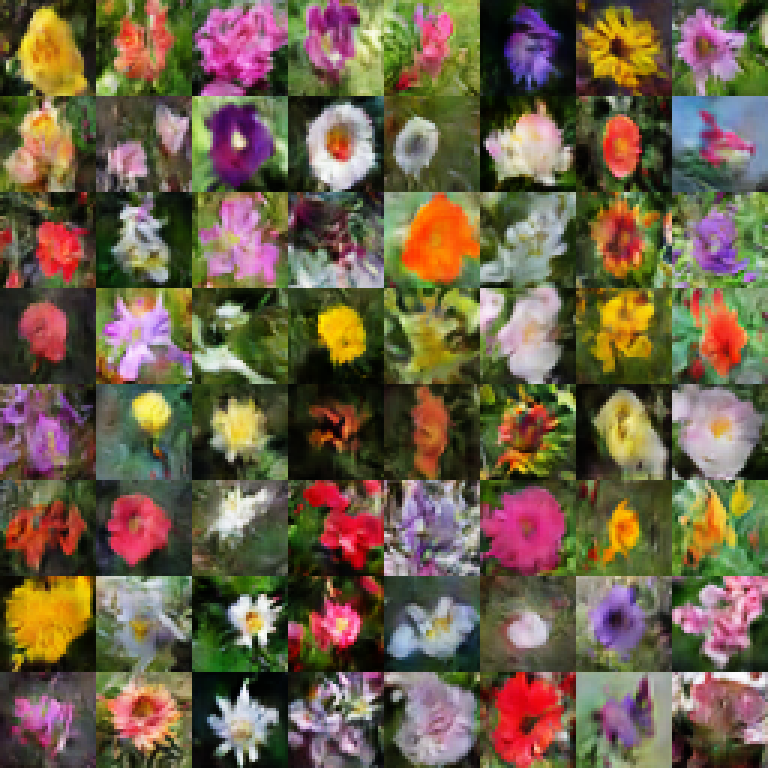}
% 		\vspace{-3pt}
		\caption{$-\log(\sigma(-x))$}
	\end{subfigure}
	\begin{subfigure}{0.45\linewidth}
		\centering	
		\includegraphics[width=0.95\columnwidth]{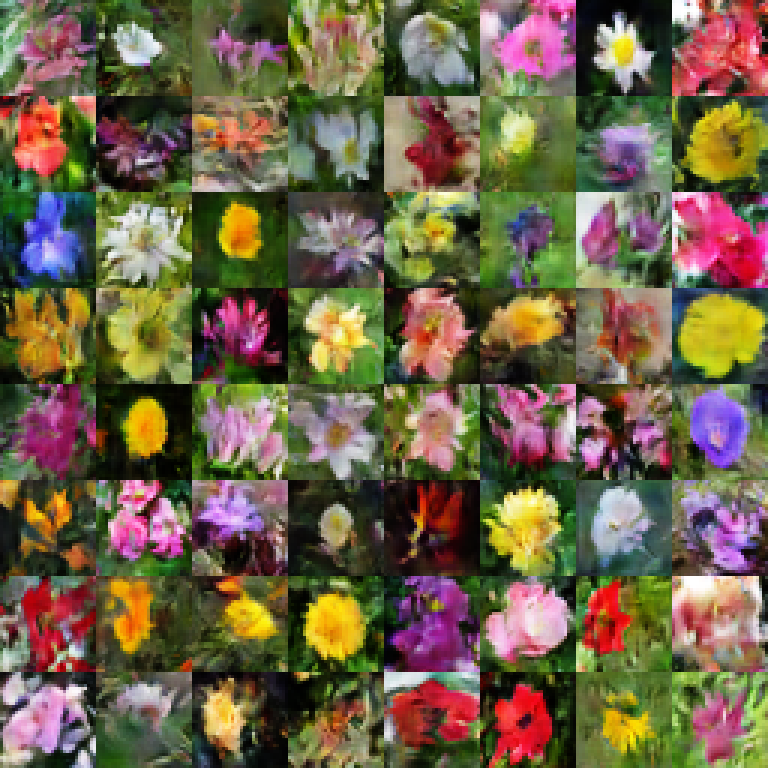}
% 		\vspace{-3pt}
		\caption{$-\log(\sigma(-x))+0.01x$}
	\end{subfigure}	
% 	\vspace{-5pt}
	\begin{subfigure}{0.45\linewidth}
	    \centering
	    \includegraphics[width=0.95\columnwidth]{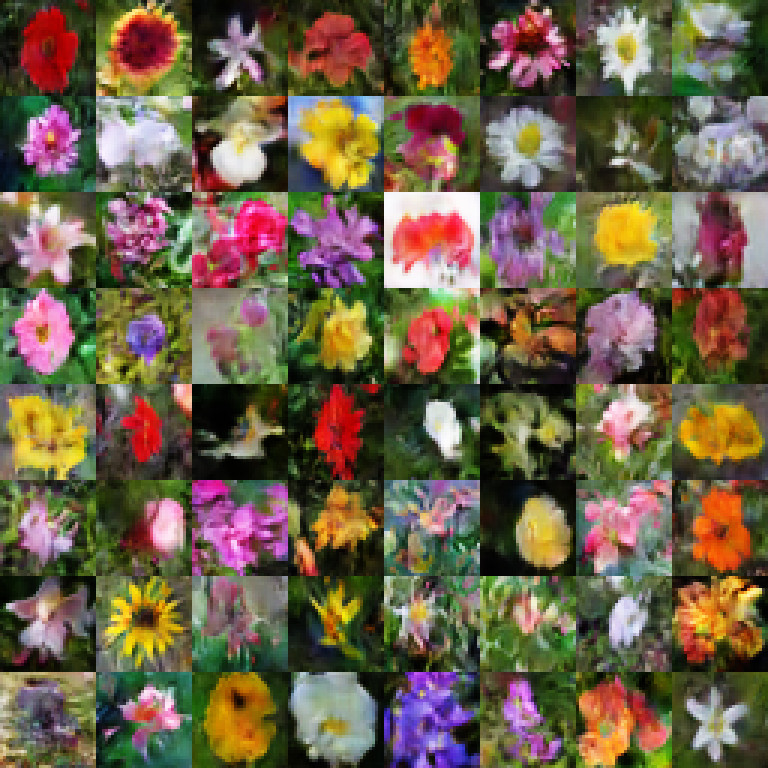}
	   % \vspace{-3pt}
	    \caption{$x$}
	\end{subfigure}
	\begin{subfigure}{0.45\linewidth}
		\centering	
		\includegraphics[width=0.95\columnwidth]{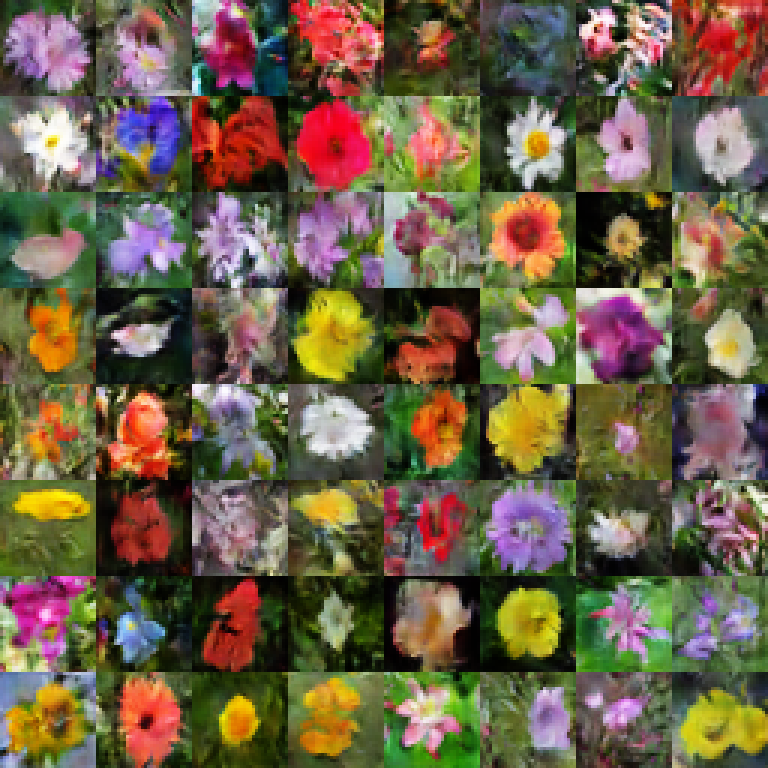}
% 		\vspace{-3pt}
		\caption{$x+\sqrt{x^2+1}$}
	\end{subfigure}	
% 	\vspace{-5pt}
	\begin{subfigure}{0.45\linewidth}
    	\centering
	    \includegraphics[width=0.95\columnwidth]{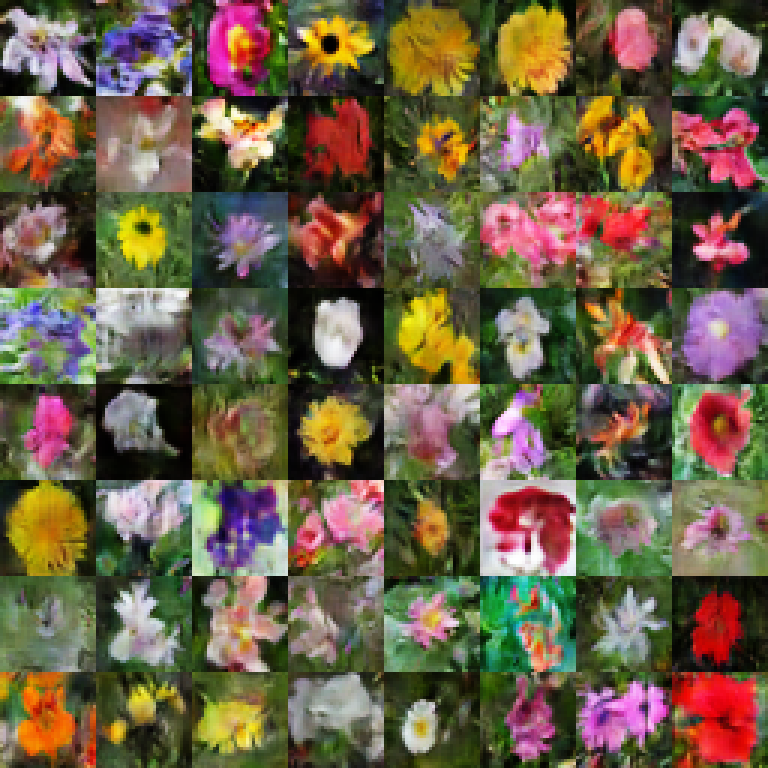}
	   % \vspace{-3pt}
	    \caption{$\exp(x)$}
	\end{subfigure}
	\begin{subfigure}{0.45\linewidth}
		\centering	
		\includegraphics[width=0.95\columnwidth]{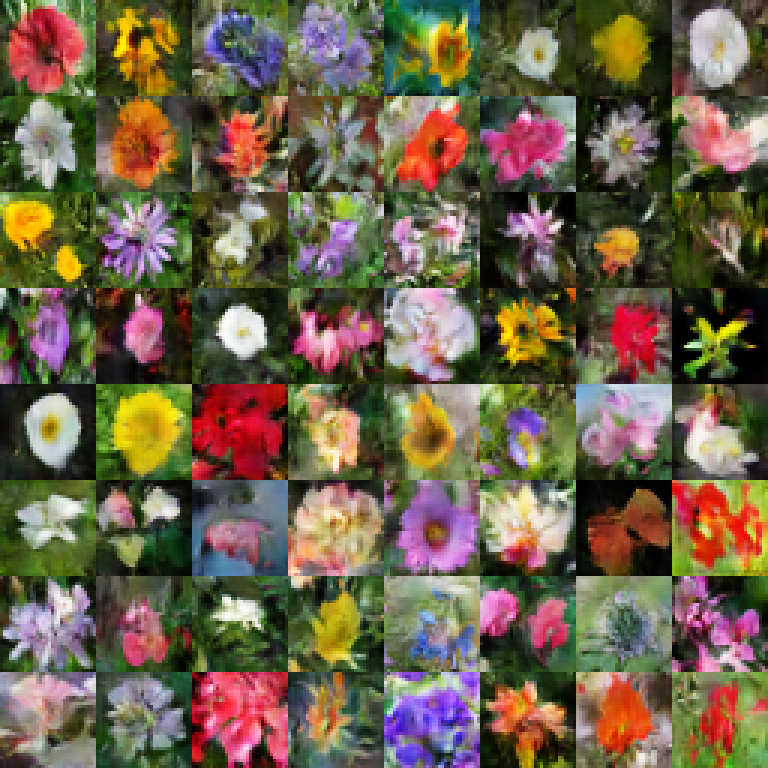}
% 		\vspace{-3pt}
		\caption{$-\min(0, -x-1)$}
	\end{subfigure}	
	\caption{Random Samples of Lipschitz GAN trained of different objectives on Oxford 102.}
	\label{oxford}

% 	\vspace{-3pt}
\end{figure}

\begin{figure}[!htbp]
	\centering
% 	\vspace{-20pt}
	\begin{subfigure}{0.45\linewidth}
		\centering
		\includegraphics[width=0.95\columnwidth]{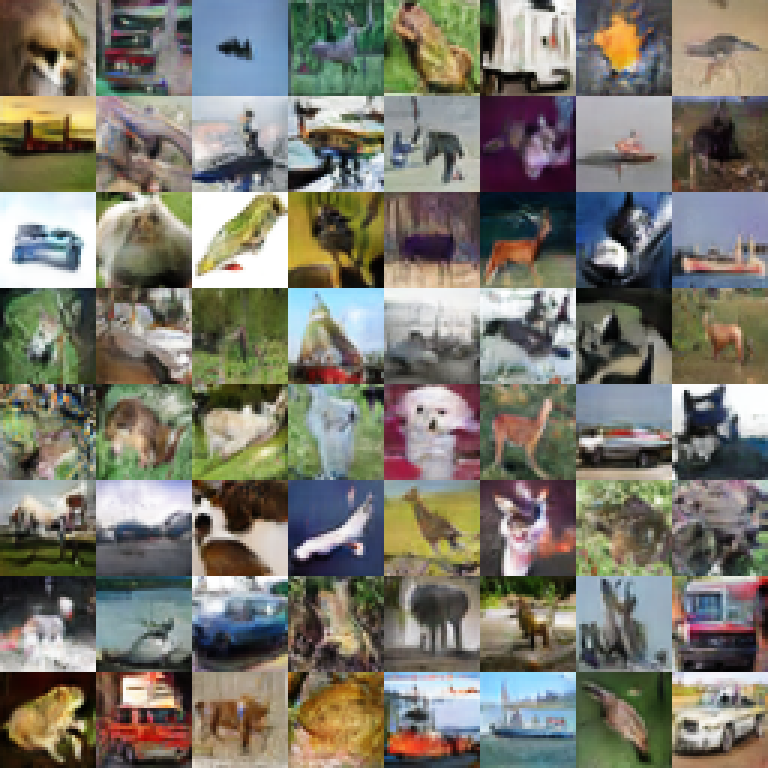}
% 		\vspace{-3pt}
		\caption{$-\log(\sigma(-x))$}
	\end{subfigure}
	\begin{subfigure}{0.45\linewidth}
		\centering	
		\includegraphics[width=0.95\columnwidth]{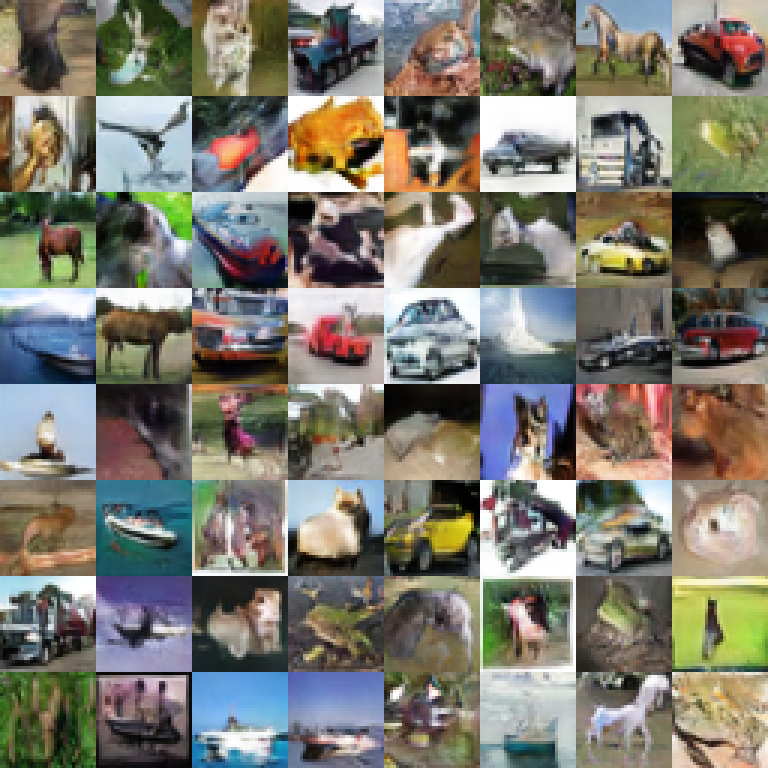}
% 		\vspace{-3pt}
		\caption{$-\log(\sigma(-x))+0.01x$}
	\end{subfigure}	
% 	\vspace{-5pt}
	\begin{subfigure}{0.45\linewidth}
	    \centering
	    \includegraphics[width=0.95\columnwidth]{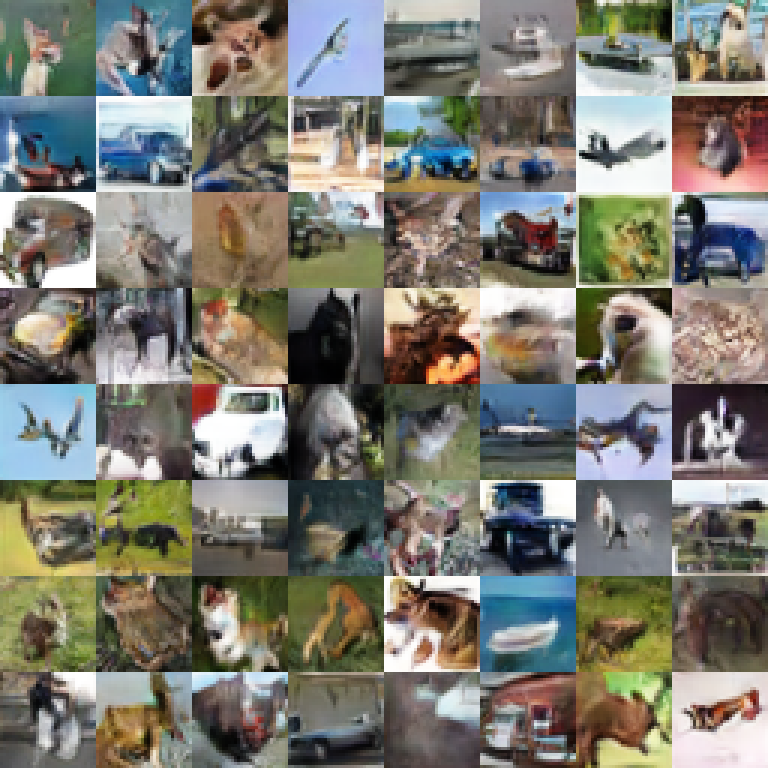}
	   % \vspace{-3pt}
	    \caption{$x$}
	\end{subfigure}
	\begin{subfigure}{0.45\linewidth}
		\centering	
		\includegraphics[width=0.95\columnwidth]{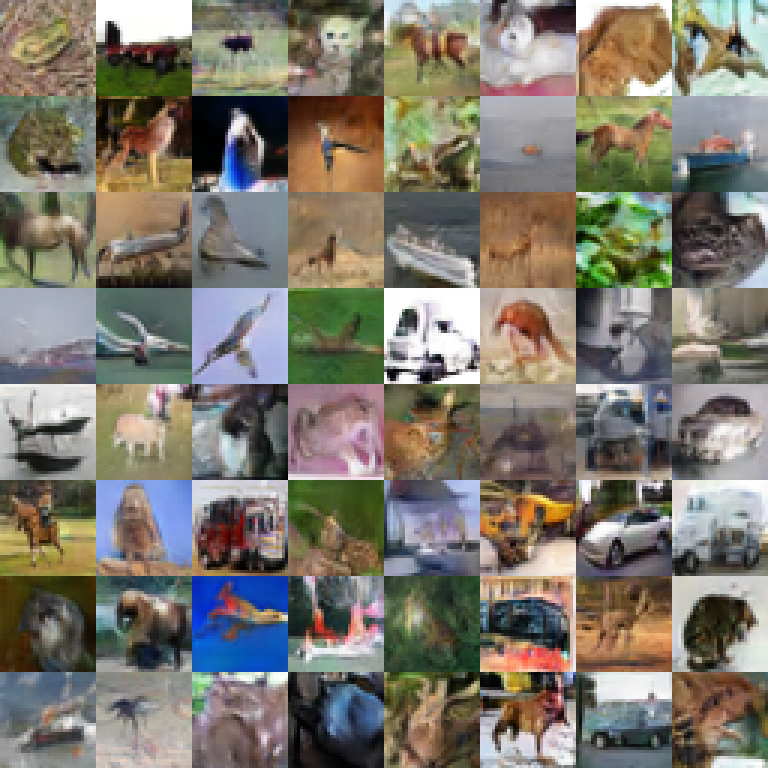}
% 		\vspace{-3pt}
		\caption{$x+\sqrt{x^2+1}$}
	\end{subfigure}	
% 	\vspace{-5pt}
	\begin{subfigure}{0.45\linewidth}
    	\centering
	    \includegraphics[width=0.95\columnwidth]{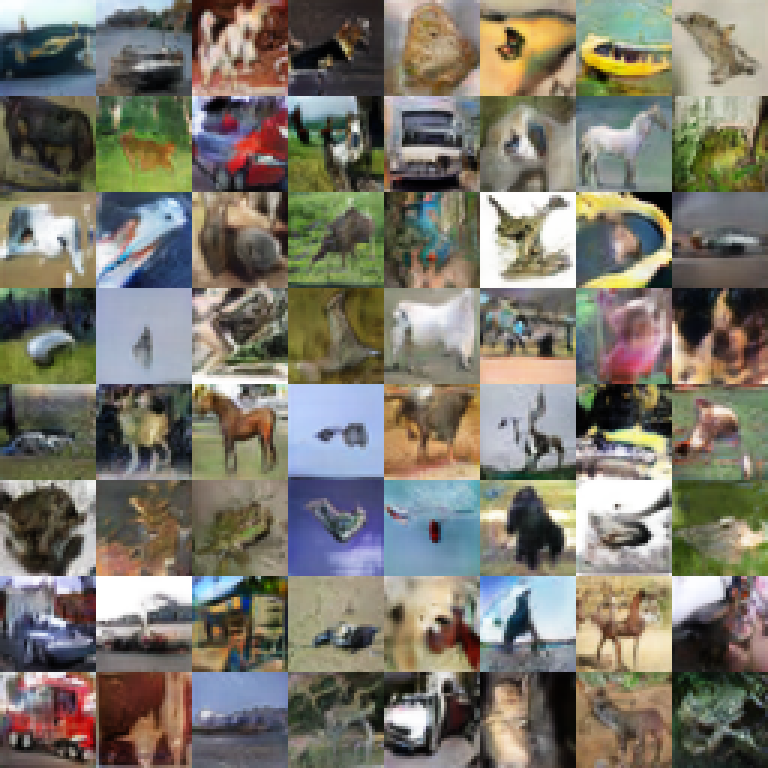}
	   % \vspace{-3pt}
	    \caption{$\exp(x)$}
	\end{subfigure}
	\begin{subfigure}{0.45\linewidth}
		\centering	
		\includegraphics[width=0.95\columnwidth]{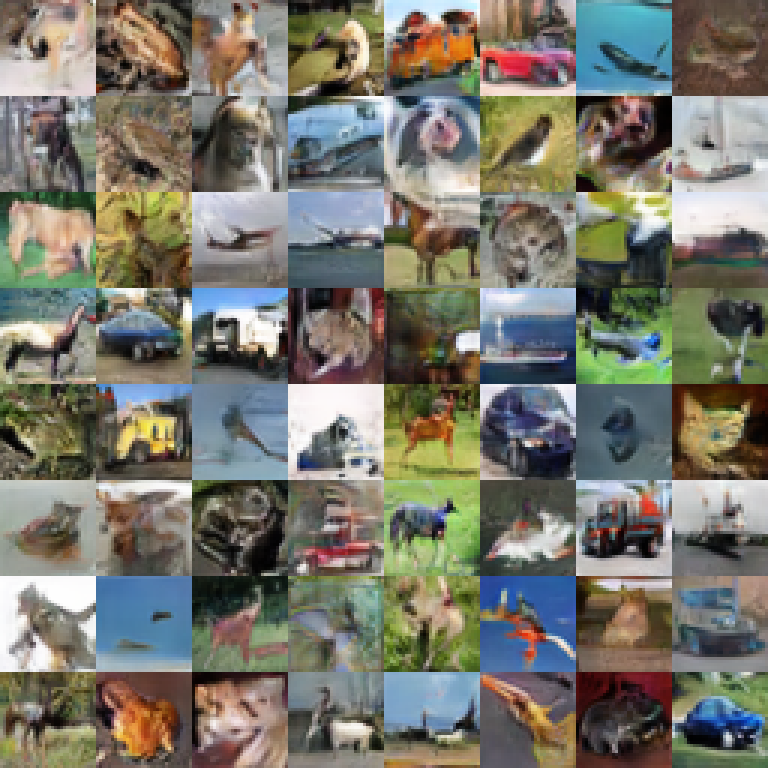}
% 		\vspace{-3pt}
		\caption{$-\min(0, -x-1)$}
	\end{subfigure}	
	\caption{Random Samples of Lipschitz GAN trained of different objectives on Cifar-10.}
	\label{cifar10}
% 	\vspace{-3pt}
\end{figure}

\begin{figure}[!htbp]
	\centering
% 	\vspace{-20pt}
	\begin{subfigure}{0.45\linewidth}
		\centering
		\includegraphics[width=0.95\columnwidth]{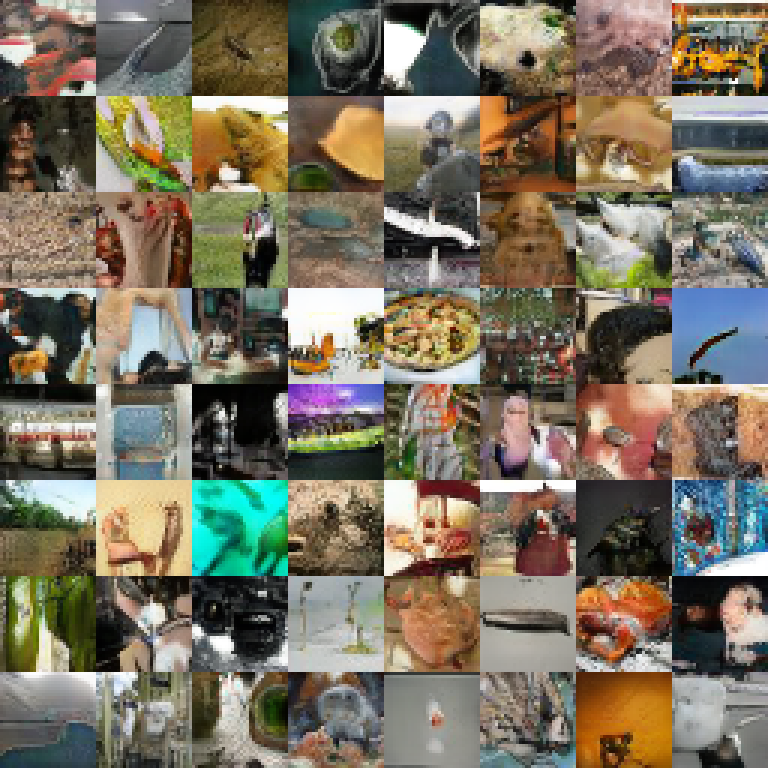}
% 		\vspace{-3pt}
		\caption{$-\log(\sigma(-x))$}
	\end{subfigure}
	\begin{subfigure}{0.45\linewidth}
		\centering	
		\includegraphics[width=0.95\columnwidth]{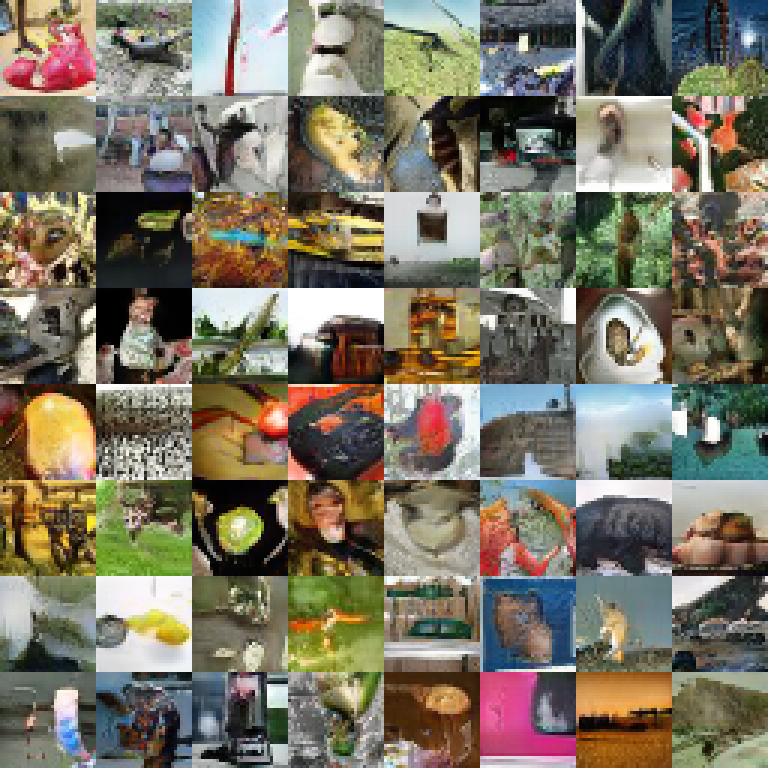}
% 		\vspace{-3pt}
		\caption{$-\log(\sigma(-x))+0.01x$}
	\end{subfigure}	
% 	\vspace{-5pt}
	\begin{subfigure}{0.45\linewidth}
	    \centering
	    \includegraphics[width=0.95\columnwidth]{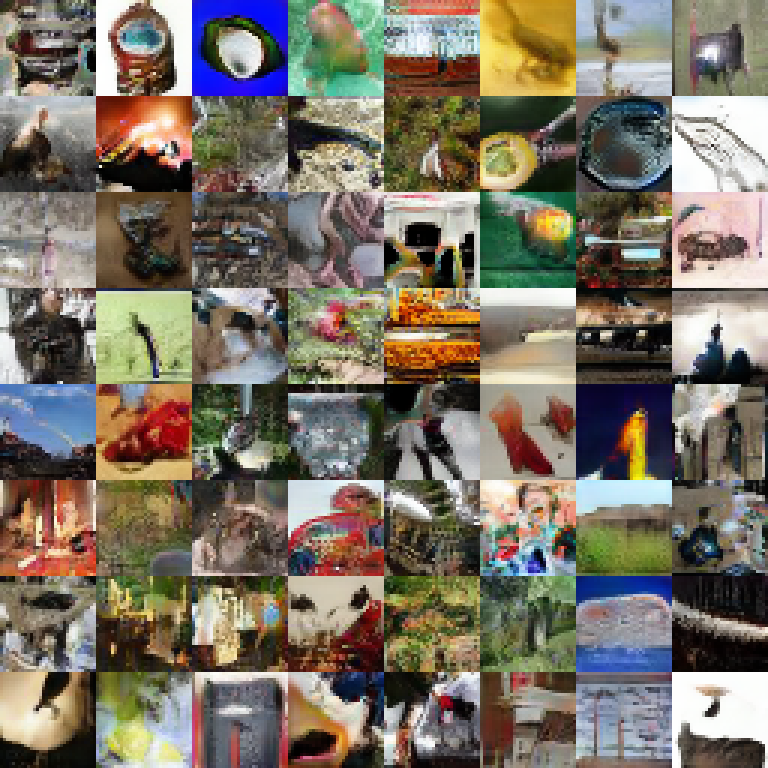}
	   % \vspace{-3pt}
	    \caption{$x$}
	\end{subfigure}
	\begin{subfigure}{0.45\linewidth}
		\centering	
		\includegraphics[width=0.95\columnwidth]{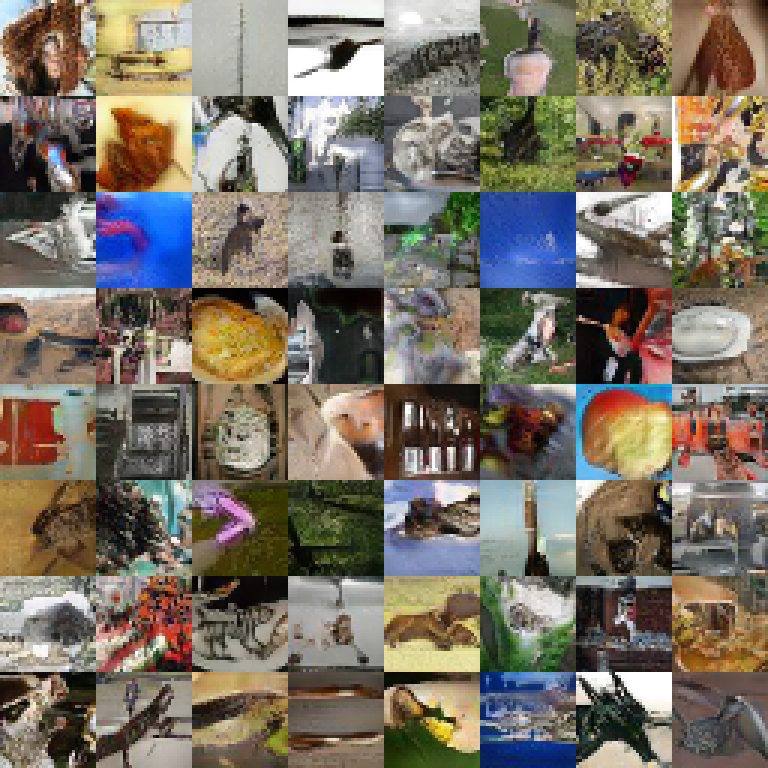}
% 		\vspace{-3pt}
		\caption{$x+\sqrt{x^2+1}$}
	\end{subfigure}	
% 	\vspace{-5pt}
	\begin{subfigure}{0.45\linewidth}
    	\centering
	    \includegraphics[width=0.95\columnwidth]{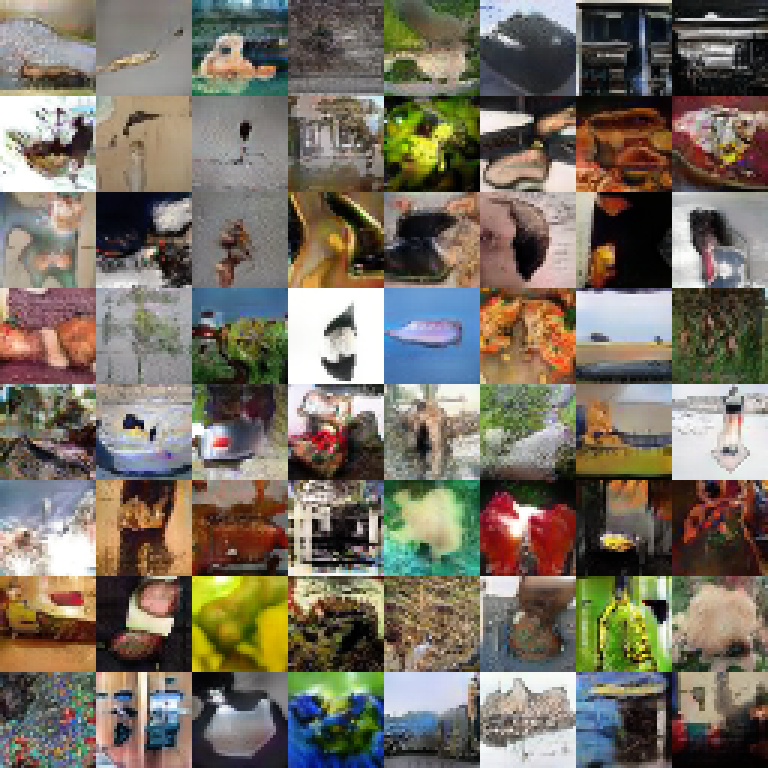}
	   % \vspace{-3pt}
	    \caption{$\exp(x)$}
	\end{subfigure}
	\begin{subfigure}{0.45\linewidth}
		\centering	
		\includegraphics[width=0.95\columnwidth]{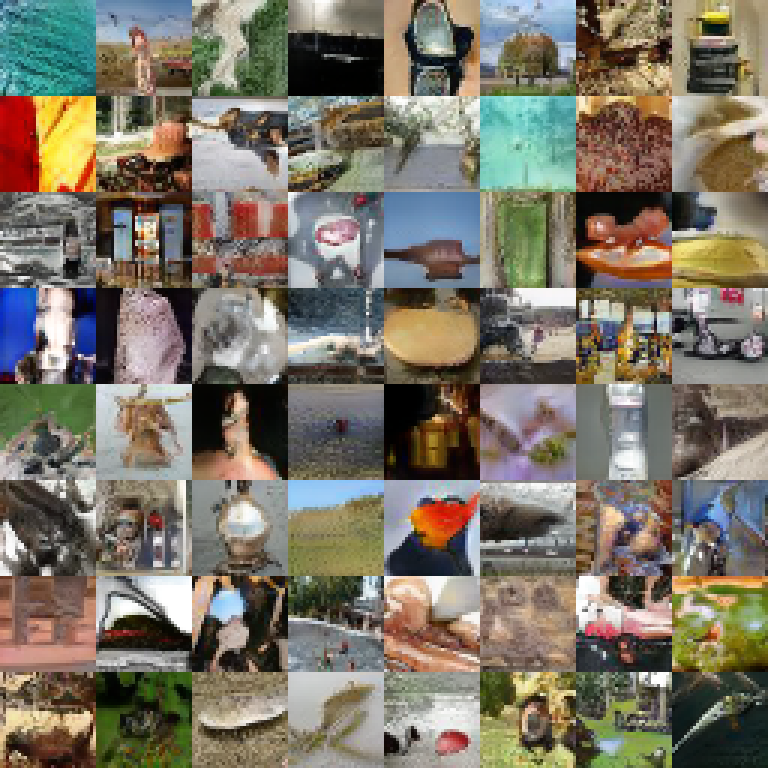}
% 		\vspace{-3pt}
		\caption{$-\min(0, -x-1)$}
	\end{subfigure}	
	\caption{Random Samples of Lipschitz GAN trained of different objectives on Tiny Imagenet.}
	\label{imagenet}
% 	\vspace{-3pt}
\end{figure}

\begin{figure}[!htbp]
\centering
\includegraphics[width=0.85\columnwidth]{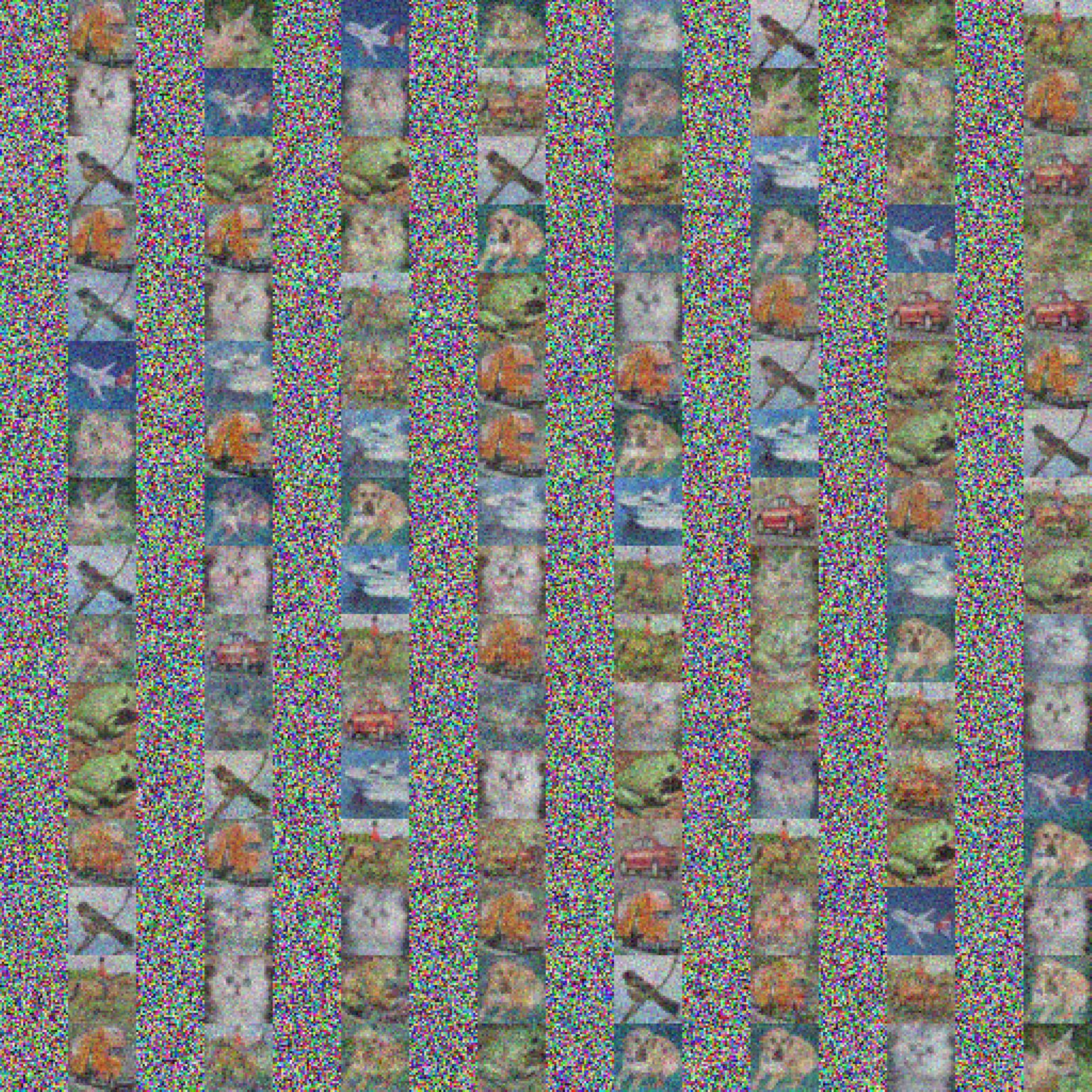}

\vspace{10pt}

\includegraphics[width=0.85\columnwidth]{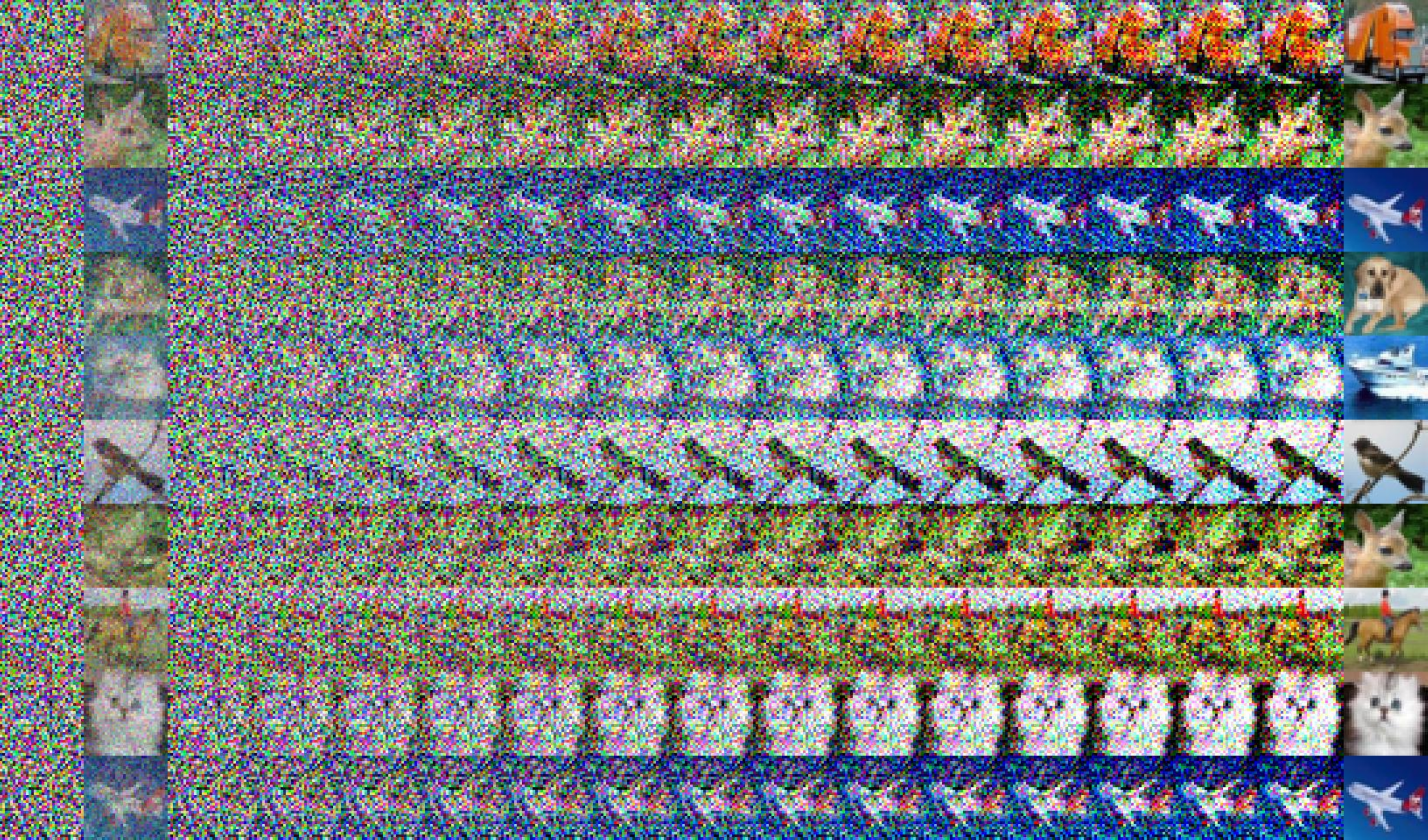}
\vspace{3pt}
\caption{The gradient of Lipschitz-continuity condition based GANs with real world data, where $P_r$ consists of ten images and $P_g$ is Gaussian noise. Up: Each odd column are $x \sim \pgen$ and the nearby column are their gradient $\nabla_{\!x} \ff(x)$. Down: the leftmost in each row are $x \sim \pgen$, the second are their gradients $\nabla_{\!x} \ff(x)$, the interior are $x+\epsilon\cdot\nabla_{\!x} \ff(x)$ with increasing $\epsilon$, and the rightmost are the nearest $y \sim \pdata$.} 
\label{fig_gradient_direction_continuous}
\end{figure}

%\newpage
\section{Proof of Theorem \ref{lemma} and the Necessity of Euclidean distance}
\label{lip_direction}
In this section, we delve deeply into the relationship between gradient properties and different norms in Lipschitz-continuity condition. We will prove Theorem \ref{lemma}, \emph{i.e.} Lipschitz continuity with $l_2$-norm (Euclidean Distance) can guarantee the gradient direction of $\nabla \ff(x)$, and at the same time, demonstrate that the other norms do not have this property. 
%
%A brief review on Lipschitz continuity: a function $f$ is k-Lipschitz over a set $\mathcal{S}$ with respect to a norm $\big\Vert . \big\Vert_p$, means that, for all $a$, $b \in \mathcal{S}$ there is $f(a) - f(b) \leq k\big\Vert a - b \big\Vert_p$. %Lemma \ref{lem_gradient_direction} states that the gradient direction of optimal function under Lipschitz-continuity condition with respect to $\big\Vert . \big\Vert_2$. 
To start with, we give the proof of Theorem \ref{lemma} in the following. 
\vspace{-2pt}
\begin{proof} $ $\newline$ $\newline
Let $(x, y)$ be such that $x\neq y $, and we define $x_t = x+ t\cdot (y-x)$ with $t \in [0,1]$. We claim that: if $f(x)$ is k-Lipschitz with respect to $\big\Vert . \big\Vert_p$ and $f(y)-f(x) = k\big\Vert x - y \big\Vert_p$, then $f(x_t) = f(x)+t\cdot k\big\Vert x - y \big\Vert_p$. 

As we know $f(x)$ is k-Lipschitz, with the property of norms, we have
%\small{
\begin{align}\label{eq:linear_interopation}
f(y)-f(x) &= f(y)-f(x_t)+ f(x_t)-f(x) \nonumber \\
&\leq f(y)-f(x_t)+k\Vert x_t-x\Vert_p %\nonumber  \\%&\small\text{---$f(x)$ is k-Lipschitz} \\
= f(y)-f(x_t)+t\cdot k\Vert x - y\Vert_p \nonumber \\%&\small\text{--- with the property of norms}\\
&\leq k\Vert y-x_t\Vert_p+t\cdot k\Vert x - y\Vert_p \nonumber %\\%&\small\text{--- $f(x)$ is k-Lipschitz} \\
= k \cdot (1-t)\Vert x - y\Vert_p+t \cdot k\Vert x - y\Vert_p \nonumber \\
& = k \Vert x - y\Vert_p. %&\small\text{--- with the property of norms}
\end{align}
%}
Given $f(x)-f(y) = k\Vert x - y \Vert_p$, it implies all the inequalities need to be equalities. Therefore, $f(x_t) = f(x)+t\cdot k\Vert x - y \Vert_p$. 

It is clear that: given $f(x)$ is k-Lipschitz with respect to $\Vert.\Vert_2$, if $f(x)$ is differentiable at $x_t$, then $\Vert \nabla f(x_t) \Vert_2 \leq k$.
With $f(x_t) = f(x)+t\cdot k\Vert x - y \Vert_2$, the directional derivative of $f(x)$ on the direction $v=\frac{y-x}{\Vert y-x \Vert_2}$ at $x_t$ is equal to $k$,
{
\small
\begin{align}
\tpdv{f(x_t)}{v} &= \lim\limits_{h\rightarrow0} \frac{f(x_t+hv)-f(x_t)}{h}=\lim\limits_{h\rightarrow0} \frac{f(x_t+h\frac{y-x}{\Vert y-x \Vert_2})-f(x_t)}{h} \nonumber\\ 
&=\lim\limits_{h\rightarrow0}\frac{f(x_{t+\frac{h}{\Vert y-x \Vert_2}})-f(x_t)}{h} =\lim\limits_{h\rightarrow0}\frac{\frac{h}{\Vert y-x \Vert_2}\cdot k\Vert y-x \Vert_2}{h}=k.
\end{align}
}
Note that $\Vert v \Vert_2 = \Vert \frac{y-x}{\Vert y -x \Vert_2} \Vert_2 = 1$, \emph{i.e.} $v$ is a unit vector. Now,
\begin{align}
k^2 =k\tpdv{f(x_t)}{v} =k\left<v,\nabla f(x_t)\right>= \left<kv, \nabla f(x_t) \right> \leq \Vert kv \Vert_2\Vert \nabla f(x_t) \Vert_2 = k^2.
\end{align}
As the equality holds only when $\nabla f(x_t) = kv = k\frac{y-x}{\Vert y -x \Vert_2}$, we prove that $\nabla f(x_t) = k\frac{y-x}{\Vert y -x \Vert_2}$.
\end{proof}
%$f(x)$ is k-Lipschitz with respect to $\Vert.\Vert_2$
Above proof utilizes the property that $\Vert \nabla f(x_t) \Vert_2 \leq k$, which is derived from that $f(x)$ is k-Lipschitz with respect to $\Vert.\Vert_2$. However, other norms do not hold this property. Specifically, according to the theory in \citep{OLOco}: if a convex and differentiable function $f$ is k-Lipschitz over $\mathcal{S}$ with respect to norm $\Vert.\Vert_p$, then the Lipschitz continuity actually implies a bound on the dual norm of gradients, \emph{i.e.} $\Vert \nabla f\Vert_q \leq k$. Here $\Vert.\Vert_q$ is the dual norm of $\Vert.\Vert_p$, which satisfies the equation that $\frac{1}{p}+\frac{1}{q} = 1$. 
As we could notice, a norm is equal to its dual norm if and only if $p = 2$. Switching to $l_p$-norm with $p\neq2$, it is actually bounding the $l_q$-norm of the gradients. However, bounding the $l_q$-norm of the gradients does not guarantee the gradient direction at fake samples point towards real samples. A counter-example is provided as follows. 

Consider a function $g(x,y)=x+y$ on $\mathbb{R}^2$. $\forall$ $p_1=(x_1,y_1)$, $p_2=(x_2,y_2)$, there is $g(p_1)-g(p_2) = g(x_1,y_1)-g(x_2,y_2) = (x_1-x_2)+(y_1-y_2) \leq |x_1-x_2|+ |y_1-y_2| = \Vert p_1-p_2\Vert_1 $, which means $g$ is a 1-Lipschitz function with respect to $l_1$-norm. According to above analysis, the dual norm of $\nabla g$ is bounded, \emph{i.e.} $\Vert \nabla g \Vert_\infty \leq 1$. Actually $\nabla g$ is equal to $(1,1)$ at every point in $\mathbb{R}^2$ with $\Vert \nabla g \Vert_\infty = 1$. Selecting two points $ A \!=\! (0,0)$ and $B \!=\! (2,1)$, we have $g(A)\!-\!g(B) \!=\! \Vert A\!-\!B\Vert_1$, however, $\nabla g(A)$ $\!=\! (1,1)$ is not pointing towards $B$.

\section{On the implementation of k-Lipschitz for GANs}

Typical techniques for enforcing k-Lipschitz includes: spectral normalization \citep{sngan}, gradient penalty \citep{wgangp}, and Lipschitz penalty \citep{wganlp}. Before moving into the detailed discussion of these methods, we would provide several important notes in the first place. %Here we would like to provide several important notes on enforcing of k-Lipschitz. %

\textbf{Firstly}, enforcing k-Lipschitz in the blending-region of $\pgen$ and $\pdata$ is actually sufficient. Define $B(\mu,\nu) = \{\hat{x}=x \cdot t + y \cdot (1-t) \mid x\!\sim\!\mu\!\land \!y\!\sim\! \nu \!\land \!t \in [0,1] \}$. It is clear that $f(x) \text{ is 1-Lipschitz in }$ $B(\mu,\nu)$ implies $f(x)\!-\!f(y) \leq d(x, y), \forall x \in \mu, \forall y \in \nu$. Thus, it is a sufficient constraint for Wasserstein distance in Eq.~\ref{eq_w_dual_form_1}. 
In fact, $f(x) \text{ is k-Lipschitz in } B(\pgen,\pdata)$ is also a sufficient condition for all properties described in Lipschitz-continuity condition based GANs (Section \ref{sec_lip}). 

\textbf{Secondly}, enforcing k-Lipschitz with regularization would provide a dynamic Lipschitz constant $k$.

\vspace{2pt}
\begin{theorem}\label{lem_lipschitz_constant}
With Wasserstein GAN objective, we have $\min_{f \in \mathcal{F}_\text{k-Lip}} \dloss(f) = k \cdot \min_{f \in \mathcal{F}_\text{1-Lip}} \dloss(f)$. 
\end{theorem}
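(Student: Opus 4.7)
The plan is to exploit two features that are specific to the Wasserstein objective: the discriminator loss $\dloss(f) = \E_{x \sim \pgen}[f(x)] - \E_{x \sim \pdata}[f(x)]$ is linear in $f$, and the class $\mathcal{F}_\text{k-Lip}$ of k-Lipschitz functions is positively homogeneous of degree one, meaning $\mathcal{F}_\text{k-Lip} = k \cdot \mathcal{F}_\text{1-Lip}$ for $k > 0$. Together, these two facts should make the identity an immediate consequence of rescaling.

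Concretely, I would first establish the set bijection $f \leftrightarrow g \triangleq f/k$ between $\mathcal{F}_\text{k-Lip}$ and $\mathcal{F}_\text{1-Lip}$ for $k>0$: if $f$ satisfies $|f(x)-f(y)| \leq k \lVert x-y \rVert$ then $g=f/k$ satisfies $|g(x)-g(y)| \leq \lVert x - y \rVert$, and conversely $kg$ is k-Lipschitz whenever $g$ is 1-Lipschitz. Second, I would use linearity of expectation to observe that $\dloss(kg) = k\cdot\dloss(g)$ for every $g$. Chaining these two facts,
\begin{equation*}
\min_{f \in \mathcal{F}_\text{k-Lip}} \dloss(f) \;=\; \min_{g \in \mathcal{F}_\text{1-Lip}} \dloss(kg) \;=\; \min_{g \in \mathcal{F}_\text{1-Lip}} k \cdot \dloss(g) \;=\; k \cdot \min_{g \in \mathcal{F}_\text{1-Lip}} \dloss(g),
\end{equation*}
where pulling the constant out of the $\min$ in the last step uses $k \geq 0$.

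The only case that merits separate attention is $k=0$. There $\mathcal{F}_\text{0-Lip}$ contains only the constant functions, on which $\dloss$ vanishes identically; the right-hand side is $0 \cdot \min_{g \in \mathcal{F}_\text{1-Lip}}\dloss(g) = 0$ as well, provided the 1-Lipschitz minimum is finite, which holds because it equals $-W_1(\pdata,\pgen)$ by the dual form recalled in Eq.~(\ref{eq_w_dual_form_1}). I do not expect any genuine obstacle in this proof: both the linearity of $\phi,\varphi$ and the homogeneity of the Lipschitz seminorm are elementary, and the argument does not need to invoke Theorem~\ref{theorem} or any of the structural results about bounding pairs. If anything, the only subtlety worth spelling out is that the scaling identity is specific to the Wasserstein choice $\phi(x)=\varphi(-x)=x$ and would fail for the other members of the family in Eq.~(\ref{eq_solvable}), since those $\phi,\varphi$ are nonlinear and $\dloss(kg) \neq k \cdot \dloss(g)$ in general.
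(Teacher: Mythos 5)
Your proof is correct and is essentially the argument the paper implicitly relies on: the paper states this result without a standalone proof, but its definition $\dloss^*(k)=\min_{f\in \mathcal{F}_\text{1-Lip},\,b}\,\mE_{x\sim P_g}[\phi(k f(x)+b)]+\mE_{x\sim P_r}[\varphi(k f(x)+b)]$ encodes exactly the same reparameterization $f = k g (+ b)$ that you use, and for the Wasserstein choice $\phi(x)=\varphi(-x)=x$ the offset $b$ cancels and linearity pulls the $k$ outside, as in your chain of equalities. Your handling of $k=0$ and your remark that the identity is specific to the linear Wasserstein loss (failing for the nonlinear members of the family) are both accurate and worth stating.
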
 
\vspace{-5pt}

Assuming we know and can control the Lipschitz constant $k$ of $f(x)$, by introducing a loss, saying square loss, on $k$ respecting to a constant $k_0$, the total loss of the discriminator (critic) becomes $J(k) \triangleq \min_{f \in \mathcal{F}_\text{k-Lip}} \dloss(f) + \lambda \cdot (k-k_0)^2$. 
With Lemma \ref{lem_lipschitz_constant}, let $\alpha\!=\!-\min_{f \in \mathcal{F}_\text{1-Lip}} \dloss(f)$, then $J(k)\!=\!-k\cdot\alpha\!+\!\lambda\cdot (k\!-\!k_0)^2$, and $J(k)$ achieves its minimum when $k\!=\!\frac{\alpha}{2\lambda}\!+\!k_0$. When $\alpha$ goes to zero, \emph{i.e.} $\pgen$ converges to $\pdata$, the optimal $k$ decreases. And when $\pgen\!=\!\pdata$, we have $\alpha\!=\!0$ and optimal $k\!=\!k_0$. We choose $k_0\!=\!0$ in our experiments. The similar analysis applies to Lipschitz-continuity condition based GANs and we use $\lambda\cdot k^2$ to enforcing k-Lipschitz for general Lipschitz-continuity condition based GANs. 

\textbf{For practical methods}, though spectral normalization \citep{sngan} recently demonstrates their excellent results in training GANs, spectral normalization is an absolute constraint for Lipschitz over the entire space, i.e., constricting the maximum gradient of the entire space, which is unnecessary. On the other side, we also notice both penalty methods proposed in \citep{wgangp} and \citep{wganlp} are not the exactly implementing the Lipschitz continuity condition, because it does not simply penalty the maximum gradient, but penalties all gradients towards 1, or penalties all these greater than one towards 1.

We found in our experiments that the existing methods including spectral normalization \citep{sngan}, gradient penalty \citep{wgangp}, and Lipschitz penalty \citep{wganlp} all fail to converge to the optimal $\ff(x)$ in many of our synthetic experiments. We thus developed a new method for enforcing k-Lipschitz and we found in our experiments that the new method stably converges to the optimal $\ff(x)$. 

\begin{figure}[!htbp]
	\centering
	\begin{subfigure}{0.45\linewidth}
		\centering
		\includegraphics[width=0.95\columnwidth]{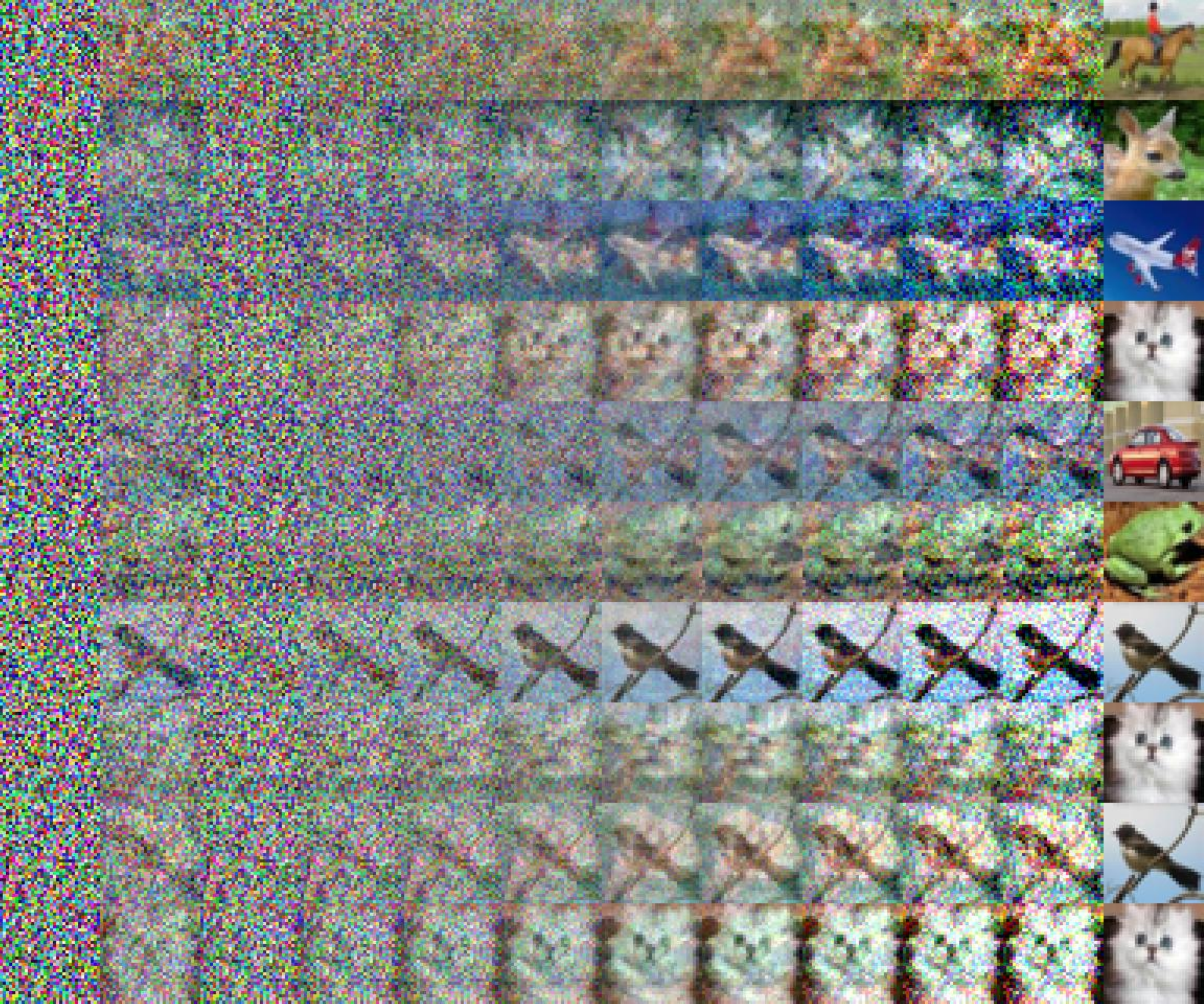}
		\vspace{-3pt}
		\caption{Gradient Penalty}
		\label{fig_gp}
	\end{subfigure}
	\begin{subfigure}{0.45\linewidth}
		\centering	
		\includegraphics[width=0.95\columnwidth]{figures/Case_5-10++_wgans_gp1_0_adam_1e-4_mlpdense256x16_cifar10.pdf}
		\vspace{-3pt}
		\caption{Maximum Gradient Penalty}
		\label{fig_maxgp}
	\end{subfigure}	
	\vspace{-5pt}
	\caption{Comparison between gradient penalty and maximum gradient penalty, with $P_r$ and $P_g$ consist of \textbf{ten real and noise images}, respectively. The leftmost in each row is a $x \sim \pgen$ and the second is its gradient $\nabla_{\!x} \ff(x)$. The interior are $x+\epsilon\cdot\nabla_{\!x} \ff(x)$ with increasing $\epsilon$, which will pass through a real sample, and the rightmost is the corresponding $y \sim \pdata$.}
	\label{fig_gradient_direction_maxgp_compare}
	\vspace{-3pt}
\end{figure}

\paragraph{The new method.}
Note that the practical methods of imposing k-Lipschitz is not the key contribution of this work, and it is far from well-validated. We plan a further work on this topic for a more rigorous study. But for the necessity for understanding our paper and reproducing of experiments, we introduce it as follows.

Combining the idea of spectral normalization and gradient penalty, we developed an new regularization for Lipschitz continuity in our experiments. Spectral normalization is actually constraining the maximum gradient over the entire space. And as we argued previously, enforcing Lipschitz continuity in the blending region is sufficient. Therefore, we propose to restricting the maximum gradient over the blending region:
\begin{align}
J_{\text{maxgp}} = \max_{\hat{x}\sim B(\mu,\nu)} [ \big\lVert \nabla f(x) \big\rVert_2^2] 
\end{align}
In practice, we sample $\hat{x}$ from training batch as in \citep{wgangp,wganlp}. To improve the stability and reduce the biased introduced via batch sampling, we propose the keep track $\hat{x}$ with the maximum $\big\lVert \nabla f(x) \big\rVert_2$. A practical and light weight method is to maintain a list $S_\text{max}$ that has the currently highest (top-k) $\big\lVert \nabla f(x) \big\rVert_2$ (initialized with random $\hat{x}$ samples), using the $S_\text{max}$ as part of the batch estimation of $J_{\text{maxgp}}$, and update the $S_\text{max}$ after each batch updating of the discriminator. In our experiment, $S_\text{max}$ takes 1/2 batch, and the remaining 1/2 batch are random sampled. $S_\text{max}$ always keeps track of the maximal 1/2 samples in the batch.

We compare the practical result of gradient penalty $\mE_{\hat{x}\sim B} [ \big\lVert \nabla f(x) \big\rVert_2^2]$ and the proposed maximum gradient penalty in Figure \ref{fig_gradient_direction_maxgp_compare}. Before switching to maximum gradient penalty, we struggled for a long time and cannot achieve a high quality result as showed in Figure \ref{fig_maxgp}. The other forms of gradient penalty \citep{wgangp,wganlp} perform similar as $\mE_{\hat{x}\sim B} [ \big\lVert \nabla f(x) \big\rVert_2^2]$. %In our sense, if one would like to enforce dynamic k-Lipschitz, it is unnecessary to regularize $\max(0, \big\lVert \nabla f(x) \big\rVert_2-1)^2$. Because the dynamic Lipschitz constant k is often not equal to 1 and it depends on the $\lambda$.

\section{Discussion on No-Differentiable $\ff(x)$} \label{app_coupling}

If $\ff(x)$ is k-Lipschitz and $\ff(y)-\ff(x)=k \cdot d(x,y)$, we say that $(x, y)$ are coupled. When a sample $x$ is coupled with more than one $y$ and these $y$ lie in different directions of $x$, $\ff(x)$ is non-differentiable at $x$ and it will has sub-gradient along each direction.

%We show in Figure for two typical non-differentiable cases in practice. 
When the $\ff(x)$ non-differentiable, due to the smoothness of practically-used neural network, as we noticed in the experiments, it usually behaviors as that the gradient direction is pointing in the middle of these sub-gradient (more strictly, a linear combination of these sub-gradients). 

It seems that when the $\pgen$ is discrete (simulating discrete token generation, such as language and music), it is easy to become non-differentiable: in the optimal transport perspective, once it is required to move to more than one targets, $\ff(x)$ is non-differentiable at this point. 

One way to alleviate this above problem is adding noise (e.g. Gaussian) to each discrete token from $\pgen$. The discrete token with different noises now disperse to different targets. In the practical generator for continuous token, such as images, this kind of non-differentiable problem naturally get solved.

The more serious non-differentiable problem traces back to the Monge problem \citep{oldandnew}, which theoretically discussed under which condition the optimal transport is a one-one mapping, which by nature solve the non-differentiable problem, as each sample now has a single target. 

However, for the Monge problem is solvable, \emph{i.e.} the mapping from $\pgen$ and $\pdata$ is one-one, it requires the $d(x, y)$ to be a strictly convex and super-linear \citep{oldandnew}. Unfortunately, the Euclidean distance, which is necessary to ensure the gradient direction from fake sample directly points toward real sample, does not fit this condition. So we currently does not figure out a practical solution to take advantage of the Monge problem related theories. 

Nonetheless, even if $\ff(x)$ is non-differentiable, the gradient is also usually somehow pointing towards the real samples. And the empirical founding is that: when the $\pgen$ get close to $\pdata$, the non-differentiable problem diminishes.

\section{Proof of the Theorem \ref{theorem}}
\label{app_proof}

Let $\dloss = \mE_{x\sim\pgen} [\phi(f(x))]+\mE_{x\sim\pdata}[\varphi(f(x))]=\int \pgen(x)\phi(f(x))+\pdata(x)\varphi(f(x))dx.$ Let $\partial_x\dloss$ denotes $\pgen(x)\phi(f(x)) + \pdata(x)\varphi(f(x))$. It has $ \dloss= \int \partial_x\dloss dx$. 

Define $J = \dloss + \lambda \cdot k(f)^2$, where $k(f)$ is Lipschitz constant of $f(x)$  . Let $\ff(x) = \argmin_{f} [\dloss + \lambda \cdot k(f)^2]$. Let $\dloss^*(k)=\min_{f\in \mathcal{F_\text{\textbf{k}-Lip}}} \dloss=\min_{f\in \mathcal{F_\text{1-Lip}}, b}\mE_{x\sim\pgen} [\phi(k\cdot f(x)+b)]+\mE_{x\sim\pdata}[\varphi(k\cdot f(x) + b)]$. 

\vspace{5pt}
\begin{lemma} \label{lem_k_neq_zero}
	$\,\forall x, \pdv{\jxloss}{\ff(x)}=0$ if and only if $k(\ff)=0$.
\end{lemma}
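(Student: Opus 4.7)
My plan is to prove the two directions separately, both via perturbation arguments that leverage the global optimality of $\ff$ as the minimizer of $J(f) := \dloss(f) + \lambda \cdot k(f)^2$, together with the structural assumptions on $\phi,\varphi$ in Eq.~(\ref{eq_solvable}).

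For the forward direction, I would fix an $\ff$ satisfying the pointwise condition $\pdv{\jxloss}{\ff(x)} = 0$ for every $x$ and test it against the one-parameter rescaling $f_\alpha := (1-\alpha)\ff$. Differentiating, $\frac{d}{d\alpha}\dloss(f_\alpha)\big|_{\alpha=0} = -\int \ff(x)\bigl[P_g(x)\phi'(\ff(x)) + P_r(x)\varphi'(\ff(x))\bigr]dx$, which vanishes by hypothesis; the higher-order contribution from $\dloss$ is nonnegative because $\phi'',\varphi'' \geq 0$. Since $k(f_\alpha)^2 = (1-\alpha)^2 k(\ff)^2$, the regularizer contributes a linear term $-2\alpha\lambda k(\ff)^2$. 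Hence $J(f_\alpha) - J(\ff) = -2\alpha\lambda k(\ff)^2 + O(\alpha^2)$, which is strictly negative for small $\alpha > 0$ whenever $k(\ff) > 0$, contradicting the minimality of $\ff$. Thus $k(\ff) = 0$.

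For the reverse direction, I would start from $k(\ff) = 0$, which forces $\ff$ to be a constant $c$. Optimality of $\ff$ among constants (on which the regularizer vanishes) reduces $J$ to $\phi(c) + \varphi(c)$; its minimizer satisfies $\phi'(c) + \varphi'(c) = 0$, so $c = a$ by convexity and the assumed existence of $a$. Next I would argue $P_g = P_r$ by contradiction: if the two distributions differ, then Kantorovich--Rubinstein duality supplies a $1$-Lipschitz $h$ with $\int(P_g - P_r)\,h\,dx \neq 0$. Testing $f_\varepsilon := a + \varepsilon h$ gives $\dloss(f_\varepsilon) = \dloss(a) + \varepsilon\phi'(a)\int(P_g - P_r)\,h\,dx + O(\varepsilon^2)$ (using $\varphi'(a) = -\phi'(a)$) and $\lambda k(f_\varepsilon)^2 \leq \lambda\varepsilon^2$, so choosing the sign of $\varepsilon$ to make the linear term negative yields $J(f_\varepsilon) < J(\ff)$ for small $\varepsilon$, contradicting optimality. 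With $P_g = P_r$ and $\ff = a$ in hand, the conclusion $\pdv{\jxloss}{\ff(x)} = (P_g(x) - P_r(x))\phi'(a) = 0$ for every $x$ is immediate.

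The main obstacle is the hidden implication in the reverse direction that $k(\ff) = 0$ forces $P_g = P_r$; establishing this requires showing that the first-order gain from a $1$-Lipschitz separating perturbation beats the quadratic regularizer penalty. The assumptions $\phi'(a) > 0$ and $\phi'',\varphi'' \geq 0$ in Eq.~(\ref{eq_solvable}) are precisely what is needed to convert the density gap $P_g - P_r$ into an exploitable first-order direction, while the convexity also controls the error terms that must not overwhelm the linear gain on either side.
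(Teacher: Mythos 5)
Your proposal reaches the right conclusion and the forward direction is essentially the paper's argument in disguise: the paper writes the first-order optimality condition $\tpdv{J}{k(\ff)} = \tpdv{\dloss^*}{k(\ff)} + 2\lambda k(\ff) = 0$, reads off from the hypothesis that $\tpdv{\dloss^*}{k(\ff)} = 0$, and concludes $k(\ff)=0$; your one-parameter rescaling $f_\alpha = (1-\alpha)\ff$ is exactly the perturbation that exposes that same envelope derivative, so the two are the same move in different notation.

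The reverse direction is where you genuinely diverge. The paper again manipulates the abstract quantity $\tpdv{\dloss^*}{k}$: it observes that $k(\ff)=0$ forces $\tpdv{\dloss^*}{k(\ff)}=0$ and then argues (somewhat tersely) that a nonzero $\tpdv{\jxloss}{\ff(x)}$ at any point would make $\tpdv{\dloss^*}{k(\ff)}\neq 0$, a contradiction; it deliberately \emph{avoids} concluding $P_g=P_r$ inside the lemma and instead proves that separately in Lemma~\ref{lemma_pair_f}, which cites this lemma. You instead establish $P_g=P_r$ directly inside the reverse implication by appealing to Kantorovich--Rubinstein: if $P_g\neq P_r$ there is a $1$-Lipschitz $h$ that separates them, and the test perturbation $a + \varepsilon h$ beats the $O(\varepsilon^2)$ regularizer cost at first order because $\phi'(a) = -\varphi'(a) > 0$. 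This is more explicit than what the paper writes (the paper's ``it is obviously that $\tpdv{\dloss^*}{k(\ff)}\neq 0$'' is doing exactly that work implicitly), and it has the side effect of re-deriving Lemma~\ref{lemma_pair_f} as a subroutine. There is no circularity, but note that your route makes this lemma and the subsequent one logically intertwined rather than sequential, which is a reasonable tradeoff for the added transparency. One small point worth flagging: the clean justification for ``$k(\ff)=0$ at an optimum forces $\tpdv{\dloss^*}{k}|_{k=0}\geq 0$'' is a one-sided KKT condition since $k\geq 0$, and both your argument and the paper's gloss over the distinction between a one-sided and two-sided first-order condition; your K--R perturbation handles this correctly because it only uses $k(f_\varepsilon)\geq 0$.
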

\vspace{-10pt}
\begin{proof} $ $ \newline \newline	
	(\rnum{1}) $\,\forall x, \pdv{\jxloss}{\ff(x)}=0$ implies $k(\ff)=0$. 

	For the optimal $\ff(x)$, it holds that $\pdv{J}{k(\ff)} = \pdv{\dloss^*}{k(\ff)} + 2\lambda\cdot k(\ff) = 0$. $\,\forall x, \pdv{\jxloss}{\ff(x)}=0$ implies %$\mE_{x}\pdv{\jxloss}{\ff(x)}=0$. $\mE_{x}\pdv{\jxloss}{\ff(x)}=0$ 
	$\pdv{\dloss^*}{k(\ff)}=0$. We thus conclude that $k(\ff) = 0$.

	(\rnum{2}) $k(\ff)=0$ implies $\,\forall x, \pdv{\jxloss}{\ff(x)}=0$.
	
	For the optimal $\ff(x)$, it holds that $\pdv{J}{k(\ff)} = \pdv{\dloss^*}{k(\ff)} + 2\lambda\cdot k(\ff) = 0$. So $k(\ff)=0$ implies $\pdv{\dloss^*}{k(\ff)}=0$. 
	$k(\ff)=0$ also implies $\,\forall x, y, \ff(x)=\ff(y)$. If there exists some point $x$ such that $\pdv{\jxloss}{\ff(x)}\neq0$, then, given $\,\forall x, y, \ff(x)=\ff(y)$, it is obviously that $\pdv{\dloss^*}{k(\ff)}\neq0$. It is contradictory to $\pdv{\dloss^*}{k(\ff)}=0$. Thus we has $\,\forall x, \pdv{\jxloss}{\ff(x)}=0$.
\end{proof}

% \vspace{10pt}
\begin{lemma} \label{lemma_pair_f}
If $\,\,\forall x, y, \ff(x)=\ff(y)$, then $\pgen=\pdata$.
\end{lemma}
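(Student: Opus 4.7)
The plan is to exploit the optimality of $\ff$ together with Lemma~\ref{lem_k_neq_zero} to force $\ff$ to take a specific constant value, and then to read the pointwise identity $P_g(x) = P_r(x)$ directly off the resulting zero-gradient condition.

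First I would observe that $\ff(x) = \ff(y)$ for all $x, y$ means $\ff \equiv c^*$ for some constant $c^*$, so $k(\ff) = 0$. Applying Lemma~\ref{lem_k_neq_zero} in the direction $k(\ff) = 0 \Rightarrow \pdv{\jxloss}{\ff(x)} = 0$ yields the pointwise equation
\begin{equation}
P_g(x)\, \phi'(c^*) + P_r(x)\, \varphi'(c^*) = 0, \quad \text{for all } x.
\end{equation}

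Next I would pin down $c^*$. Because every constant function is $0$-Lipschitz, the regularizer $\lambda k(f)^2$ is minimized on the whole space of constants, so the optimality of $\ff$ forces $c^* \in \argmin_c [\phi(c) + \varphi(c)]$. The second-derivative hypotheses in Eq.~(\ref{eq_solvable}) make $\phi + \varphi$ convex, and the third bullet of Eq.~(\ref{eq_solvable}) supplies a critical point $a$ with $\phi'(a) + \varphi'(a) = 0$; by convexity $a$ is a global minimizer, so one may take $c^* = a$. Combining this first-order condition with the sign conditions $\phi'(c^*) > 0$ and $\varphi'(c^*) < 0$ gives $\phi'(c^*) = -\varphi'(c^*) > 0$. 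Substituting $\varphi'(c^*) = -\phi'(c^*)$ back into the pointwise equation and dividing by the nonzero scalar $\phi'(c^*)$ then yields $P_g(x) - P_r(x) = 0$ for every $x$, which is the desired conclusion.

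The main obstacle I anticipate is the middle step, namely ruling out pathological behavior of the minimization over constants: one needs the optimum to be attained at a finite $c^*$ where $\phi'(c^*) + \varphi'(c^*) = 0$ rather than escaping to $\pm\infty$. This is exactly what the existence clause $\exists a, \phi'(a) + \varphi'(a) = 0$ in Eq.~(\ref{eq_solvable}) is designed to preclude; once that critical point is identified as the global minimizer of the convex function $\phi + \varphi$, the remainder of the argument is a single line of algebra. A subtle but trivial point to flag is that the pointwise equation is vacuous off the union of supports, and holds on it with the same scalar $\phi'(c^*)$, so equality of densities holds on all of $\mathbb R^n$.
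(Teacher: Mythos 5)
Your proof is correct but takes a genuinely different route from the paper's. Both proofs start identically: constancy of $\ff$ gives $k(\ff)=0$, and Lemma~\ref{lem_k_neq_zero} then yields the pointwise equation $P_g(x)\,\phi'(c^*) + P_r(x)\,\varphi'(c^*) = 0$. The paper finishes by observing that this makes the ratio $P_g(x)/P_r(x)$ equal to the constant $-\varphi'(c^*)/\phi'(c^*)$, and then appeals (implicitly) to the fact that two probability densities proportional by a constant must be equal because both integrate to one. You instead pin down $c^*$: since $\ff$ is the global optimizer and every constant is $0$-Lipschitz, $c^*$ must minimize the convex function $\phi+\varphi$, so $\phi'(c^*) + \varphi'(c^*) = 0$; substituting $\varphi'(c^*) = -\phi'(c^*)$ into the pointwise equation gives $P_g(x) = P_r(x)$ directly. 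Your route never invokes normalization of the densities and never forms a ratio, which sidesteps any fuss about division by zero off the support of $P_r$; in exchange, it leans more heavily on the structural hypotheses in Eq.~(\ref{eq_solvable}) (convexity plus the critical-point clause $\exists a,\ \phi'(a)+\varphi'(a)=0$), whereas the paper's normalization argument needs only the sign condition $\phi'>0$. A small bonus of your version is that it simultaneously identifies the equilibrium value of the discriminator, $\ff\equiv a$, which is relevant context for Property~(d) of Theorem~\ref{theorem}.
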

\vspace{-10pt}
\begin{proof}% $ $ \newline \newline
$\,\,\forall x, y, \ff(x)=\ff(y)$ implies $k(\ff)=0$. According to Lemma \ref{lem_k_neq_zero}, $\forall x, \pdv{\jxloss}{\ff(x)} = \pgen(x)\pdv{\phi(\ff(x))}{\ff(x)} $ $ +\pdata(x)\pdv{\varphi(\ff(x))}{\ff(x)} = 0$. So $\frac{\pgen(x)}{\pdata(x)}=-\frac{\pdv{\varphi(\ff(x))}{\ff(x)}}{\pdv{\phi(\ff(x))}{\ff(x)}}$, and thus $\frac{\pgen(x)}{\pdata(x)}$ has a constant value, which straightforwardly implies $\pgen=\pdata$.
\end{proof}

% \vspace{3pt}
\begin{proof}[\textbf{Proof of Theorem \ref{theorem}}] $ $ \newline \newline
(\rnum{1}) Considering the $\ff(x)$, $\forall x \in \sgen \cup\sdata$, if there does not exist a $y$ such that $|\ff(y)-\ff(x)|=k(\ff) \cdot d(x,y)$, because $\ff(x)$ is the optimal, it must hold that $\pdv{\jxloss}{\ff(x)} = 0$. \footnote{Otherwise, as $\ff(x)$ is not constrained by the Lipschitz-continuity condition, we can construct a better $\ff\,$ by adjusting the value of $\ff(x)$ at $x$ according to the non-zero gradient.}

(\rnum{2}) For $x \in \sgen\cup\sdata - \sgen \cap \sdata $, assuming $\pgen(x)\neq0$ and $\pdata(x)=0$, we have $\pdv{\jxloss}{\ff(x)} = \pgen(x)\pdv{\phi(\ff(x))}{\ff(x)} +\pdata(x)\pdv{\varphi(\ff(x))}{\ff(x)}= \pgen(x)\pdv{\phi(\ff(x))}{\ff(x)}>0$, because $\pgen(x) >0$ and $\pdv{\phi(\ff(x))}{\ff(x)} >0$. Then, according to (\rnum{1}), there must exist a $y$ such that $|\ff(y)-\ff(x)| =k(\ff) \cdot d(x,y)$. The other situation can be proved in the same way.% and second claim is proved.

(\rnum{3}) According to Lemma \ref{lemma_pair_f}, in this situation that $\pgen \neq \pdata$, for the optimal $\ff(x)$, there must exist at least one pair of points $x$ and $y$ such that $x\neq y$ and $\ff(x)\neq \ff(y)$. If there are no $x$ and $y$ satisfying that $|\ff(y)-\ff(x)| =k(\ff) \cdot d(x,y)$, it will be contradictory to that $\ff(x)$ is optimal, because we can construct a better $\ff$ by decreasing the value of $k(f)$ until there are two points, \emph{e.g.} $x$ and $y$, constrained by Lipschitz-continuity condition, \emph{i.e.} $|\ff(y)-\ff(x)| =k(\ff) \cdot d(x,y)$.

%If there are no $x$ and $y$ satisfying that $|\ff(y)-\ff(x)| =k(\ff) \cdot d(x,y)$ when $\sdata = \sgen$, then according to our first claim, for all $x \in \sdata$ we have $\pdv{\dloss}{f(x)} = 0$. In this setting, because $\pgen \neq \pdata$, according to Lemma \ref{lemma_pair_f}, there must exist two points $x$ and $y$ such that $\ff(x)\neq \ff(y)$. We can thus construct a better $\ff$ by decreasing the value of $k$ until there are two points, \emph{e.g.} $x$ and $y$, constrained by Lipschitz-continuity condition, \emph{i.e.} $|\ff(y)-\ff(x)| =k(\ff) \cdot d(x,y)$.

(\rnum{4}) In Nash Equilibrium state, it holds that, for any $x \in \sgen \cup \sdata$, $\pdv{J}{k(f)} = \pdv{\dloss^*}{k(f)} + 2\lambda\cdot k(f) = 0$ and $\pdv{\jxloss}{f(x)}\pdv{f(x)}{x}=0$. We claim that in the Nash Equilibrium state, the Lipschitz constant $k(f)$ must be 0.
%If $k(f) \neq 0$, there must exist two points $\hat{x}, \hat{y} \in \sgen \cup\sdata$ satisfies that $|f(\hat{y})-f(\hat{x})| =k(f)\cdot d(\hat{x}, \hat{y})$. 
If $k(f) \neq 0$, 
%there mush exist a point, \emph{e.g.} $\tilde{x}$, such that $\pdv{k(f)}{f(\tilde{x})}\neq0$. It implies $\pdv{\dloss}{f(\tilde{x})}\neq0$ and further implies $\pdv{f(\tilde{x})}{\tilde{x}}$. For the point satisfying $\pdv{k(f)}{f(\tilde{x})}\neq0$, there must $\exists \tilde{y}$ such that $|f(\tilde{y})-f(\tilde{x})| =k(f)\cdot d(\tilde{x}, \tilde{y})$, otherwise, . According to Lemma \ref{lem_gradient_direction}, we have $|\pdv{f(\tilde{x})}{\tilde{x}}| = k(f) \neq 0$.
%we have, for any point $\tilde{x}$ satisfies that $\exists \tilde{y}$ such that $|f(\tilde{y})-f(\tilde{x})| =k(f)\cdot d(\tilde{x}, \tilde{y})$, according to Lemma \ref{lem_gradient_direction}, we have $|\pdv{f(\tilde{x})}{\tilde{x}}| = k(f) \neq 0$, which implies $\pdv{\dloss}{f(\tilde{x})}=0$ and further implies $2\lambda\cdot k(f)\cdot \pdv{k(f)}{f(\tilde{x})}=0$ and thus $\pdv{k(f)}{f(\tilde{x})}=0$. And it is obviously that, for any point $\hat{x}$ that $\nexists\hat{y}$ fitting $|f(\hat{y})-f(\hat{x})| =k(f)\cdot d(\hat{x}, \hat{y})$, it holds $\pdv{k(f)}{f(\hat{x})}=0$. However, $\forall x, \pdv{k(f)}{f(x)}=0$ is contradictory to the definition of $k(f)$.
according to Lemma \ref{lem_k_neq_zero}, there must exist a point $\hat{x}$ such that $\pdv{[{\partial}_{\hat{x}}\dloss]}{f(\hat{x})}\neq0$. And according to (\rnum{1}), it must hold that $\exists\hat{y}$ fitting $|f(\hat{y})-f(\hat{x})| =k(f)\cdot d(\hat{x}, \hat{y})$. According to Theorem \ref{lemma}, we have $\big\Vert\pdv{f(x)}{x}\big\vert_{x=\hat{x}}\big\Vert_2 = k(f) \neq 0$. This is contradictory to that $\pdv{\jxloss}{f(x)}\pdv{f(x)}{x}\big\vert_{x=\hat{x}}\!=0$.
Thus $k(f)=0$, that is, $\forall x \in \sgen \cup \sdata$, $\pdv{f(x)}{x}=0$, which means $\,\forall x, y, f(x)=f(y)$. According to Lemma \ref{lemma_pair_f}, $\,\forall x, y, f(x)=f(y)$ implies $\pgen=\pdata$. Thus $\pgen=\pdata$ is the only Nash Equilibrium of our system.
\end{proof}

\vspace{-5pt}
\textbf{Remark:} For the Wasserstein distance, $\pdv{\jxloss}{\ff(x)}=0$ if and only if $\pgen(x)=\pdata(x)$. For the Wasserstein distance, penalizing the Lipschitz constant also benefits: at the convergence state, it holds $\pdv{\ff(x)}{x}=0$.

%\vspace{10pt}
%The proof of Theorem \ref{lemma} can be found at Section \ref{lip_direction}.

%\newpage
\section{On the importance of Eq.~\ref{eq_solvable}}

Requiring $\phi(x)$ and $\varphi(x)$ to satisfy Eq.~\ref{eq_solvable} is important, because it is the non-trivial condition that makes sure $\mE_{x\sim\pgen} [\phi(f(x))]+\mE_{x\sim\pdata}[\varphi(f(x))] + \lambda \cdot k(f)^2$ has attainable global minimum with respect to $f$. %From Lemma \ref{lemma_monotonical}, $\dloss$ monotonically increases when $k(f)$ decreases. We will firstly consider the case that $k(f)$ is fixed as $\hat{k}$.

\vspace{5pt}
\begin{theorem} \label{lower_bound}
If $\phi(x)$ and $\varphi(x)$ satisfies Eq.~\ref{eq_solvable}, then for any fixed $\pgen$ and $\pdata$, $\mE_{x\sim\pgen} [\phi(f(x))]+\mE_{x\sim\pdata}[\varphi(f(x))] + \lambda\cdot k(f)^2$ has an lower bound  with respect to $f$. 
\end{theorem}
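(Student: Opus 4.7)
The plan is to use convexity of $\phi$ and $\varphi$ to linearly lower-bound $\dloss$ in terms of $\mE_{\pgen}[f]-\mE_{\pdata}[f]$, then control that difference by the Lipschitz constant $k(f)$, so that the full objective reduces to a quadratic in $k(f)$ that is bounded below.

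First I would apply the tangent-line inequalities granted by $\phi''\ge 0$ and $\varphi''\ge 0$ at the common point $a$ provided by Eq.~(\ref{eq_solvable}):
\[
\phi(u)\ge \phi(a)+\phi'(a)(u-a),\qquad \varphi(u)\ge \varphi(a)+\varphi'(a)(u-a),\qquad \forall u\in\mathbb{R}.
\]
Substituting $u=f(x)$, integrating against $\pgen$ and $\pdata$ respectively, and invoking $\phi'(a)+\varphi'(a)=0$ to cancel the terms involving ``$-a$'', I obtain
\[
\dloss(f)\;\ge\; \phi(a)+\varphi(a)+\phi'(a)\bigl(\mE_{x\sim\pgen}[f(x)]-\mE_{x\sim\pdata}[f(x)]\bigr).
\]
By the sign conditions, $\phi'(a)=-\varphi'(a)>0$, so the coefficient on the linear term is a definite positive constant.

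Next I would control this linear functional by the Lipschitz constant. If $k(f)=+\infty$, the regularizer $\lambda k(f)^2$ already makes the objective $+\infty$, so assume $k(f)<\infty$. For any coupling $\pi\in\Pi(\pdata,\pgen)$, $k(f)$-Lipschitzness of $f$ gives
\[
\mE_{\pdata}[f]-\mE_{\pgen}[f]\;=\;\mE_{(x,y)\sim\pi}[f(x)-f(y)]\;\le\; k(f)\,\mE_\pi[\lVert x-y\rVert],
\]
and infimizing over $\pi$ yields $\mE_{\pgen}[f]-\mE_{\pdata}[f]\ge -k(f)\,W_1(\pdata,\pgen)$. Plugging this back,
\[
\dloss(f)+\lambda k(f)^2\;\ge\; \phi(a)+\varphi(a)-\phi'(a)\,W_1(\pdata,\pgen)\cdot k(f)+\lambda k(f)^2.
\]
The right-hand side is a quadratic in $k(f)\in[0,\infty)$ with positive leading coefficient $\lambda$, hence bounded below by its vertex value $\phi(a)+\varphi(a)-[\phi'(a)\,W_1(\pdata,\pgen)]^2/(4\lambda)$, which is finite whenever $\pdata,\pgen$ have finite first moments (the standard GAN setting, where data and generator outputs lie in a bounded set).

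The main obstacle is conceptual rather than computational: the cancellation in the first step depends crucially on the existence of a \emph{common} point $a$ at which $\phi'(a)+\varphi'(a)=0$. Without it, the tangent-line bound would depend on $\mE_{\pgen}[f]$ and $\mE_{\pdata}[f]$ independently, and a global vertical shift $f\mapsto f+c$ (which leaves $k(f)$ unchanged) could drive the objective to $-\infty$. Thus the third condition in Eq.~(\ref{eq_solvable}) is precisely what prevents the $\lambda k(f)^2$ regularizer from being circumvented by unbounded vertical shifts of $f$.
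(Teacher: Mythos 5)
Your proof is correct and follows essentially the same route as the paper's: a tangent-line lower bound on $\phi$ and $\varphi$ at the common point $a$, cancellation of the offset terms via $\phi'(a)+\varphi'(a)=0$, a Kantorovich-type bound $\phi'(a)\bigl(\mE_{\pgen}[f]-\mE_{\pdata}[f]\bigr)\ge -\phi'(a)\,k(f)\,W_1(P_r,P_g)$, and finally minimizing the resulting quadratic in $k(f)$. The only addition is your closing remark on why the common tangent point is indispensable, which is a useful piece of motivation but does not change the argument.
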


\begin{proof}$ $\newline \newline
Given $\exists a, \phi'(a)+\varphi'(a)=0$, $\phi''(x)\geq0$ and $\varphi''(x)\geq0$, we have:
\begin{align}
&\quad\,\,\mE_{x\sim\pgen}\nonumber [\phi(f(x))]+\mE_{x\sim\pdata}[\varphi(f(x))] + \lambda\cdot k(f)^2  \\\nonumber
&\geq \mE_{x\sim\pgen} [\phi'(a)(f(x)-a)+\phi(a)]+\mE_{x\sim\pdata}[\varphi'(a)(f(x)-a)+\varphi(a)] + \lambda\cdot k(f)^2 \\\nonumber
&= \mE_{x\sim\pgen} [\phi'(a)f(x)]+\mE_{x\sim\pdata}[\varphi'(a)f(x)] + \lambda\cdot k(f)^2 + c \\\nonumber
&= \phi'(a)[\mE_{x\sim\pgen} [f(x)]-\mE_{x\sim\pdata}[f(x)]] + \lambda\cdot k(f)^2 + c \\\nonumber
&\geq \phi'(a)[k\cdot -W_1(P_r, P_g)] + \lambda\cdot k^2+ c \\\nonumber
&= [-\phi'(a) W_1(P_r, P_g)] \cdot k + \lambda\cdot k^2+ c \\\nonumber
&\geq c - \frac{[\phi'(a) W_1(P_r, P_g)]^2}{4\lambda}  \qedhere
\end{align}
\end{proof}

\begin{remark}
Theorem \ref{lower_bound} implies that there exists an infimum for $\mE_{x\sim\pgen} [\phi(f(x))]+\mE_{x\sim\pdata}[\varphi(f(x))] + \lambda\cdot k(f)^2$. According to the definition of infimum, there exists a sequence of $\{f_n\}^\infty_{n=1}$ such that $\mE_{x\sim\pgen} [\phi(f_n(x))]+\mE_{x\sim\pdata}[\varphi(f_n(x))] + \lambda\cdot k(f_n)^2$ infinitely approaches the infimum. 
\end{remark}

\vspace{3pt}
\begin{remark}
The Lipschitz constant of $f_n$, i.e., $k(f_n)$, as $n$ goes to infinity, is bounded, because $\mE_{x\sim\pgen}\nonumber [\phi(f(x))]+\mE_{x\sim\pdata}[\varphi(f(x))] + \lambda\cdot k(f)^2>[-\phi'(a) W_1(P_r, P_g)] \cdot k(f) + \lambda\cdot k(f)^2+ c$. 
\end{remark}

%For simplicity, we assume that there exists a $\ff$ such that $\mE_{x\sim\pgen} [\phi(\ff(x))]+\mE_{x\sim\pdata}[\varphi(\ff(x))] + \lambda\cdot k(\ff)^2$ equals to the infimum. 

We further present several simply Lemmas. These Lemmas and their proofs would provide some intuitive impressions on why Eq.~\ref{eq_solvable} is necessary and the properties of proposed objectives. 

\vspace{5pt}
\begin{lemma} \label{lemma_solvable}
	Assuming $\pgen$ and $\pdata$ are two delta distributions. If $\phi(x)$ and $\varphi(x)$ satisfies Eq.~\ref{eq_solvable}, then for any fixed $\pgen$ and $\pdata$, $\mE_{x\sim\pgen} [\phi(f(x))]+\mE_{x\sim\pdata}[\varphi(f(x))]$ has global minimum with respect to $f$, for any fixed $k(f)=\hat{k}$.  %  + \lambda \cdot [k(f)]^2
\end{lemma}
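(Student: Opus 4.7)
The plan is to collapse the infinite-dimensional minimization over $f$ to a one-dimensional convex problem and then apply the intermediate value theorem. Writing $\pgen = \delta_{x_g}$, $\pdata = \delta_{x_r}$, and $d := \lVert x_g - x_r\rVert$, the loss depends on $f$ only through $u := f(x_g)$ and $v := f(x_r)$, and the constraint $k(f) = \hat{k}$ forces $|u - v| \leq \hat{k} d$. Conversely, once an optimal pair $(u^*, v^*)$ with $v^* - u^* = \hat{k} d$ is found, the affine interpolation $f^*(x) = u^* + \hat{k}\,\langle x - x_g,\,(x_r - x_g)/d\rangle$ realizes it with $k(f^*) = \hat{k}$ exactly, so the lemma reduces to solving $\min_{(u,v):\,|u-v|\leq \hat{k} d}\phi(u)+\varphi(v)$.

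Next I would use strict monotonicity ($\phi' > 0$, $\varphi' < 0$) from Eq.~(\ref{eq_solvable}) to note that any slack in $|u-v| \leq \hat{k} d$ can be absorbed by pushing $u$ down or $v$ up, strictly decreasing the objective; hence any optimum satisfies $v - u = \hat{k} d$, and the problem reduces to minimizing $g(u) := \phi(u) + \varphi(u + \hat{k} d)$ on $\mathbb{R}$. The function $g$ is convex by Eq.~(\ref{eq_solvable}) ($\phi''\geq 0$, $\varphi''\geq 0$), and its derivative $g'(u) = \phi'(u) + \varphi'(u + \hat{k} d)$ is continuous and non-decreasing. Using the balance point $a$ guaranteed by Eq.~(\ref{eq_solvable}), monotonicity of $\phi'$ and $\varphi'$ gives
\begin{align*}
g'(a) &= \phi'(a) + \varphi'(a + \hat{k} d) \;\geq\; \phi'(a) + \varphi'(a) = 0, \\
g'(a - \hat{k} d) &= \phi'(a - \hat{k} d) + \varphi'(a) \;\leq\; \phi'(a) + \varphi'(a) = 0,
\end{align*}
so the intermediate value theorem delivers $u^* \in [a-\hat{k} d,\,a]$ with $g'(u^*) = 0$; convexity promotes $u^*$ to a global minimizer, and $(u^*, u^* + \hat{k} d)$ is the pair to plug into $f^*$.

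The main obstacle is the last step --- guaranteeing that $g'$ actually attains $0$ rather than having $0$ only as an infimum from one side. All three clauses of Eq.~(\ref{eq_solvable}) pull their weight here: strict monotonicity binds the Lipschitz constraint, convexity makes $g'$ continuous and monotone and promotes a critical point to a minimizer, and the existence of the balance point $a$ is what lets us sandwich $g'$ around $0$. Without $a$, $g'$ could stay strictly of one sign and the infimum would only be approached as $|u|\to\infty$, which is exactly the degeneracy one would expect for $(\phi,\varphi)$ outside the proposed family. The degenerate case $d = 0$ collapses $g$ to $\phi(u) + \varphi(u)$, minimized directly at $u = a$, and merits only a one-line remark.
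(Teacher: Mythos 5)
Your proof follows essentially the same route as the paper's: reduce to the one-dimensional convex function $g(\alpha)=\phi(\alpha)+\varphi(\alpha+\hat{k}d)$ with the Lipschitz constraint binding, then sandwich $g'$ around zero at $\alpha = a-\hat{k}d$ and $\alpha=a$ using the balance point and the monotonicity of $\phi'$, $\varphi'$, and conclude by convexity. The one substantive difference is in how the reduction is justified: the paper invokes Theorem~\ref{theorem} and Theorem~\ref{corollary} to assert $\ff(y)-\ff(x)=\hat{k}\cdot d(x,y)$, but those theorems concern the free-$k$ optimization $\argmin_f[\dloss+\lambda k(f)^2]$ rather than the fixed-$k$ setting of this lemma, so the citation is off-hypotheses. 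Your argument --- that strict monotonicity ($\phi'>0$, $\varphi'<0$) forces any slack $|u-v|<\hat{k}d$ to be non-optimal --- is self-contained and cleaner, and your explicit affine realization $f^*(x)=u^*+\hat{k}\,\langle x-x_g,(x_r-x_g)/d\rangle$ of the 1-D optimum by an actual $\hat{k}$-Lipschitz function is a detail the paper leaves implicit. Both proofs are correct in spirit; yours is slightly more careful about the prerequisites of the reduction step.
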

% \vspace{-3pt}
\begin{proof}%$ $\newline \newline
	%(\rnum{1}) Assuming $\pgen$ and $\pdata$ are two delta distributions. %(the basic idea)
	%
	Given $\pgen$ and $\pdata$ are two delta distributions, according to Theorem \ref{theorem} and Theorem \ref{corollary}, for $x\sim\pgen$ and $y\sim\pdata$, $\ff(y)-\ff(x)=\hat{k} \cdot d(x,y)$. Let $\ff(x) = \alpha$ and $\beta=\hat{k} \cdot d(x,y)$, then $\ff(y)=\alpha+\beta$. Define $\dloss(\alpha) = \mE_{x \sim \pgen} [ \phi(f(x)) ] + \mE_{x \sim \pdata} [ \varphi(f(x)) ]=\phi(\alpha) + \varphi(\alpha+\beta)$. %we have $\pdv{\dloss}{\alpha} = \pdv{\phi(\alpha)}{\alpha} + \pdv{\varphi(\alpha+\beta)}{\alpha}$. 
	
	Given $\exists \, a$ such that $\phi'(a) + \varphi'(a) = 0$, $\phi''(x) \geq 0$ and $\varphi''(x) \geq 0$,
	we have, 
	when $\alpha$ is small enough (such that, $\alpha<a$ and $\alpha+\beta<a$), $\dloss'(\alpha) = \phi'(\alpha) + \varphi'(\alpha+\beta) \leq \phi'(a) + \varphi'(a) = 0$. Similarly, when $\alpha$ is large enough (such that, $\alpha>a$ and $\alpha+\beta>a$), $\dloss'(\alpha) = \phi'(\alpha) + \varphi'(\alpha+\beta) \geq \phi'(a) + \varphi'(a) = 0$. 
	
	Therefore, $\dloss(\alpha)$ is convex with respect to $\alpha$ and there exists an $\alpha_0$ such that $\dloss'(\alpha_0)=0$, where $\dloss$ achieves its the global minimum. When  $\phi''(x) > 0$ and $\varphi''(x) > 0$, it is the unique global minimum.
\end{proof}	

% \iffalse

% \vspace{6pt}
\begin{lemma} \label{lemma_monotonical}
	Assuming $\pgen$ and $\pdata$ are two delta distributions. If $\phi(x)$ and $\varphi(x)$ satisfies Eq.~\ref{eq_solvable}, then for any fixed $\pgen$ and $\pdata$, $\mE_{x\sim\pgen} [\phi(f(x))]+\mE_{x\sim\pdata}[\varphi(f(x))]$ monotonically increases as $k(f)$ decreases.
\end{lemma}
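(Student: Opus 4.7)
The plan is to reduce the minimization over $\hat{k}$-Lipschitz functions $f$ to a one-dimensional optimization in a scalar $\alpha$, and then read off monotonicity from the strict monotonicity of $\varphi$ guaranteed by Eq.~\ref{eq_solvable}. Let $x_0$ and $y_0$ denote the points supporting $\pgen$ and $\pdata$, and let $d = \lVert x_0 - y_0 \rVert$. The loss $\mE_{x\sim\pgen}[\phi(f(x))] + \mE_{x\sim\pdata}[\varphi(f(x))]$ depends on $f$ only through $f(x_0)$ and $f(y_0)$, and the constraint $k(f)=\hat{k}$ forces $|f(y_0) - f(x_0)| \leq \hat{k} d$. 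Conversely, any pair $(\alpha,\beta)$ with $|\beta - \alpha| \leq \hat{k} d$ is realized by some $\hat{k}$-Lipschitz function on $\mathbb{R}^n$ (by a McShane/Kirszbraun-type extension in the Euclidean metric). So the constrained minimum equals
\[
D(\hat{k}) \;=\; \min_{\substack{\alpha,\beta \in \mathbb{R} \\ |\beta - \alpha| \leq \hat{k} d}} \phi(\alpha) + \varphi(\beta).
\]
Since $\phi' > 0$ and $\varphi' < 0$, at the optimum it is strictly beneficial to decrease $\alpha$ and increase $\beta$, so the upper constraint $\beta - \alpha \leq \hat{k} d$ must bind. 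Substituting $\beta = \alpha + \hat{k} d$ gives
\[
D(\hat{k}) \;=\; \min_{\alpha \in \mathbb{R}} \bigl[\phi(\alpha) + \varphi(\alpha + \hat{k} d)\bigr],
\]
and a finite minimizer $\alpha^*(\hat{k})$ exists by the convexity-plus-sign-change argument already used in the proof of Lemma~\ref{lemma_solvable}.

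For the monotonicity step, I would proceed by direct comparison. For any $\hat{k}_1 < \hat{k}_2$, plugging $\alpha_1^* := \alpha^*(\hat{k}_1)$ into the reduced objective at $\hat{k}_2$ gives
\[
D(\hat{k}_2) \;\leq\; \phi(\alpha_1^*) + \varphi(\alpha_1^* + \hat{k}_2 d) \;<\; \phi(\alpha_1^*) + \varphi(\alpha_1^* + \hat{k}_1 d) \;=\; D(\hat{k}_1),
\]
where the strict inequality uses $\varphi' < 0$ (so $\varphi$ is strictly decreasing) together with $\hat{k}_2 d > \hat{k}_1 d$. Equivalently, the envelope theorem yields $D'(\hat{k}) = d \cdot \varphi'(\alpha^*(\hat{k}) + \hat{k} d) < 0$. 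Either way, $D$ strictly increases as $k(f)$ decreases, which is the claim.

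The main obstacle is the reduction step rather than the monotonicity step: one must argue that every feasible pair $(\alpha, \beta)$ with $|\beta - \alpha| \leq \hat{k} d$ is genuinely attained by some globally $\hat{k}$-Lipschitz function on $\mathbb{R}^n$, so that the minimization over function space collapses to a minimization over two scalars. This is a standard extension statement in the Euclidean metric, and is the only ingredient not already covered by Lemma~\ref{lemma_solvable}. Once the one-dimensional reduction is legitimized, the final conclusion is essentially a one-line consequence of $\varphi$ being strictly decreasing, with the special Wasserstein case $\phi(x)=x$, $\varphi(x)=-x$ yielding the clean closed form $D(\hat{k}) = -\hat{k} d$ as a sanity check.
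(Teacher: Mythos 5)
Your proposal is correct and, once past the reduction, matches the paper's core argument exactly: both prove the claim with the same two-step chain $D_1 = \phi(\alpha_1^*) + \varphi(\alpha_1^* + \beta_1) > \phi(\alpha_1^*) + \varphi(\alpha_1^* + \beta_2) \geq D_2$, where the strict step uses $\varphi' < 0$ and the second step uses optimality of $\alpha_2^*$. The only place you diverge is how you reduce the function-space minimization to a one-dimensional problem: the paper appeals to Theorems~\ref{theorem} and~\ref{corollary} to assert that the Lipschitz constraint binds at the optimum with the correct sign (a slight citation-level hand-wave, since those theorems are stated for the penalized objective $\dloss + \lambda k(f)^2$, not the fixed-$k$ constrained problem), whereas you re-derive it from scratch via a McShane-type extension plus the observation that $\phi' > 0$, $\varphi' < 0$ forces the upper constraint $\beta - \alpha \leq \hat{k} d$ to bind. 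Your route is arguably cleaner and more self-contained for this particular lemma, at the cost of re-proving a fact the paper has already essentially established; the paper's route shares machinery with the preceding Lemma~\ref{lemma_solvable}, which also invokes those theorems to set up the same one-dimensional reduction. Both are sound, and your final envelope-theorem remark $D'(\hat{k}) = d\,\varphi'(\alpha^*(\hat{k}) + \hat{k} d) < 0$ is a nice compact restatement of the same observation.
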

\begin{proof}%$ $\newline \newline
	%(\rnum{1}) Assuming $\pgen$ and $\pdata$ are two delta distributions %(the basic idea)
	%
	Given $\pgen$ and $\pdata$ are two delta distributions, according to Theorem \ref{theorem} and Theorem \ref{corollary}, for $x\sim\pgen$ and $y\sim\pdata$, $\ff(y)-\ff(x)=k \cdot d(x,y)$. Let $\ff(x) = \alpha$ and $\beta=k \cdot d(x,y) > 0$, then $\ff(y)=\alpha+\beta$. Define $\dloss(\beta) = \min_{f \in \mathcal{F}_\text{k-Lip}} \, \mE_{x \sim \pgen} [ \phi(f(x)) ] + \mE_{x \sim \pdata} [ \varphi(f(x)) ]=\min_\alpha \phi(\alpha) + \varphi(\alpha+\beta)$. We need to prove $\dloss(\beta)$ is monotonically decreasing, for $\beta \geq 0$.
	
	Let $0\leq \beta_1 < \beta_2$, let $\alpha_1=\min_\alpha \phi(\alpha) + \varphi(\alpha+\beta_1)$ and $\alpha_2=\min_\alpha \phi(\alpha) + \varphi(\alpha+\beta_2)$. %We have $\pdv{\phi(x)}{x}\big\vert_{x=\alpha_1} + \pdv{\varphi(x)}{x} \big\vert _{x=\alpha_1+\beta_1} = 0$ and $\pdv{\phi(x)}{x}\big\vert_{x=\alpha_2} + \pdv{\varphi(x)}{x} \big\vert _{x=\alpha_2+\beta_2} = 0$. 
	Given $\varphi'(x)<0$ and $\beta_1 < \beta_2$, we have $\phi(\alpha_1) + \varphi(\alpha_1+\beta_1) > \phi(\alpha_1) + \varphi(\alpha_1+\beta_2)$. Given  $\alpha_2=\min_\alpha \phi(\alpha) + \varphi(\alpha+\beta_2)$, %and $\pdvv{\phi(\alpha) + \varphi(\alpha+\beta_2)}{\alpha}\geq 0$, 
	we further have $\phi(\alpha_1) + \varphi(\alpha_1+\beta_1) > \phi(\alpha_1) + \varphi(\alpha_1+\beta_2) \geq \phi(\alpha_2) + \varphi(\alpha_2+\beta_2)$. Done.
	
	\textbf{Additionally}, with $\phi'(\alpha_1) + \varphi'(\alpha_1+\beta_1) = 0$ and $\varphi''(x)\geq0$, we have $\phi'(\alpha_1) + \varphi'(\alpha_1+\beta_2) \geq 0$. Providing $\phi'(\alpha_2) + \varphi'(\alpha_2+\beta_2) = 0$, $\phi''(x)\geq0$ and $\varphi''(x)\geq0$, we get $\alpha_2\leq\alpha_1$. That is, $\alpha_2\leq\alpha_1<\beta_1<\beta_2$. When $\phi''(x) > 0$ and $\varphi''(x) > 0$, we have $\alpha_2<\alpha_1<\beta_1<\beta_2$.
%
	%\vspace{10pt}
	%(\rnum{2}) The general case.
\end{proof}	

\vspace{5pt}
\begin{lemma} \label{finite}
If the support of $P_g$ and $P_r$ is bounded, i.e., $\exists R$ such that $\lVert x \rVert <R, \forall x \in \bar{P_g} \cup \bar{P_r}$.  Assume $\phi(x)$ and $\varphi(x)$ satisfy Eq.~\ref{eq_solvable} and further have $\phi''(x) > 0 \text{ or } \varphi''(x) > 0$. $\forall N$, $\exists M$, if $\mE_{x\sim\pgen} [\phi(f(x))]+\mE_{x\sim\pdata}[\varphi(f(x))] + \lambda\cdot k(f)^2 < N$, then $|f(x)| < M,\,\, \forall x \in \bar{P_g} \cup \bar{P_r}$. 
\end{lemma}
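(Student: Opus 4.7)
The plan is to convert the hypothesis into an a priori upper bound on the Lipschitz constant $k(f)$ (using the lower bound already established in Theorem~\ref{lower_bound}), then exploit bounded oscillation of $k(f)$-Lipschitz functions on the bounded support, and finally invoke a one-variable coercivity argument based on the strict convexity assumption to bound $|f|$ itself.

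First, I would reuse the first-order convexity bound from the proof of Theorem~\ref{lower_bound} to obtain
\begin{equation*}
\mE_{x\sim\pgen}[\phi(f(x))] + \mE_{x\sim\pdata}[\varphi(f(x))] \;\geq\; -\phi'(a)\,W_1(P_r,P_g)\, k(f) + \phi(a)+\varphi(a),
\end{equation*}
where $W_1(P_r,P_g) \leq 2R < \infty$ because both supports are bounded. Substituting this into $\mE[\phi(f)]+\mE[\varphi(f)]+\lambda k(f)^2 < N$ gives the quadratic inequality $\lambda k(f)^2 - \phi'(a)W_1 k(f) + (\phi(a)+\varphi(a)) - N < 0$, which confines $k(f)$ to an explicit bounded interval. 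In particular, $k(f)\leq K$ for some $K=K(N)$ depending only on $N$, $\lambda$, $R$, and the fixed pair $(\phi,\varphi)$.

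Next, since $\bar{P_g}\cup\bar{P_r}$ has diameter at most $2R$, the $k(f)$-Lipschitz property implies $f(y)-f(x)\leq 2RK$ for all $x,y$ in this set. Let $m:=\inf_{x\in\bar{P_g}\cup\bar{P_r}} f(x)$ and set $c':=2RK$; then $m\leq f(x)\leq m+c'$ on the supports. Using that $\phi$ is increasing and $\varphi$ is decreasing,
\begin{equation*}
\mE_{\pgen}[\phi(f)] + \mE_{\pdata}[\varphi(f)] \;\geq\; \phi(m) + \varphi(m+c') \;=:\; H(m),
\end{equation*}
so dropping the non-negative $\lambda k(f)^2$ term from the hypothesis yields $H(m)\leq N$. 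It therefore suffices to prove that $H:\mathbb{R}\to\mathbb{R}$ is coercive, because then $|m|\leq M_0$ for some finite $M_0=M_0(N)$ and hence $|f(x)|\leq M_0+c'=:M$ on the support.

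The heart of the proof, and the main obstacle, is coercivity of $H(m)=\phi(m)+\varphi(m+c')$: this is not immediate because $\phi$ and $\varphi$ are evaluated at \emph{different} points $m$ and $m+c'$, so the cancellation $\phi'(a)+\varphi'(a)=0$ from Eq.~\eqref{eq_solvable} is not directly available. Assuming without loss of generality that $\phi''>0$ everywhere, I would argue as follows. For $m\geq a+1$, strict convexity of $\phi$ gives $\phi'(m)\geq \phi'(a+1)>\phi'(a)$, while $\varphi''\geq 0$ gives $\varphi'(m+c')\geq \varphi'(a)=-\phi'(a)$; hence $H'(m)\geq \phi'(a+1)-\phi'(a)>0$ and $H(m)\to\infty$ at least linearly as $m\to\infty$. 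Symmetrically, for $m\leq a-1-c'$, $\phi'(m)\leq \phi'(a-1-c')<\phi'(a)$ and $\varphi'(m+c')\leq \varphi'(a)=-\phi'(a)$ yield $H'(m)\leq \phi'(a-1-c')-\phi'(a)<0$, so $H(m)\to\infty$ as $m\to-\infty$. The case $\varphi''>0$ everywhere is handled by the symmetric argument with the roles of $\phi$ and $\varphi$ swapped. Coercivity then yields the uniform bound on $m$, completing the proof.
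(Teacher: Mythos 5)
Your overall strategy is sound and takes a genuinely different, more modular route than the paper's proof, but one step needs repair. You first extract a uniform bound on $k(f)$ from the hypothesis (reusing the affine lower bound of Theorem~\ref{lower_bound} at the stationary point $a$), then convert bounded Lipschitz constant plus bounded support into a bounded oscillation, and finally reduce to a one-dimensional coercivity statement for $H(m)=\phi(m)+\varphi(m+c')$. The paper instead works with a tangent line at a point $b$ where $\phi'(b)+\varphi'(b)>0$ (which exists because $\phi'+\varphi'$ is strictly increasing through zero at $a$), lower-bounds $\mE_{\pgen}[f]$ by $T - k(f)\cdot\mathrm{diam}$, minimizes the resulting quadratic in $k(f)$ in closed form, and reads off a linear-in-$T$ lower bound; the lower bound on $T$ comes symmetrically from a point $d$ with $\phi'(d)+\varphi'(d)<0$. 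Both arguments lean on the same two ingredients -- convexity of $\phi,\varphi$ plus the Lipschitz/bounded-support geometry -- but your two-stage decomposition ($k(f)$ first, then $|f|$) is arguably cleaner, while the paper's single optimized estimate avoids having to name and carry the intermediate constant $K(N)$.

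The gap is in your \emph{``without loss of generality $\phi''>0$ everywhere''}. The hypothesis $\phi''(x)>0$ or $\varphi''(x)>0$ is, in the paper's usage, a \emph{pointwise} disjunction (equivalently $\phi''(x)+\varphi''(x)>0$ for all $x$), as witnessed by the paper's own derivation of ``$\exists b,\ \phi'(b)+\varphi'(b)>0$''. Under that reading neither function need be strictly convex on all of $\mathbb R$, so the WLOG reduction is not available and your estimate $\phi'(m)\geq\phi'(a+1)>\phi'(a)$ may fail. The fix preserves your structure: observe that $\varphi'$ nondecreasing and $c'\geq 0$ give
$
H'(m)=\phi'(m)+\varphi'(m+c')\ \geq\ \phi'(m)+\varphi'(m)
$
for all $m$, and
$
H'(m)\ \leq\ \phi'(m+c')+\varphi'(m+c')
$
likewise; since $\phi'+\varphi'$ is strictly increasing with a zero at $a$, these yield $H'(m)\geq(\phi'+\varphi')(a+1)>0$ for $m\geq a+1$ and $H'(m)\leq(\phi'+\varphi')(a-1)<0$ for $m\leq a-1-c'$, which is exactly the slope bounded away from zero that your coercivity argument requires. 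With that substitution the proof goes through.
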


\begin{proof}$ $\newline \newline
$\exists a, \phi'(a)+\varphi'(a)=0$ and $\phi''(x) + \varphi''(x)>0$ implies $\exists b, \phi'(b)+\varphi'(b)>0$. Then:
\begin{align}
&\quad\,\,\mE_{x\sim\pgen}\nonumber [\phi(f(x))]+\mE_{x\sim\pdata}[\varphi(f(x))] + \lambda\cdot k(f)^2  \\\nonumber
&\geq \mE_{x\sim\pgen} [\phi'(b)(f(x)-b)+\phi(b)]+\mE_{x\sim\pdata}[\varphi'(b)(f(x)-b)+\varphi(b)] + \lambda\cdot k(f)^2 \\\nonumber
&= \mE_{x\sim\pgen} [\phi'(b)f(x)]+\mE_{x\sim\pdata}[\varphi'(b)f(x)] + \lambda\cdot k(f)^2 + c \\\nonumber
&= [\phi'(b)+\varphi'(b)]\mE_{x\sim\pgen} [f(x)]+\varphi'(b) [\mE_{x\sim\pdata}[f(x)] - \mE_{x\sim\pgen}[f(x)]] + \lambda\cdot k(f)^2 + c \\\nonumber
&\geq [\phi'(b)+\varphi'(b)]\mE_{x\sim\pgen} [f(x)]+\varphi'(b) [W_1(P_r, P_g)\cdot k(f)] + \lambda\cdot k(f)^2 + c %\\\nonumber
%&\geq [\phi'(b)+\varphi'(b)]\mE_{x\sim\pgen} [f(x)] - \frac{[\varphi'(b) W_1(P_r, P_g)]^2}{4\lambda} + c 
\end{align}
$\forall x \in \bar{P_g} \cup \bar{P_r}$, if $f(x)=T$, then: $\forall x\in \bar{P_g} \cup \bar{P_r}$, $f(x) \geq T-k(f)\cdot R$.
\begin{align}
\nonumber&\,\,\mE_{x\sim\pgen}\nonumber [\phi(f(x))]+\mE_{x\sim\pdata}[\varphi(f(x))] + \lambda\cdot k(f)^2 \\
\nonumber&\geq [\phi'(b)+\varphi'(b)](T-k(f)\cdot R) +\varphi'(b) [W_1(P_r, P_g)\cdot k(f)] + \lambda\cdot k(f)^2 + c \\ 
\nonumber&\geq [\phi'(b)+\varphi'(b)]T - \frac{[\varphi'(b) W_1(P_r, P_g)-[\phi'(b)+\varphi'(b)]R]^2}{4\lambda} + c 
\end{align}
Given $\mE_{x\sim\pgen}\nonumber [\phi(f(x))]+\mE_{x\sim\pdata}[\varphi(f(x))] + \lambda\cdot k(f)^2 < N$, we have: 
\begin{align}
\nonumber& [\phi'(b)+\varphi'(b)]T - \frac{[\varphi'(b) W_1(P_r, P_g)-[\phi'(b)+\varphi'(b)]R]^2}{4\lambda} + c  < N  \\
\nonumber&\!\Rightarrow T < (N -c + \frac{[\varphi'(b) W_1(P_r, P_g)-[\phi'(b)+\varphi'(b)]R]^2}{4\lambda}) / [\phi'(b)+\varphi'(b)]
\end{align}

Similarly, $\exists a, \phi'(a)+\varphi'(a)=0$ and $\phi''(x) + \varphi''(x)>0$ implies $\exists d, \phi'(d)+\varphi'(d)<0$. And then it implies $T$ is greater than some constant. So, $\exists M$ such that $|f(x)| < M,\,\, \forall x \in \bar{P_g} \cup \bar{P_r}$. 
\end{proof}

%Lemma \ref{finite} tells that when $f$ approaches the optimal (Theorem \ref{lower_bound}), $f(x), \forall x$ is bounded. 

% \fi 

\newpage
\section{Proof on the dual form of Wasserstein distance} \label{app_dual_form}

We here provide a formal proof for our new dual form of Wasserstein distance. The Wasserstein distance is given as follows:
\begin{equation}\label{eq_w_primal_app}
W_1(P_r,P_g) =  \inf_{\pi \in \Pi(P_r,P_g)} \, \E_{(x,y) \sim \pi} \, [d(x, y)],
\end{equation}
where $\Pi(P_r, P_g)$ denotes the collection of all probability measures with marginals $P_r$ and $P_g$ on the first and second factors respectively. 

The dual form of Wasserstein distance is usually written as:
\begin{equation}
\begin{aligned}
I(P_r,P_g) &= {\sup}_{f} \,\, \E_{x \sim P_r} \, [f(x)] - \E_{x \sim P_g} \, [f(x)],  \, \\
&\emph{s.t.} \, f(x) - f(y) \leq d(x, y), \,\, \forall x, \forall y.
\end{aligned}
\label{eq_w_dual_form_app1}
\end{equation}
We will prove that Wasserstein distance in its dual form can also be written as:
\begin{equation}
\begin{aligned}
J(P_r,P_g) &= {\sup}_{f} \,\, \E_{x \sim P_r} \, [f(x)] - \E_{x \sim P_g} \, [f(x)],  \, \\
&\emph{s.t.} \, f(x) - f(y) \leq d(x, y), \,\, \forall x \sim P_r, \forall y \sim P_g,
\end{aligned}
\label{eq_w_dual_form_app2}
\end{equation}
which means the constraint in the dual form of Wasserstein distance can be looser than the common formulation. 

With the compacted formulation, we argued that a well-defined metric, i.e., $J(P_r,P_g)$, which is equivalent to Wasserstein distance $W_1(P_r,P_g)$ and thus can properly measure the distance between two distributions, may also fail to provide a meaningful $\nabla_{\!x} \ff(x)$. This observation indicates that a well-defined distance metric does not necessarily guarantee a meaningful $\nabla_{\!x} \ff(x)$ and thus does not guarantee the convergence of $\nabla_{\!x} \ff(x)$-based GANs.

\subsection{Proving the equivalence}

\begin{theorem}
Given $I(P_r,P_g)=W_1(P_r,P_g)$\footnote{The equivalence of $W_1(P_r,P_g)$ and $I(P_r,P_g)$ is well-known \citep{oldandnew,zemel2012optimal}.}, we have $I(P_r,P_g)=J(P_r,P_g)=W_1(P_r,P_g)$
\end{theorem}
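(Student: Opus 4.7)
The plan is to prove the two inequalities $I(P_r,P_g)\leq J(P_r,P_g)$ and $J(P_r,P_g)\leq W_1(P_r,P_g)$, and then conclude using the assumed equality $I(P_r,P_g)=W_1(P_r,P_g)$.

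The first inequality is immediate: any $f$ satisfying the global $1$-Lipschitz constraint $f(x)-f(y)\leq d(x,y)$ for all $x,y$ in particular satisfies the restricted constraint $f(x)-f(y)\leq d(x,y)$ for $x\sim P_r,\,y\sim P_g$, so the feasible set of $I$ is contained in the feasible set of $J$ and hence $I\leq J$. No work is needed here.

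The main step is the reverse inequality $J\leq W_1$. Given any $f$ feasible for $J$, I would construct a globally $1$-Lipschitz function $f^\star$ whose objective value dominates that of $f$, using the inf-convolution (Moreau/$c$-transform) trick: define
\begin{equation*}
f^\star(z)\;\triangleq\;\inf_{y\in\bar{P_g}}\bigl[\,f(y)+d(z,y)\,\bigr].
\end{equation*}
The key properties, in order, are (i) $f^\star$ is $1$-Lipschitz on the whole space, which follows from the triangle inequality applied pointwise to the infimum, since $f(y)+d(z_1,y)\leq f(y)+d(z_2,y)+d(z_1,z_2)$ for every $y$; (ii) $f^\star(y_0)\leq f(y_0)$ for every $y_0\in\bar{P_g}$, simply by plugging $y=y_0$ into the infimum; (iii) $f(x)\leq f^\star(x)$ for every $x\in\bar{P_r}$, which is exactly where the $J$-feasibility of $f$ is used, since $f(x)-f(y)\leq d(x,y)$ for all $y\in\bar{P_g}$ rearranges to $f(x)\leq f(y)+d(x,y)$ and taking the infimum over $y$ gives $f(x)\leq f^\star(x)$.

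Combining (ii) and (iii) yields
\begin{equation*}
\E_{x\sim P_r}[f(x)]-\E_{y\sim P_g}[f(y)]\;\leq\;\E_{x\sim P_r}[f^\star(x)]-\E_{y\sim P_g}[f^\star(y)],
\end{equation*}
and by (i) the right-hand side is at most $I(P_r,P_g)$. Taking the supremum over $f$ on the left gives $J(P_r,P_g)\leq I(P_r,P_g)=W_1(P_r,P_g)$. Chaining with the first inequality yields $I=J=W_1$.

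The only mildly delicate point is verifying that $f^\star$ is well-defined (i.e.\ the infimum is not $-\infty$), which requires $f$ to be bounded from below on $\bar{P_g}$; this can be assumed without loss of generality because adding a constant to $f$ does not change either the constraint or the objective, and on the optimal-value side one can restrict to such $f$. Beyond that, the argument is essentially a direct application of $c$-conjugation from Kantorovich duality, so I do not expect any substantive obstacle.
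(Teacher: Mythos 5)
Your argument is correct, but it takes a different path for the key inequality than the paper does. The paper proves $J\leq W_1$ by a weak-duality computation: for any $J$-feasible $f$ and any coupling $\pi\in\Pi(P_r,P_g)$, the constraint $f(x)-f(y)\leq d(x,y)$ holds $\pi$-almost surely (since $\pi$-a.e.\ $(x,y)$ has $x\in\bar{P_r}$, $y\in\bar{P_g}$), so $\E_{P_r}[f]-\E_{P_g}[f]=\E_\pi[f(x)-f(y)]\leq\E_\pi[d(x,y)]$, and taking $\sup_f$ then $\inf_\pi$ gives $J\leq W_1$. You instead stay entirely on the dual side: from any $J$-feasible $f$ you build, by inf-convolution against $\bar{P_g}$, a globally $1$-Lipschitz $f^\star$ with $f^\star\geq f$ on $\bar{P_r}$ and $f^\star\leq f$ on $\bar{P_g}$, hence no worse objective, giving $J\leq I$ directly. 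Both are sound. The paper's route is slightly more elementary and never leaves the primal/dual pairing; yours is the classical $c$-transform extension from Kantorovich--Rubinstein theory, and it has the nice feature of proving $I=J$ without invoking the primal $W_1$ at all (the hypothesis $I=W_1$ is then only used at the end to attach $W_1$). One small correction to your delicate point: no normalization of $f$ is needed to make $f^\star$ finite. $J$-feasibility already forces it: fix any $x_0\in\bar{P_r}$, then $f(y)\geq f(x_0)-d(x_0,y)$ for all $y\in\bar{P_g}$, so $f(y)+d(z,y)\geq f(x_0)-d(x_0,y)+d(z,y)\geq f(x_0)-d(x_0,z)$ by the triangle inequality, and the infimum defining $f^\star(z)$ is bounded below. (Adding a constant, as you suggested, would not by itself turn an unbounded-below function into a bounded one; the point is that the feasible set contains no such function.)
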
 
\vspace{-8pt}
\begin{proof} $ $\newline$ $\newline
    (i) For any $f$ that satisfies ``$f(x) - f(y) \leq d(x, y), \,\, \forall x, \forall y$'', it must satisfy ``$f(x) - f(y) \leq d(x, y), \,\, \forall x \sim P_r, \forall y \sim P_g$''. Thus, $I(P_r,P_g)\leq J(P_r,P_g)$.  
    
    \vspace{3pt}
    (ii) Let $F_J=\{f| \, f(x) - f(y) \leq d(x, y), \,\, \forall x \sim P_r, \forall y \sim P_g \}$.
    
    \hspace{13pt} Let $A=\{(x,y) | x \sim P_r, y \sim P_g \}$ and $I_A=\begin{cases}
    1, (x,y) \in A; \\
    0, otherwise
    \end{cases}$.
    
    \hspace{13pt} Let $A^c$ denote the complementary set of $A$ and define $I_{A^c}$ accordingly.
        
    \vspace{4pt}
    \hspace{13pt} $\forall \pi \in \Pi(P_r, P_g)$， We have the following:
    \vspace{2pt}
    \begin{align}
    J(P_r,P_g) &= \nonumber {\sup}_{f\in F_J} \,\, \E_{x \sim P_r} \, [f(x)] - \E_{x \sim P_g} \, [f(x)]  \\ \nonumber
    & = {\sup}_{f\in F_J} \,\, \E_{(x,y) \sim \pi} [f(x)-f(y)]  \\ \nonumber
    & = {\sup}_{f\in F_J} \,\, \E_{(x,y) \sim \pi} [(f(x)-f(y)) I_A] + \E_{(x,y) \sim \pi} [(f(x)-f(y)) I_{A^c}] \\\nonumber
    & = {\sup}_{f\in F_J} \,\, \E_{(x,y) \sim \pi} [(f(x)-f(y)) I_A] \\\nonumber
    & \leq \E_{(x,y) \sim \pi} [d(x,y) I_A]  \\\nonumber
    & \leq \E_{(x,y) \sim \pi} [d(x,y)].
    \end{align}
    
    \hspace{15pt}$J(P_r,P_g)\leq \E_{(x,y) \sim \pi} [d(x,y)], \forall \pi \in \Pi(P_r, P_g)$ 
    
    \hspace{15pt}$\Rightarrow J(P_r,P_g)\leq \inf_{\pi \in \Pi(P_r,P_g)} \, \E_{(x,y) \sim \pi} \, [d(x, y)] = W_1(P_r,P_g)$.
    
    \vspace{3pt}
    (iii) Combining (i) and (ii), we have $I(P_r,P_g)\leq J(P_r,P_g)\leq W_1(P_r,P_g)$. Given $I(P_r,P_g)=W_1(P_r,P_g)$, we have $I(P_r,P_g)=J(P_r,P_g)=W_1(P_r,P_g)$.        
\end{proof}

\vspace{10pt}
\subsection{Proving $J(P_r,P_g)=W_1(P_r,P_g)$ Directly}

Let $\bar{P_r}$ and $\bar{P_g}$ denote the supports of $P_r$ and $P_g$, respectively. Because $\forall \pi \in \Pi(P_r,P_g)$, it must hold that $\pi(x,y)=0$ for any $(x,y)$ that outside $\bar{P_r}\times\bar{P_g}$, i.e., $P_r(x)=0$ or $P_g(y)=0$ implies $\pi(x,y)=0$. 
We let $\Gamma(\bar{P_r}\times\bar{P_g})$ denote the collection of all probability measures that defined on $\bar{P_r}\times\bar{P_g}$ with marginals $P_r$ and $P_g$ on the first and second factors respectively. Let $M_+(\bar{P_r}\times\bar{P_g})$ be the collection of all non-negative measures (not necessarily probability measures) on $\bar{P_r}\times\bar{P_g}$.

Before the proof, we would like to give several preliminary notes.
\begin{itemize}
    \item A more formal and detailed proof for Kantorovich duality \textbf{with same logic of justification} can be found in Theorem 2.3 of \citep{zemel2012optimal}. And a more relevant version that focused on Wasserstein distance can be found in this \href{https://vincentherrmann.github.io/blog/wasserstein/}{ blog}\footnote{\url{https://vincentherrmann.github.io/blog/wasserstein/}}.
    \item The key change here is that we handle the support of $P_r$ and $P_g$ more carefully, which results in elimination of the unnecessary constraints $f(x)-f(y) \leq d(x, y)$ that involve $(x,y)$ pair where $P_r(x)=0$ or $P_g(x)=0$.
    \item \textbf{The validity of the use of the minimax-principle}, i.e., invert the order of $inf$ and $sup$, in this case, is proved in \citep{zemel2012optimal} and the blog.
\end{itemize} 

\vspace{10pt}
\begin{theorem}
$J(P_r,P_g)=W_1(P_r,P_g)$
\end{theorem}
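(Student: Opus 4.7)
The plan is to reproduce the Kantorovich--Rubinstein derivation of the $W_1$ dual, but with every occurrence of the ambient space replaced by the supports $\bar{P_r}$ and $\bar{P_g}$. Because every transport plan in $\Pi(P_r,P_g)$ is concentrated on $\bar{P_r}\times\bar{P_g}$, the primal is unchanged if we infimize over $\Gamma(\bar{P_r}\times\bar{P_g})$ instead of the full product space, and all test functions introduced for dualization need only be defined on these supports. The minimax exchange and the regularity hypotheses needed to justify it are imported verbatim from Theorem~2.3 of \citep{zemel2012optimal} and the referenced blog; the novelty is purely bookkeeping over the correct domain, which is exactly what produces the looser constraint defining $J$.

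\textbf{Main steps, in order.} First, rewrite the primal as an unconstrained problem on $M_+(\bar{P_r}\times\bar{P_g})$ by adjoining an indicator $\chi(\pi)$ that is $0$ when $\pi$ has marginals $P_r,P_g$ and $+\infty$ otherwise. Second, express this indicator as $\chi(\pi)=\sup_{\phi,\psi}\bigl[\int \phi\,dP_r + \int \psi\,dP_g - \int (\phi(x)+\psi(y))\,d\pi(x,y)\bigr]$, where $\phi \in C_b(\bar{P_r})$ and $\psi \in C_b(\bar{P_g})$. Third, invoke the minimax principle (justified as in the cited references) to swap $\inf_\pi$ and $\sup_{\phi,\psi}$. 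Fourth, carry out the inner infimum over $\pi \in M_+(\bar{P_r}\times\bar{P_g})$: it equals $\int \phi\,dP_r + \int \psi\,dP_g$ when $\phi(x)+\psi(y) \leq d(x,y)$ for all $(x,y)\in\bar{P_r}\times\bar{P_g}$, and $-\infty$ otherwise. Fifth, apply the $c$-transform on $\bar{P_r}\times\bar{P_g}$: replace $\psi$ by $\psi^{c}(y)\,{:=}\,\inf_{x\in\bar{P_r}}[d(x,y)-\phi(x)]$, and for the metric cost $d$ identify the optimum with $\psi=-\phi$ restricted to $\bar{P_g}$, so that setting $f=\phi$ yields exactly the feasibility condition $f(x)-f(y)\leq d(x,y)$ for $x\in\bar{P_r},\,y\in\bar{P_g}$, which is the constraint appearing in $J(P_r,P_g)$.

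\textbf{Main obstacle.} The step requiring most care is the $c$-transform reduction. In the classical proof one obtains $\psi^c=-\phi$ globally provided $\phi$ is $1$-Lipschitz on the whole space, and it is precisely this global Lipschitz requirement that produces the stronger constraint appearing in $I(P_r,P_g)$. Here we only need $f$ to satisfy the Lipschitz-type inequality across $\bar{P_r}$ and $\bar{P_g}$; the objective $\int f\,dP_r-\int f\,dP_g$ is insensitive to values of $f$ outside $\bar{P_r}\cup\bar{P_g}$, so any $f$ satisfying the restricted inequality can be extended (e.g.\ by the McShane/Kirszbraun construction $\tilde f(x)=\inf_{x'\in\bar{P_r}\cup\bar{P_g}}[f(x')+d(x,x')]$) to a globally $1$-Lipschitz function without changing the objective value. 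Assembling these pieces yields $J(P_r,P_g)=W_1(P_r,P_g)$ directly, bypassing the sandwich argument used in the previous subsection.
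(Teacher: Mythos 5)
Your main argument (steps one through five) is correct, but it takes a genuinely different route from the paper's direct proof. The paper introduces a \emph{single} test function $f$ into the Lagrangian for the marginal constraints, i.e.\ it adjoins $\sup_f\big[\E_{P_r}[f]-\E_{P_g}[f]-\int(f(x)-f(y))\,d\pi\big]$, so the dual variable already has the form $\psi=-\phi=-f$ built in and the looser constraint on $\bar{P_r}\times\bar{P_g}$ falls out immediately. You instead reproduce the standard Kantorovich machinery with \emph{two} potentials $(\phi,\psi)$ restricted to the supports, and only afterwards pass to a single potential via the $c$-transform. Both are valid; yours is the more classical route and makes the minimax step easier to justify by citation, while the paper's single-function Lagrangian is a shortcut that sidesteps the $c$-transform reduction entirely. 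One caveat on your reduction: from $\phi(x)+\psi(y)\leq d(x,y)$ on $\bar{P_r}\times\bar{P_g}$ it does not automatically follow that the optimum has $\psi=-\phi$; after replacing $\psi$ by $\phi^c$ you also need to replace $\phi$ by $(\phi^c)^c$, and then check $\phi=-\psi$ holds \emph{on the overlap} $\bar{P_r}\cap\bar{P_g}$ (using $\phi(x)+\psi(x)\leq 0$ together with $\psi(x)=\inf_{x'\in\bar{P_r}}[d(x',x)-\phi(x')]\leq-\phi(x)$) before you can glue them into a single $f$. In the disjoint case this step is vacuous; in general it is short but should be stated.

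Separately, your remark that a $J$-feasible $f$ can be extended by the McShane formula $\tilde f(x)=\inf_{x'\in\bar{P_r}\cup\bar{P_g}}[f(x')+d(x,x')]$ to a globally $1$-Lipschitz function ``without changing the objective value'' is not quite right as stated. The cross-constraint $f(x)-f(y)\leq d(x,y)$ for $x\in\bar{P_r},y\in\bar{P_g}$ does \emph{not} make $f$ $1$-Lipschitz on $\bar{P_r}\cup\bar{P_r}$ (it says nothing about $f(x)-f(x')$ for $x,x'\in\bar{P_r}$), so $\tilde f$ need not coincide with $f$ on the supports and the objective can change. The correct way to relate $J$ to $I$ is the one-sided $c$-transform improvement, which is monotone in the objective rather than value-preserving — or simply the sandwich $I\leq J\leq W_1$ the paper gives in the preceding subsection. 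This does not affect your main steps one through five, which prove $J=W_1$ directly.
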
 

%\vspace{10pt}
\begin{proof}
\begin{align}
 &\inf_{\pi \in \Pi(P_r,P_g)} \, \E_{(x,y) \sim \pi} \, [d(x, y)] = \nonumber \inf_{\pi \in \Gamma(\bar{P_r}\times\bar{P_g})} \, \E_{(x,y) \sim \pi} \, [d(x, y)] \\\nonumber
=&\inf_{\pi \in M_+(\bar{P_r}\times\bar{P_g})} \, \Big[ \int_{\bar{P_r}}\int_{\bar{P_g}} d(x, y) \pi(x,y) dx dy + 
\begin{cases} 
0,      \,\,\,\,\,\,\,\,\,\,  \pi \in \Gamma(\bar{P_r}\times\bar{P_g}) \\\nonumber
\infty, \,\,\,\,\,\,\, otherwise 
\end{cases} \!\!\!\! \Big]\\\nonumber
=&\inf_{\pi \in M_+(\bar{P_r}\times\bar{P_g})} \Big[ \int_{\bar{P_r}}\int_{\bar{P_g}} d(x, y) \pi(x,y) dx dy \\\nonumber
& \qquad \qquad \qquad  + \sup_f \big[ \E_{s\sim P_r} [f(s)] - \E_{t\sim P_g} [f(t)] - \int_{\bar{P_r}}\int_{\bar{P_g}} (f(x) - f(y)) \pi(x,y) dx dy \big]\Big]\\\nonumber
=&\inf_{\pi \in M_+(\bar{P_r}\times\bar{P_g})} \sup_f \Big[ \E_{s\sim P_r} [f(s)] - \E_{t\sim P_g} [f(t)] + \int_{\bar{P_r}}\int_{\bar{P_g}} [d(x, y)-(f(x)-f(y))]\pi(x,y)dx dy \Big]\\\nonumber
=&\sup_f \inf_{\pi \in M_+(\bar{P_r}\times\bar{P_g})} \Big[ \E_{s\sim P_r} [f(s)] - \E_{t\sim P_g} [f(t)] +  \int_{\bar{P_r}}\int_{\bar{P_g}} [d(x, y)-(f(x)-f(y))]\pi(x,y)dx dy \Big]\\\nonumber
%=&\sup_f \inf_{\pi \in M_+(\bar{P_r}\times\bar{P_g})} \Big[ \E_{(x,y) \sim \pi} \, [d(x, y)] +  \E_{s\sim P_r} [f(s)] - \E_{t\sim P_g} [f(t)] - \E_{(x,y) \sim \pi} [f(x) - f(y)] \Big]\\\nonumber
=&\sup_f \Big[ \E_{s\sim P_r} [f(s)] - \E_{t\sim P_g} [f(t)] + \inf_{\pi \in M_+(\bar{P_r}\times\bar{P_g})}  \int_{\bar{P_r}}\int_{\bar{P_g}} [d(x, y)-(f(x)-f(y))]\pi(x,y)dx dy \Big]\\\nonumber
=&\sup_f \Big[ \E_{s\sim P_r} [f(s)] - \E_{t\sim P_g} [f(t)] + 
\begin{cases} 
0,       \,\,\, f(x) - f(y) \leq d(x, y), \,\, \forall x \sim P_r, \forall y \sim P_g \\\nonumber
-\infty, \,\, otherwise 
\end{cases} \!\!\!\! \Big] \\\nonumber
=&\sup_f \E_{s\sim P_r} [f(s)] - \E_{t\sim P_g} [f(t)], \,\, \emph{s.t.} \, f(x) - f(y) \leq d(x, y), \, \forall x \sim P_r, \forall y \sim P_g.  \qedhere
\end{align}
\end{proof}

%This proof could provide some insight on why only the constraints that involve the support of $P_g$ and $P_r$ are necessary. 
% \vspace{10pt}
\subsection{Another perspective: eliminating redundant constraint}

To provide a more comprehensive understanding on why constraint $f(x)-f(y)\leq d(x,y)$ that involves point $(x,y)$ where $P_r(x)=0$ or $P_g(x)=0$ is unnecessary, we here give a intuitive explanation in the following way: one can safely remove all constraints that does not involve any point in the support of $P_g$ and $P_r$, because give $f(x)-f(y)\leq d(x,y), \forall x \sim P_r, \forall y \sim P_g$, it is sufficient to bound the value of $f(x)$ for $x\sim P_r$ and $f(y)$ for $y \sim P_g$. 
In other words, given the existence of constraint $f(x)-f(y)\leq d(x,y), \forall x\sim P_r, \forall y \sim P_g$, the other constraints can be safely  eliminated without affecting the final solution.

\section{Connections with Optimal Transport} \label{sec_optimal_transport}
The $1^{st}$-Wasserstein distance, also named as the Earth Mover's distance, is a special form of optimal transport \citep{oldandnew}, which measures the minimal cost of moving the source distribution to the target distribution, and the optimal coupling $\pi (x, y)$ describes the transport plan, \emph{i.e.} how much density we should move from $x$ to $y$. Naturally, updating the generator according to the optimal coupling would pull $\pgen$ towards $\pdata$. 

However, updating the generator according to the optimal coupling $\pi (x, y)$ is a totally different mechanism for training GANs. In typical GANs, we update the generator following $\nabla_{\!x} \ff(x)$. An interesting fact\footnote{Assuming $\ff(x)$ is differentiable.} is that: with Wasserstein GAN objective, when updating the generator according to $\nabla_{\!x} \ff(x)$, it follows the optimal coupling $\pi$, if (and only if) there is Lipschitz-continuity condition and the $d(x, y)$ represents the Euclidean distance.

In general optimal transport, $d(x,y)$ is not required to be a distance and can be any cost function. %\yuxuan{how about change d(x,y) to c(x,y), cost the d in our paper usually refer to a distance. However there is no complete formulation of optimal transport in our paper.} \zhiming{not sure which is better.} 
To the best knowledge of the authors, %As far as the authors known, 
it is hard to access the coupling information from $\nabla_{\!x} \ff(x)$ if $d(x,y)$ is arbitrary. However, fortunately, %to update the generator according to the coupling $\pi$, we does not necessarily use $\nabla_{\!x} \ff(x)$. 
given the optimal coupling $\pi$, directly updating each sample towards its target is also possible. An instance of this line of work can be found in \citep{swgan}, where the objective \citep{largescaleotmap} of generator is 
$\mE_{x \sim \pgen} \, [\mE_{y \sim \pi(\cdot | x)} \, [{d(x, y)}]]$.

%The intuition behind actually reflects another mechanism for GAN training, instead of the traditional gradient back-propagation.

In summary, we think training GANs with optimal mapping and with Lipschitz-continuity condition are two mechanisms with different underlying principles, and Wasserstein GAN in the Lipschitz dual form with Euclidean distance is the connecting point. 
%In Lipschitz-continuity condition based GANs, the optimal $\ff(x)$ might be non-differentiable and the optimal $\ff(x)$ might be hard to achieve. In optimal transport based GANs, the optimal coupling is not guaranteed to be one-to-one mapping (which would result in blurry samples) unless it fits the Monge's condition \cite{oldandnew} and the optimal coupling is also hard to achieve \cite{largescaleotmap}. 

\section{Hyper-parameter \& Network Architecture}
We follow the network architecture proposed in \citep{wgangp} to conduct our experiments on CIFAR-10, Tiny Imagenet, Oxford 102. The details of network architecture are in Table \ref{tab:my_label}.

\begin{table}[h]
	%\vspace{-30pt}
    \centering
    \resizebox{0.58\textwidth}{!}{%
    \begin{tabular}{c|c|c|c}
    \multicolumn{4}{l}{Generator:} \\
    \hline \Tstrut 
    Operation     & Kernel & Resample & Output Dims \\[5pt]
    \hline \Tstrut
    Noise         &   N/A     &    N/A     & 128\\[5pt]
    Linear        &   N/A  &   N/A   &  128$\times$4$\times$4   \\[5pt]
    Residual block & 3$\times$3 &UP& 128$\times$8$\times$8 \\[5pt]
    Residual block & 3$\times$3 & UP & 128$\times$16$\times$16 \\[5pt]
    Residual block & 3$\times$3 & UP & 128$\times$32$\times$32 \\[5pt]
    Conv \& Tanh & 3$\times$3 & N/A & 3$\times$32$\times$32 \\[5pt]
    \hline
    \multicolumn{4}{c}{} \\
    \multicolumn{4}{l}{Critic:} \\
    \hline \Tstrut
    Operation     & Kernel & Resample & Output Dims \\[5pt]
    \hline \Tstrut
    % Add Gaussian Noise       &    N/A    &    N/A     & 32$\times$32$\times$3 & 0.0 &N(0.0, 0.1)\\[5pt]
    Residual Block & 3$\times$3$\times$2 & Down & 128$\times$16$\times$16\\[5pt]
    Residual Block & 3$\times$3$\times$2 & Down & 128$\times$8$\times$8\\[5pt]
    Residual Block &3$\times$3$\times$2 & N/A & 128$\times$8$\times$8\\[5pt]
    Residual Block &3$\times$3$\times$2 & N/A & 128$\times$8$\times$8\\[5pt]
    ReLU,mean pool & N/A & N/A & 128 \\[5pt]
    Linear & N/A & N/A & 1 \\[5pt]
    \hline 
    \multicolumn{4}{c}{} \\
    % \hline
    % \multicolumn{4}{l}{The *layer was only used for class condition experiments} \\
    \hline
    \multicolumn{4}{l}{Optimizer: Adam with beta1=0.0, beta2=0.9;} \\
    \hline
    % \multicolumn{4}{l}{We use weight normalization for each weight} \\
    % \hline
    %\multicolumn{7}{l}{} \\
    %\hline
    \multicolumn{4}{l}{For more details, please refer to our published codes.} \\
    \hline
    \end{tabular}
    }
    \caption{Hyper-parameter and Network Architectures}
    \label{tab:my_label}
% \label{hyper}
\end{table}

\end{document}